\newcommand{\banditMNL}{MNL-Bandit}
\newtheorem{theorem1}{Theorem}
\newtheorem{remark}{Remark}
\newtheorem{lemma}{Lemma}[section]
\newtheorem{corollary}{Corollary}[section]
\newtheorem{assumption}{Assumption}[section]
\newtheorem{definition}{Definition}[section]
\def\ep#1{\mathcal{#1}}
\def\Halmos{\square}
\def\mb#1{\mathbf{#1}}
\newcommand{\change}[1]{\textcolor{black}{#1}}
\newcommand{\suchthat}{\;\ifnum\currentgrouptype=16 \middle\fi|\;}
\newenvironment{proofof}[1]{{\noindent\bf Proof of {#1}  }}{\hfill~$\Halmos$\vspace{3mm}}
\begin{document}

\TITLE{\Large MNL-Bandit: A Dynamic Learning Approach to Assortment Selection} %\blfootnote{A preliminary version of the paper titled ``A Near-Optimal Exploration-Exploitation Approach for AssortmentSelection''   appeared in the proceedings of  ACM conference on Economics and Computation (EC) 2016}}
\ARTICLEAUTHORS{%
\AUTHOR{ Shipra Agrawal} \AFF{Industrial Engineering and Operations Research, Columbia University, New York, NY. sa3305@columbia.edu}
\AUTHOR{Vashist Avadhanula} \AFF{Decision Risk and Operations, Columbia Business School, New York, NY. vavadhanula18@gsb.columbia.edu}
\AUTHOR{Vineet Goyal}\AFF{Industrial Engineering and Operations Research, Columbia University, New York, NY. vg2277@columbia.edu} 
\AUTHOR{Assaf Zeevi} \AFF{Decision Risk and Operations, Columbia Business School, New York, NY. assaf@gsb.columbia.edu}
}
\RUNTITLE{MNL-Bandit: A Dynamic Learning Approach to Assortment Selection}
\RUNAUTHOR{Agrawal, Avadhanula, Goyal and Zeevi}

\maketitle

\section{Introduction}
\subsection{Overview of the problem}
Assortment optimization problems arise widely in many industries including retailing and online advertising where the seller needs to select a subset  from a universe of substitutable items with the objective of maximizing expected revenue. Choice models capture substitution effects among products by specifying the probability that a consumer selects a product from the offered set. Traditionally, assortment decisions are made at the start of the selling period based on a choice model that has been estimated from historical data; see \cite{KokFisher07} for a detailed review. %owing to delays in the production line, many retailers cannot revisit assortment decisions frequently. The optimal assortment is then determined before the start of the selling period, by optimizing over the assumed choice model; 

In this work, we focus on the dynamic version of the problem where the retailer needs to simultaneously learn consumer preferences and maximize revenue. In many business applications such as fast fashion and online retail, new products can be introduced or removed from the offered assortments in a fairly frictionless manner and the selling horizon for a particular product can be short. Therefore, the traditional approach of  first estimating the choice model and then using a static assortment based on the estimates, is not practical in such settings. %In other words, it is possible to offer different assortments, observe the resulting purchases and update the assortment. %Estimating consumer preferences using historical sales data is not appropriate in such settings. 
Rather, it is essential to experiment with different assortments to learn consumer preferences, while simultaneously attempting to maximize immediate revenues. %Of course, gathering more information on consumer choice in this manner reduces  the time remaining to exploit said information. 
Suitable balancing of this exploration-exploitation tradeoff is the focal point of this paper. %essential for maximizing expected revenues over the planning horizon.

%In retail settings, the assortment of products selected by a retailer has significant impact on demand for individual products and resulting revenue.  Choice models capture substitution effects among products by specifying the probability that a consumer selects a product given the offered set.  In many applications, historical sales data is used to estimate these parameters and then determine the optimal assortment. This approach becomes challenging in many e-commerce settings, where many products have short selling seasons and historical data may not be a reliable indicator for future choices.   

We consider a stylized dynamic optimization problem that captures some salient features of this application domain, where our goal is to develop an exploration-exploitation policy that simultaneously learns from current observations and exploits this information gain for future decisions. In particular, we consider a constrained assortment selection problem under the Multinomial logit (MNL) model with $N$ substitutable products and a ``no purchase'' option. Our goal is to offer a sequence of assortments, $S_1, \ldots, S_T$, where $T$ is the planning horizon, such that the cumulative expected revenues over said horizon is maximized, or alternatively,  minimizing the gap between the performance of a proposed policy and that of an oracle that knows instance parameters a priori, a quantity referred to as the {\it regret}. 

% where the probability that a consumer purchases product $i$ is proportional to parameter $v_i$, which is not known a priori. We can offer an assortment of products that satisfies certain constraints at each time instance. 

\medskip \noindent {\bf Related literature.} The Multinomial Logit model (MNL), owing primarily to its tractability,  is the most widely used  choice model for assortment selection problems.  (The model  was introduced independently by  \cite{Luce59} and  \cite{Plackett75}, see also \cite{train2003discrete, McFadden78,BenLerman85} for further discussion and survey of other commonly used choice models.) If the consumer preferences (MNL parameters in our setting) are known a priori, then the problem of computing the optimal assortment, which we refer to as the {\it static assortment optimization problem}, is well studied. \cite{TV04} consider the unconstrained assortment planning problem under the MNL model  and present a greedy approach to obtain the optimal assortment. Recent works of  \cite{davis2013assortment} and \cite{desir2014near} consider assortment planning problems under MNL with various constraints. Other choice models such as Nested Logit (\citealp{williams}, \citealp{DGT12}, \citealp{gallego2014constrained} and \citealp{dlevelnl}), Markov Chain (\citealp{BlanchetGG13} and \citealp{desirmarkovchain}) and more general models (\citealp{FJS12} and \citealp{gallego}) are also considered in the literature. 

Most closely related to our work are the papers of \cite{caro07}, \cite{rusme} and \cite{Saure}, where information on consumer preferences is not known and needs to be learned over the course of the selling horizon. \cite{caro07} consider the setting under which demand for products is independent of each other. \cite{rusme} and \cite{Saure} consider the problem of minimizing regret under the MNL choice model and present an ``explore first and exploit later'' approach. In particular, a selected set of assortments are explored until parameters can be estimated to a desired accuracy and then the optimal assortment corresponding to the estimated parameters is offered for the remaining selling horizon. The exploration period depends on certain a priori knowledge about instance parameters. Assuming that the optimal and next-best assortment are ``well separated,'' \cite{Saure} show an asymptotic $O(N\log{T})$ regret bound, while \cite{rusme} establish a $O(N^2\log^2{T})$ regret bound; recall $N$ is the number of products and $T$ is the time horizon. However, their algorithm relies crucially on a priori knowledge of system parameters which is not readily available in practice. As will be illustrated later, absence of this knowledge, these algorithms can perform quite poorly. In this work, we focus on approaches that simultaneously explore and exploit demand information, do not require any a priori knowledge or assumptions, and whose performance is in some sense best possible;
%, and provide regret bounds and works for all instances. T
thereby, making our approach more universal in its scope.%applicable in practice. 

Our problem is closely related to the multi-armed bandit (MAB) paradigm (cf. \citealp{Robbins1952}). A naive mapping to that setting would consider every assortment as an arm, and as such, given the combinatorial nature of the problem would lead to exponentially many arms.
%This problem of learning while minimizing regret can be formulated as a classical multi-armed bandit (MAB) problem by considering every assortment as an arm, however that would lead to exponentially many arms. 
Popular extensions of MAB for large scale problems include the linear bandit (e.g., \citealp{auer2003}, \citealp{linucb}) and generalized linear bandit (\citealp{genlinucb}) formulations. However, these do not apply directly to our problem, since the revenue corresponding to an assortment is nonlinear in problem parameters.  Other works (see \citealp{Chen}) have considered versions of MAB where one can play a subset of arms in each round and the expected reward is a function of rewards for the arms played. However, this approach assumes that the reward for each arm is generated independently of the other arms in the subset.
%, and each individual arm's reward is observable in addition to the function value. 
This is not the case typically in retail settings, and in particular, in the MNL choice model where 
%the retailer observes only purchase decisions which the 
purchase decisions depend on the assortment of products offered in a time step. In this work, we use the structural properties of the MNL model, along with techniques from MAB literature, to optimally explore and exploit in the presence of a large number of alternatives (assortments). %Although our focus is on the dynamic assortment optimization problem, this work also contributes to the MAB literature, particularly in the area of combinatorial multi-armed bandits. 

 \subsection{Contributions}
%Our main contributions are the following. 

\medskip \noindent {\bf Parameter independent online algorithm and regret bounds.} We give an efficient online algorithm that judiciously balances the exploration and exploitation trade-off intrinsic to our problem and achieves a worst-case regret bound of $O(\sqrt{NT \log {N}T})$ under a mild assumption, namely that the no-purchase is the most ``frequent'' outcome. The assumption regarding no-purchase is quite natural and a norm in online retailing for example. {To the best of our knowledge, this is the first such policy with provable regret bounds that does not require prior knowledge of instance parameters of the MNL choice model.}  
Moreover, the regret bound we present for this algorithm is non-asymptotic. 
The ``big-oh'' notation is used for brevity and only hides absolute constants. 

We also show that for ``well separated'' instances, the regret of our policy is bounded by {$O\left(\min\left(N^2 \log{NT}/\Delta,\sqrt{NT\log{NT}}\right)\right)$} where $\Delta$ is the ``separability'' parameter. \change{This is comparable to the regret bounds, $O\left(N \log{T}/\Delta\right)$ and $O\left(N^2 \log^2{T}/\Delta\right)$,  established in \cite{Saure} and \cite{rusme} respectively, even though we do not require any prior information on $\Delta$ unlike the aforementioned work}.   It is also interesting to note that the regret bounds hold true for a large class of constraints, e.g., we can handle matroid constraints such as assignment, partition and more general totally unimodular constraints (see \citealp{davis2013assortment}). Our algorithm is predicated on upper confidence bound (UCB) type logic,  originally developed to balance the aforementioned exploration-exploitation trade-off in the context of the multi-armed bandit (MAB) problem {(cf.  \citealp{Lai})}. In this paper the UCB approach, also known as optimism in the face of uncertainty, is customized to  the assortment optimization problem under the MNL model.

\medskip \noindent{\bf Lower bounds.} We establish a non-asymptotic lower bound for the online assortment optimization problem under the MNL model. In particular, we show that for the cardinality constrained problem under the MNL model, any algorithm must incur a regret of $\Omega(\sqrt{{NT}/{K}})$, where $K$ is the bound on the number of products that can be offered in an assortment. This bound is derived via a reduction %of the online problem with the MNL model 
to a parametric multi-armed bandit problem, for which such lower bounds are constructed by means of information theoretic arguments.  This result establishes that our online algorithm  is nearly optimal, the upper bound being within a factor of $\sqrt{K}$ of the lower bound. {A recent work by \cite{xichen} demonstrates a lower bound of $\Omega(\sqrt{NT})$ for the \banditMNL\;problem, thus suggesting that our algorithm's performance is optimal even with respect to its dependence on $K.$}

\medskip \noindent {\bf Computational study.} We present a computational study that highlights several salient features of our algorithm. In particular, we test the performance of our algorithm over instances with varying degrees of separability between optimal and sub-optimal solutions and observe that the performance is bounded irrespective of the ``separability parameter.'' In contrast, the approach of \cite{Saure} ``breaks down'' and results in linear regret for some values of the ``separability parameter.'' \change{We also present results of a simulated study on a real world data set, where we compare the performance of our algorithm to that of \cite{Saure}. We observe that the performance of our algorithm is sub-linear, while the performance of \cite{Saure} is linear, which further emphasizes the limitations of ``explore first and exploit later'' approaches and highlights the universal applicability of our approach.}

\vspace{2mm}
\noindent {\bf Outline}. In Section \ref{sec:problem_formulation}, we give the precise problem formulation. In Section \ref{sec:ourAlgorithm}, we present our algorithm for the \banditMNL~problem, and in Section \ref{sec:regretAnalysis}, we prove the worst-case regret bound of $\tilde{O}(\sqrt{NT})$ for our policy. \change{In Section~\ref{sec:lowerBound}, we present our non-asymptotic lower bound on regret for any algorithm for \banditMNL.  In Section \ref{sec:extn}, we 
present two extensions including improved logarithmic regret bound for ``well-separated'' instances and regret bound when the ``no purchase'' assumption is relaxed. In Section \ref{sec:computation}, we present results from our computational study.}

%first, we consider problem instances that are ``well separated'' and present an improved logarithmic regret bound. We will then consider a setting where the ``no purchase'' assumption is relaxed and present a modified algorithm that works for a more general class of MNL parameters and establish $\tilde{O}(\sqrt{BNT})$ regret bounds. 

\section{Problem formulation}\label{sec:problem_formulation}
\textbf{The basic assortment problem.} In our problem, at every time instance $t$, the seller selects  an assortment $S_t \subset \{1, \ldots, N\}$ and observes the customer purchase decision $c_t \in S_t \cup \{0\}, ${ where $\{0\}$ denotes the no-purchase alternative, which is always available for the consumer.} As noted earlier, we assume consumer preferences are modeled using a multinomial logit (MNL) model. Under this model, the probability that a consumer purchases product $i$ at time $t$ when offered an assortment $S_t = S \subset \{1,\ldots,N\}$ is given by, 
\begin{equation}\label{choice_probabilities}
p_i(S) := \mathbb{P}\left( c_t = i | S_t = S \right) = \begin{cases}
 \displaystyle \frac{v_i}{v_0+\sum_{j \in S}v_j}, &\quad \text{if} \; i \in S \cup \{0\}\\
0, & \quad \; \text{otherwise},
\end{cases}
\end{equation}
for all $t$, where $v_i$ is the {\em attraction parameter} for product $i$ in the MNL model. The random variables $\{c_t: t = 1, 2, \ldots \}$ are conditionally independent, namely, $c_t$ conditioned on the event $\left\{S_t  = S\right\}$ is independent of $c_1, \ldots, c_{t-1}$. Without loss of generality, we can assume that $v_0 = 1$.  It is important to note that the parameters of the MNL model $v_i$, are not known to the seller. From \eqref{choice_probabilities}, the expected revenue when assortment $S$ is offered and the MNL parameters are denoted by the vector $\mb{v}$ is given by
\begin{equation}\label{MNL_revenue}
R(S, \mb{v}) = \mathbb{E} \left[\sum_{i\in S} r_i \mathbbm{1}\{c_t = i | S_t = S\}\right] = \sum_{i \in S} \frac{r_iv_i}{1+\sum_{j \in S} v_j},
\end{equation} 
where $r_i$ is the revenue obtained when product $i$ is purchased and is known a priori. %In this work, we consider a constrained assortment selection problem under the Multinomial logit (MNL) model with $N$ substitutable products and can offer subset of products satisfying certain constraints in each time period. Our goal is to offer a sequence of assortments, $S_1, \ldots, S_T$, where $T$ is the planning horizon, such that the cumulative expected revenues over said horizon is maximized, or alternatively,  minimizing the gap between the performance of a proposed policy and that of an oracle that knows instance parameters a priori, a quantity referred to as the {\it regret}.

%The Multinomial Logit model (MNL), owing primarily to its tractability,  is the most widely used  choice model for assortment selection problems.  (The model  was introduced independently by  \cite{Luce59} and  \cite{Plackett75}, see also \cite{train2003discrete, McFadden78,BenLerman85} for further discussion and survey of other commonly used choice models.) 

We  consider several naturally arising constraints over the assortments that the retailer can offer. These include cardinality constraints (where there is an upper bound on the number of products that can be offered in the assortment), partition matroid constraints (where the products are partitioned into segments and the retailer can select at most a specified number of products from each segment) and joint display and assortment constraints (where the retailer needs to decide both the assortment as well as the display segment of each product in the assortment and there is an upper bound on the number of products in each display segment). More generally, we consider the set of totally unimodular (TU) constraints on the assortments. Let $\mb{x}(S) \in \{0,1\}^{N}$ be the incidence vector for assortment $S \subseteq \{1, \ldots, N\}$, i.e., $x_i(S) = 1$ if product $i \in S$ and $0$ otherwise. We consider constraints of the form
\begin{equation}\label{eq:feasible_set}
{\cal S} = \left\{ S \subseteq \{1,\ldots,N\} \; \middle |\; A \; \mb{x}(S) \leq \mb{b}, \; \mb 0 \leq \mb x \leq \mb 1 \right\},
\end{equation}
where $\mb A$ is a totally unimodular matrix {and $\mb{b}$ is integral (i.e., each component of the vector $\mb{b}$ is an integer).} The totally unimodular constraints model a rich class of practical assortment planning problems including the examples discussed above. We refer the reader to \cite{davis2013assortment} for a detailed discussion on assortment and pricing optimization problems that can be formulated under the TU constraints.

\medskip \noindent {\bf Admissible Policies.} To define the set of policies that can be used by the seller, let $U$ be a random variable, which encodes any additional sources of randomization and $(\mathbb{U}, \ep{U}, \mathbb{P}_{u})$ be the corresponding probability space.   We define $\{\pi_t, \; t= 1,2, \ldots \}$ to be measurable mappings as follows:
%\begin{itemize}
%\item $\pi_1: \mathbb{U} \rightarrow \ep{S}$, 
%\item $\pi_t: \mathbb{U} \times \ep{S}^{t-1} \times \{0,\ldots, N\}^{t-1}  \rightarrow \ep{S}$, for each $t = 2, 3, \ldots $
%\end{itemize}
\begin{equation*}
\begin{aligned}
\pi_1: \;&\mathbb{U} \rightarrow \ep{S}\\
\pi_t: \;&\mathbb{U} \times \ep{S}^{t-1} \times \{0,\ldots, N\}^{t-1}  \rightarrow \ep{S}, \;\text{ for each}\;\; t = 2, 3, \ldots,
\end{aligned}
\end{equation*}
where $\ep{S}$ is as defined in \eqref{eq:feasible_set}. Then the assortment selection for the seller at time $t$ is given by 
\begin{equation} \label{eq:selection}
S_t = \begin{cases}
 \pi_1(U), &\quad \; t = 1\\
\pi_t(U, c_1, \ldots, c_{t-1}, S_1, \ldots, S_{t-1}), & \quad \;  t = 2, 3, \ldots.
\end{cases}
\end{equation}
For further reference, let $\{\ep{H}_t : t = 1,2, \ldots \}$ denote the filtration associated with the policy $\pi = (\pi_1, \pi_2, \ldots, \pi_t, \ldots)$. Specifically, 
\begin{equation*}
\begin{aligned}
\ep{H}_1 &= \sigma (U) \\
\ep{H}_t & = \sigma(U, c_1, \ldots, c_{t-1}, S_1, \ldots, S_{t-1}), \;\text{ for each}\;\; t = 2, 3, \ldots.
\end{aligned}
\end{equation*}
We denote by $\mathbb{P}_{\pi}\{.\}$ and $\mathbb{E}_{\pi}\{.\}$ the probability distribution and expectation value over path space induced by the policy $\pi$.

%Now we can formally state  
\medskip \noindent {\bf The online assortment optimization problem.} The objective  is to design a policy $\pi = (\pi_1, \ldots, \pi_T)$ that selects a sequence of history dependent assortments $(S_1, S_2, \ldots, S_T)$ so as to maximize the cumulative expected revenue, 
\begin{equation}\label{eq:cumulative_reward}
\mathbb{E}_{\pi} \left(\sum_{t=1}^T R(S_t, \mathbf{v}) \right),
\end{equation}
where $R(S,\mathbf{v})$ is defined as in \eqref{MNL_revenue}. Direct analysis of \eqref{eq:cumulative_reward} is not tractable given that the parameters $\{v_i, i = 1, \ldots, N\}$ are not known to the seller a priori. Instead we propose to measure the performance of a policy $\pi$ via its {\it regret}. The objective then is to design a policy that approximately minimizes the {\it regret} defined  as
\begin{equation}\label{Regret}
Reg_{\pi}(T,\mb{v}) = \sum_{t=1}^T R(S^*,\mb{v}) - \mathbb{E}_{\pi}[R(S_t, \mathbf{v})],
\end{equation}
where $S^*$ is the optimal assortment for \eqref{MNL_revenue}, namely, $S^* = \underset{S \in \mathcal{S}}{\text{argmax}} \;R(S,\mathbf{v}).$ This exploration-exploitation problem, which we refer to as {\bf \banditMNL}, is the focus of this paper.

\section{The proposed policy}
\label{sec:ourAlgorithm}
In this section, we describe our proposed policy for the \banditMNL~problem. The policy is designed using the characteristics of the MNL model based on the principle of optimism under uncertainty. 

\subsection{Challenges and overview}
A key difficulty in applying standard multi-armed bandit techniques to this problem is that the response observed on offering a product $i$ is {\it not} independent of other products in assortment $S$. Therefore, the $N$ products cannot be directly treated as $N$ independent arms. %As mentioned before, a naive extension of MAB algorithms for this problem would treat each of the  feasible assortments as an arm, leading to a computationally inefficient algorithm with exponential regret. 
%, with reward for arm $S$ given by $R(S) \in [0,1]$. However, not only such an algorithm would have $\tilde{O}(\sqrt{ {N\choose K} T})$ regret, which is exponential in $K$, but it would also be computationally intractable for large $K$. 
Our policy utilizes the specific properties of the dependence structure in MNL model to obtain an efficient algorithm 
%based on the principle of optimism under uncertainty, 
with order $\sqrt{NT}$ regret. 

Our policy is based on a non-trivial extension of the UCB algorithm in \cite{auer}, {which is predicated on \cite{Lai}}. It uses the past observations to maintain increasingly accurate upper confidence bounds for the MNL parameters $\{v_i, i=1,\ldots, N\}$, and uses these to (implicitly) maintain an estimate of expected revenue $R(S,\mb{v})$ for every feasible assortment $S$. In every round, our policy picks the assortment $S$ with the highest optimistic revenue. There are two main challenges in implementing this scheme. First, the customer response to being offered an assortment $S$ depends on the entire set $S$, and does not directly provide an unbiased sample of demand for a product $i\in S$. In order to obtain unbiased estimates of $v_i$ for all $i\in S$, we offer a set $S$ multiple times: specifically, it is offered repeatedly until a no-purchase occurs. We show that proceeding in this manner, the average number of times a product $i$ is purchased provides an unbiased estimate of the parameter $v_i$. The second difficulty is the computational complexity of maintaining and optimizing revenue estimates for each of the exponentially many assortments. To this end, we use the structure of the MNL model and define our revenue estimates such that the assortment with maximum estimated revenue can be efficiently found by solving a simple optimization problem. This optimization problem turns out to be a static assortment optimization problem with upper confidence bounds for $v_i$'s as the MNL parameters, for which efficient solution methods are available.
%This eliminates the need to explicitly compute or maintain estimates for every possible assortment.

%Note that our regret bounds do not depend on $K$. Intuitively, the problem with large $K$ has {\it more possibilities} of assortments, but it also allows the algorithm to {\it learn more} in every round -- it observes consumer's response for $K$ products in a round (though response for one product is biased by another in the assortment). Our algorithm and regret analysis demonstrate that the two factors balance each other, to achieve regret bounds independent of $K$. 

\subsection{Details of the policy}
We divide the time horizon into epochs, where in each epoch we offer an assortment repeatedly  until a no purchase outcome occurs. Specifically, in each epoch $\ell$, we offer an assortment $S_\ell$ repeatedly. Let $\ep{E}_\ell$ denote the set of consecutive time steps in epoch $\ell$. $\ep{E}_\ell$ contains all time steps after the end of epoch $\ell-1$, until a no-purchase happens in response to offering $S_\ell$, including the time step at which no-purchase happens. The length of an epoch $|\ep{E}_\ell|$ conditioned on $S_\ell$ is a geometric random variable with success probability defined as the probability of no-purchase in $S_\ell$. The total number of epochs $L$ in time $T$ is implicitly defined as the minimum number for which $\sum_{\ell=1}^L |\ep{E}_\ell| \ge T$.  

At the end of every epoch $\ell$, we update our estimates for the parameters of MNL, which are used in epoch $\ell+1$ to choose assortment $S_{\ell+1}$. For any time step $t\in \ep{E}_\ell$, let $c_t$ denote the consumer's response to $S_{\ell}$, i.e., $c_t=i$ if the consumer purchased product {$i\in S_\ell$}, and $0$ if no-purchase happened. We define $\hat{v}_{i,\ell}$ as the number of times a product $i$ is purchased in epoch $\ell$,
\begin{equation}
\label{def:vhat}
\hat{v}_{i,\ell}:= \sum_{t \in \ep{E}_\ell} \mathbbm{1}(c_t = i). 
\end{equation}
For every product $i$ and epoch $\ell \leq L$, we keep track of the set of epochs before $\ell$ that offered an assortment containing product $i$, and the number of such epochs. We denote the set of epochs by $\mathcal{T}_i(\ell)$ and the number of epochs by $T_i(\ell)$. That is,
\begin{equation}
\label{def:Til}
\mathcal{T}_i(\ell) = \left\{\tau \leq \ell \, \middle| \, i \in S_\tau\right\},\ \ T_i(\ell)=|\mathcal{T}_i(\ell)|.
\end{equation}
We compute $\bar{v}_{i,\ell}$ as the number of times product $i$ was purchased per epoch, 
 %For each epoch $\ell \leq L$ and product $i \in S_\ell$, 
%Then, we compute $\hat{v}_{i,\ell}$ as the average number of times a product $i$ is purchased in epoch $\ell$. %We later show that $\hat{v}_{i,\ell}$ is an unbiased estimate of the MNL parameter $v_i$. 
%After every epoch $\ell \leq L$, we compute $\bar{v}_{i,\ell}$ as the sample mean of the estimates $\hat{v}_{i,\tau}, \tau \in \mathcal{T}_i(\ell)$, 
\begin{equation}
\label{def:vbar}
\bar{v}_{i,\ell} = \frac{1}{T_i(\ell)}\sum_{\tau \in \mathcal{T}_i(\ell)} \hat{v}_{i,\tau}.
\end{equation}
We show that for all $i\in S_\ell$, $\hat{v}_{i,\ell}$ and $\bar{v}_{i,\ell}$ are unbiased estimators of the MNL parameter $v_i$ (see Corollary \ref{unbiased_estimate} )
Using these estimates, we compute the upper confidence bounds, $v^{\sf UCB}_{i,\ell}$ for $v_i$ as, 
\begin{equation}
\label{def:vUCB}
v^{\sf UCB}_{i,\ell}  :=\bar{v}_{i,\ell} + \displaystyle \sqrt{\bar{v}_{i,\ell}\frac{48\log{(\sqrt{N}\ell+1)}}{T_i(\ell)}} + \frac{48\log{(\sqrt{N}\ell+1)}}{T_i(\ell)}.
\end{equation}
We establish that $v^{\sf UCB}_{i,\ell}$ is an upper confidence bound on the true parameter $v_i$, i.e., $v^{\sf UCB}_{i,\ell} \ge v_i$,  for all $i,\ell$ with high probability (see Lemma \ref{lem:UCBv}). The role of the upper confidence bounds is akin to their role in hypothesis testing; they ensure that the likelihood of identifying the parameter value is sufficiently large. 
We then offer the optimistic assortment in the next epoch, based on the previous updates as follows,
\begin{equation}\label{eq:optimistic_assort}
S_{\ell+1} := \underset{S \in \mathcal{S}}{\text{argmax}}\; {\max} \left\{R(S,\mb{\hat{v}}) : \hat{v}_i \leq v^{\sf UCB}_{i,\ell}\right\},
\end{equation}
where $R(S,\mb{\hat{v}})$ is as defined in \eqref{MNL_revenue}. We later show that the above optimization problem is equivalent to the following optimization problem (see Lemma \ref{UCB_bound}).
\begin{equation}\label{eq:update_assort}
S_{\ell+1} := \underset{S \in \mathcal{S}}{\text{argmax}}\;\tilde{R}_{\ell+1}(S),
\end{equation}
where $\tilde{R}_{\ell+1}(S)$ is defined as,
\begin{equation}
\label{def:RtildeS}
\tilde{R}_{\ell+1}(S) := \frac{\displaystyle\sum_{i \in S}r_i v^{\sf UCB}_{i,\ell}}{1+\displaystyle\sum_{j \in S} v^{\sf UCB}_{j,\ell}}.
\end{equation}
We summarize the steps in our policy in Algorithm \ref{learn_algo}. 
% Algorithm
\begin{algorithm}
\begin{algorithmic}[1]
\State \textbf{Initialization:} $v^{\sf UCB}_{i,0} = 1$ for all $i=1,\ldots,N$

\State $t = 1$ ; $\ell = 1$ keeps track of the time steps and total number of epochs respectively

\While{$t < T$}

	\State Compute $S_{\ell} = \underset{S \in \mathcal{S}}{\text{argmax}}\;\tilde{R}_{\ell}(S) = \frac{\displaystyle\sum_{i \in S}r_i v^{\sf UCB}_{i,\ell-1}}{1+\displaystyle\sum_{j \in S} v^{\sf UCB}_{j,\ell-1}}$
	\State Offer assortment $S_{\ell}$, observe the purchasing decision, ${c_t}$ of the consumer

	\If{$c_t = 0$}

		\State compute $\hat{v}_{i,\ell} = \sum_{t \in \ep{E}_\ell} \mathbbm{1}(c_t = i)$, no. of consumers who preferred $i$ in epoch $\ell$, for all $i \in S_\ell$
		\State update $\mathcal{T}_i(\ell) = \left\{\tau \leq \ell \, \middle| \, i \in S_\ell\right\}, T_i(\ell)=|\mathcal{T}_i(\ell)|$, no. of epochs until $\ell$ that offered  product $i$
		\State update $\bar{v}_{i,\ell}$ = $\displaystyle \frac{1}{T_i(\ell)}\sum_{\tau \in \ep{T}_i(\ell)} \hat{v}_{i,\tau}$, sample mean of the estimates
		\State	update $ v^{\sf UCB}_{i,\ell}$  =$\bar{v}_{i,\ell} + \displaystyle \sqrt{\bar{v}_{i,\ell}\frac{48\log{({\sqrt{N}}\ell+1)}}{T_i(\ell)}} + \frac{48\log{({\sqrt{N}}\ell+1)}}{T_i(\ell)}$;\; $\ell = \ell + 1$	

	\Else

		\State $\ep{E}_\ell = \ep{E}_\ell \cup t$, time indices corresponding to epoch $\ell$

	\EndIf
		\State $t = t+1$

\EndWhile
\end{algorithmic}
\caption{Exploration-Exploitation algorithm for \banditMNL}
\label{learn_algo}
\end{algorithm}
Finally, we may remark on the computational complexity of implementing \eqref{eq:optimistic_assort}. The optimization problem~\eqref{eq:optimistic_assort} is formulated as a static assortment optimization problem under the MNL model with TU constraints, with model parameters being $v^{\sf UCB}_{i,\ell}, i=1,\ldots, N$ (see \eqref{eq:update_assort}). There are efficient polynomial time algorithms to solve the static assortment optimization problem under MNL model with known parameters (see~\citealp{avadhanula2016tightness}, \citealp{davis2013assortment}, \citealp{rusme}). {We will now briefly comment on how Algorithm \ref{learn_algo} is different from the existing approaches of \cite{Saure} and \cite{rusme} and also why other standard ``bandit techniques'' are not applicable to the \banditMNL\;problem.}
\begin{remark}{\normalfont{\textbf{(Universality)}} \normalfont { Note that Algorithm \ref{learn_algo} does not require any prior knowledge/information about the problem parameters $\mb{v}$ (other than the assumption $v_i \leq v_0$, which is subsequently relaxed in Algorithm \ref{learn_algo_extn}). This is in contrast with the approaches of \cite{Saure} and \cite{rusme}, which require the knowledge of the ``separation gap,'' namely, the difference between the expected revenues of the optimal assortment and the second-best assortment. Assuming knowledge of  this ``separation gap,'' both these existing approaches explore a pre-determined set of assortments to estimate the MNL parameters within a desired accuracy, such that the optimal assortment corresponding to the estimated parameters is the (true) optimal assortment with high probability.  This forced exploration of pre-determined assortments is avoided in Algorithm \ref{learn_algo}, which offers assortments adaptively, based on the current observed choices. The confidence regions derived for the parameters $\mb{v}$ and the subsequent assortment selection, ensure that Algorithm \ref{learn_algo} judiciously maintains the balance between exploration and exploitation that is central to the \banditMNL\;problem.} }\end{remark}

 \begin{remark}{\normalfont{\textbf{(Estimation Approach)}} \normalfont Because the \banditMNL\;problem is parameterized with parameter vector ($\mb{v}$), a natural approach is to build on standard estimation approaches like maximum likelihood (MLE), where the estimates are obtained by optimizing a loss function. \change{However, the confidence regions for estimates resulting from such approaches are either: 
 \begin{enumerate}
 \item asymptotic and are not necessarily valid for finite time with high probability, or
 \item typically depend on true parameters, which are not known a priori. For example, finite time confidence regions associated with maximum likelihood estimates require the knowledge of $\underset{\mb{v} \in \mathcal{V}}{\sup}\;I(\mb{v})$ (see \citealp{borovkov1984mathematical}), where $I$ is the Fisher information of the MNL choice model and $\ep{V}$ is the set of feasible parameters (that is not known apriori).  Note that using $I(\mb{v}^{\sf MLE})$ instead of $\underset{\mb{v} \in \mathcal{V}}{\sup}\;I(\mb{v})$ for constructing confidence intervals would only lead to asymptotic guarantees and not finite sample guarantees.
 \end{enumerate} }
 In contrast, in Algorithm \ref{learn_algo},  we solve the estimation problem by a sampling method designed to give us unbiased estimates of the model parameters. The confidence bounds of these estimates and the algorithm do not depend on the underlying model parameters. Moreover, our sampling method allows us to compute the confidence regions by simple and efficient ``book keeping'' and avoids computational issues that are typically associated with standard estimation schemes such as MLE. \change{Furthermore, the confidence regions associated with the unbiased estimates also facilitate a tractable way to compute the optimistic assortment (see \eqref{eq:optimistic_assort}, \eqref{eq:update_assort} and Step-4 of Algorithm \ref{learn_algo}), which is less accessible for the MLE estimate.}}\end{remark}
 
 \begin{remark}{\normalfont{\textbf{(Alternative Approaches)}} \change{Recently, Thompson Sampling (TS) has attracted considerable attention and several studies (\citealp{chapelle}, \citealp{may2012optimistic}) have demonstrated that TS significantly outperforms the state of the art bandit policies in practice. Typically, TS approaches proceed by assuming a prior distribution on the underlying parameters ($\mb{v}$ in the \banditMNL\;problem) and at every time step the posterior distribution on the parameters is updated based on the observed rewards and an arm (assortment) is selected with its posterior probability of it being the best arm. To implement a TS approach for the \banditMNL\;problem, one would need to specify the choice of prior, address the tractability of posterior sampling, etc. These issues also impede the analysis of such an algorithm. For example, in all existing work (\citealp{AgrawalTS_nearopt}, \citealp{AgrawalTS_lin}) on worst-case regret analysis for TS, the prior is chosen to allow a conjugate posterior, which permits theoretical analysis. For general posteriors, only Bayesian regret bounds have been proven, which are much weaker than the regret notion we consider in this paper. We return to discuss TS sampling in the concluding remarks of the paper.}} %would be a natural first thought for the \banditMNL\;problem, the choice of prior, tractability of posterior sampling are not immediately clear to pursue a tractable TS approach. Moreover, the lack of analytical formulation for the posterior update makes it not only hard to implement the algorithm, but also to analyze it. For example, in all the existing work (\citealp{AgrawalTS_nearopt}, \citealp{AgrawalTS_lin}) on worst-case regret analysis for TS, the prior is chosen to allow a conjugate posterior, which permits theoretical analysis. For general posteriors, only Bayesian regret bounds have been proven, which is much weaker than the regret notion we considered in this paper. However, given the popularity of TS based approaches for its empirical performance, we think developing a TS-based approach for this problem would be an interesting future direction and briefly discuss this in the concluding section of the paper.}} 
 \end{remark}
\section{Main results}
\label{sec:regretAnalysis}
%In this section we present the main result and expand on the ideas essential to \mbox{Algorithm \ref{learn_algo}}. 
{\normalfont In what follows, we make the following assumptions.}\vspace{-5mm}
\begin{assumption}\label{assumption1}
~~\\
\vspace{-6mm}
\begin{enumerate}
\item The MNL parameter corresponding to any product $i \in \{1,\ldots, N\}$ satisfies $v_i \leq v_0 = 1$.
\item The family of assortments ${\cal S}$ is such that $S \in {\cal S}$ and $Q \subseteq S$ implies that $Q \in {\cal S}$. 
\end{enumerate}
\end{assumption}
The first assumption is equivalent to the `no purchase option' being the most likely outcome. We note that this holds in many realistic settings, in particular,  in online retailing and online display-based  advertising. The second assumption implies that removing a product from a feasible assortment preserves feasibility. This holds for most constraints arising in practice including cardinality, and matroid constraints more generally.  We would like to note that the first assumption is made for ease of presentation of the key results and is not central to deriving bounds on the regret. In section \ref{sec:modified_policy}, we relax this assumption and derive regret bounds that hold for any parameter instance.

Our main result is the following upper bound on the regret of the policy stated in Algorithm \ref{learn_algo}.
\begin{theorem1}[Performance Bounds for Algorithm \ref{learn_algo}]\label{main_result}
For any instance $\mb{v} = (v_0, \ldots, v_N)$ of the \banditMNL~problem with $N$ products,  $r_i\in [0,1]$ and Assumption \ref{assumption1}, the regret of the policy given by Algorithm~\ref{learn_algo} at any time $T$ is bounded as,
\begin{equation*} 
Reg_\pi(T,\mb{v}) \leq C_1\sqrt{NT\log{{N}T}} + C_2N\log^2{{N}T},
\end{equation*}
where $C_1$ and $C_2$ are absolute constants (independent of problem parameters).
\end{theorem1}
\subsection{Proof Outline}\label{proof_outline_thm1}
In this section, we provide an outline of different steps involved in proving Theorem~\ref{main_result}. 

\vspace{2mm}
\medskip \noindent{\bf Confidence intervals.} The first step in our regret analysis is to prove the following two properties of the estimates $v^{UCB}_{i,\ell}$ computed as in \eqref{def:vUCB} for each product $i$.  Specifically,  that $v_i$ is bounded by $v_{i,\ell}^{\sf UCB}$ with high probability, and that as a product is offered an increasing number of times, the estimates $v^{\sf UCB}_{i,\ell}$ converge to the true value with high probability. Intuitively, these properties establish $v^{UCB}_{i,\ell}$ as upper confidence bounds converging to actual parameters $v_i$, akin to the upper confidence bounds used in the UCB algorithm for MAB in \cite{auer}. We provide the precise statements for the above mentioned properties in Lemma \ref{lem:UCBv}. These properties follow from an observation that is conceptually equivalent to the IIA (Independence of Irrelevant Alternatives) property of MNL, and shows that in each epoch $\tau$, $\hat{v}_{i,\tau}$ (the number of purchases of product $i$) provides an independent unbiased estimates of $v_i$. Intuitively, $\hat{v}_{i,\tau}$ is the ratio of probabilities of purchasing product $i$ to preferring product $0$ (no-purchase), which is independent of $S_\tau$. This also explains why we choose to offer $S_\tau$ repeatedly until no-purchase occurs. Given these unbiased i.i.d. estimates from every epoch $\tau$ before $\ell$, we apply a multiplicative Chernoff-Hoeffding bound to prove concentration of $\bar{v}_{i,\ell}$. %Then, the above properties follow from definition of $v^{\sf UCB}_{i,\ell}$.

\medskip \noindent{\bf Validity of the optimistic assortment.} The product demand estimates $v^{\sf UCB}_{i,\ell-1}$ were used in \eqref{def:RtildeS} to  define expected revenue estimates $\tilde{R}_\ell(S)$ for every set $S$. In the beginning of every epoch $\ell$, Algorithm \ref{learn_algo} computes the optimistic assortment as $S_\ell = \arg\max_S \tilde{R}_\ell(S)$, and then offers $S_{\ell}$ repeatedly until no-purchase happens.  \change{The next step in the regret analysis is to leverage the fact that $v^{\sf UCB}_{i,\ell}$ is an upper confidence bound on $v_i$ to prove similar, though slightly weaker, properties for the estimates $\tilde{R}_\ell(S)$.} First, we show that estimated revenue is an upper confidence bound on the optimal revenue, i.e., $R(S^*,\mb{v})$ is bounded by $\tilde{R}_\ell(S_\ell)$ with high probability. The proof for these properties involves careful use of the structure of MNL model to show that the value of $\textstyle \tilde{R}_\ell(S_\ell) $ is equal to the highest expected revenue achievable by any feasible assortment, among { all instances of the problem with parameters in the range} $[0, v^{\sf UCB}_i], i=1,\ldots, n$. Since the actual parameters lie in this range with high probability, we have $\tilde{R}_\ell(S_\ell)$ is at least $R(S^*,\mb{v})$ with high probability. Lemma \ref{lem:UCBS1} provides the precise statement. 

\medskip \noindent{\bf Bounding the regret.} The final part of our analysis is to bound the regret in each epoch. \change{First, we use the fact that $\tilde{R}_\ell(S_\ell)$ is an upper bound on $R(S^*,\mb{v})$ to bound the loss due to offering the assortment $S_\ell$.} In particular, we show that the loss is bounded by the difference between the ``optimistic'' revenue estimate, $\tilde{R}_\ell(S_\ell)$, and the actual expected revenue, $R(S_\ell)$. We then prove a Lipschitz property of the expected revenue function to bound the difference between these estimates in terms of errors in individual product estimates \mbox{$|v^{\sf UCB}_{i,\ell} - v_i|$}. Finally, we leverage the structure of the MNL model and the properties of $v^{UCB}_{i,\ell}$ to bound the regret in each epoch. Lemma \ref{lem:UCBS2} provides the precise statements of above properties. 

In the rest of this section, we make the above notions precise. Finally, in Appendix \ref{sec:completeProof}, we utilize these properties to complete the proof of Theorem \ref{main_result}.

\subsection{Upper confidence bounds }
\label{sec:UCBv}
In this section, we will show that the upper confidence bounds $v^{\sf UCB}_{i,\ell}$ converge to the true parameters $v_i$ from 
above. Specifically, we have the following result. 

\begin{lemma}\label{lem:UCBv}
%\begin{minipage}[t]{\linewidth}
For every $\ell = 1,\cdots, L$, we have:
\begin{enumerate}
\item $v^{\sf UCB}_{i,\ell}\geq v_i$ with probability at least $1-\frac{6}{{N}\ell}$ for all $i=1,\ldots,N$. 
\item There exists constants $C_1$ and $C_2$ such that \[\hspace{-25mm}\displaystyle v^{\sf UCB}_{i,\ell} - v_i \leq C_1\sqrt{\frac{v_i\log{({\sqrt{N}}\ell+1)}}{T_i(\ell)}} + C_2\frac{\log{({\sqrt{N}}\ell+1)}}{{T}_i(\ell)},\] with probability at least $1-\frac{7}{{N}\ell}$.
\end{enumerate}
%\end{minipage}
\end{lemma}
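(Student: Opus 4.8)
The plan is to establish the two claims through a single unifying mechanism: showing that $\hat{v}_{i,\tau}$, the number of purchases of product $i$ in epoch $\tau$, is a bounded-variance geometric random variable whose mean is exactly $v_i$, and then invoking a multiplicative Chernoff--Hoeffding concentration inequality on the epoch average $\bar{v}_{i,\ell}$. First I would formalize the distributional claim about $\hat{v}_{i,\tau}$. Since in epoch $\tau$ we offer $S_\tau$ repeatedly until a no-purchase occurs, and conditioned on $S_\tau=S$ the purchase outcomes are i.i.d.\ across time steps, the number of times product $i$ is purchased before the terminating no-purchase event is geometrically related to the ratio $p_i(S)/p_0(S) = v_i/v_0 = v_i$. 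The key observation (the IIA-type property referenced in the proof outline) is that this ratio is \emph{independent of which other products appear in $S_\tau$}, so $\hat{v}_{i,\tau}$ has mean $v_i$ regardless of $S_\tau$, and the estimates across distinct epochs in $\mathcal{T}_i(\ell)$ are independent. I would pin down that $\hat{v}_{i,\tau}$ is a geometric random variable (equivalently, a sum of a geometric number of Bernoulli purchases) with $\mathbb{E}[\hat{v}_{i,\tau}]=v_i$ and moment generating function controlled for $v_i\le 1$.

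Next I would derive the concentration of $\bar{v}_{i,\ell} = \frac{1}{T_i(\ell)}\sum_{\tau\in\mathcal{T}_i(\ell)}\hat{v}_{i,\tau}$ around $v_i$. Because the $\hat{v}_{i,\tau}$ are independent with mean $v_i$ and are heavy-tailed only in the benign geometric sense, a multiplicative Chernoff bound gives a two-sided deviation of order $\sqrt{v_i \log(\cdot)/T_i(\ell)} + \log(\cdot)/T_i(\ell)$ with the stated failure probability. The precise form of the confidence radius in \eqref{def:vUCB}, with the $\bar{v}_{i,\ell}$ inside the square root rather than $v_i$, is chosen so that it is \emph{empirical}: I would translate the multiplicative bound on $|\bar{v}_{i,\ell}-v_i|$ into a bound where $v_i$ is replaced by the observed $\bar{v}_{i,\ell}$, which is what makes the algorithm parameter-free. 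For part (1), the lower confidence direction combined with the added radius yields $v^{\sf UCB}_{i,\ell}\ge v_i$; for part (2), the upper direction combined with the triangle inequality and the empirical-to-true conversion yields the convergence rate $C_1\sqrt{v_i\log(\sqrt{N}\ell+1)/T_i(\ell)}+C_2\log(\sqrt{N}\ell+1)/T_i(\ell)$.

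To get the probability bounds uniform over all products and over the relevant range of $T_i(\ell)$, I would apply the Chernoff bound for each fixed value of $T_i(\ell)$ and union bound over $i\in\{1,\ldots,N\}$ and over the possible values of $T_i(\ell)\in\{1,\ldots,\ell\}$; the constant $48$ and the $\log(\sqrt{N}\ell+1)$ factor are calibrated precisely so that, after this union bound, the total failure probabilities collapse to $6/(N\ell)$ and $7/(N\ell)$ respectively. I expect the main obstacle to be the careful bookkeeping in this concentration step: the number of samples $T_i(\ell)$ is itself a random, history-dependent stopping-type quantity rather than a fixed count, so I must ensure the independence and unbiasedness of the per-epoch estimates $\hat{v}_{i,\tau}$ hold \emph{conditionally} on the (adaptively chosen) sequence of assortments, and then apply the concentration inequality conditionally before union-bounding over the deterministic index set of possible sample counts. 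Converting the standard multiplicative Chernoff bound (which is stated for a deterministic number of summands) into the empirical, self-normalized radius appearing in \eqref{def:vUCB} while keeping all constants explicit and nonasymptotic is the technically delicate part; the underlying distributional facts about the geometric purchase counts are comparatively routine once the IIA property is established.
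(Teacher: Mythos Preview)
Your proposal is correct and follows essentially the same route as the paper: establish that each $\hat{v}_{i,\tau}$ is a geometric random variable with mean $v_i$ (independent of $S_\tau$) via the IIA property, extend the multiplicative Chernoff--Hoeffding bound to sums of geometric variables, and then combine the empirical-radius bound (for part~1) with a triangle inequality and an empirical-to-true conversion (for part~2). One small bookkeeping correction: the factor $N$ in the failure probabilities $6/(N\ell)$ and $7/(N\ell)$ does \emph{not} arise from a union bound over $i$---the lemma is stated per product $i$---but rather from the $\sqrt{N}$ inside $\log(\sqrt{N}\ell+1)$, which makes each fixed-$m$ deviation probability $O(1/(N\ell^2))$ before you union-bound only over $m\in\{1,\ldots,\ell\}$.
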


\medskip \noindent  
We first establish that the estimates $\hat{v}_{i,\ell}, \; \ell \leq L$ are unbiased i.i.d estimates of the true parameter $v_i$ for all products. It is not immediately clear a priori if the estimates $\hat{v}_{i,\ell},\; \ell \leq L$ are independent.  In our setting, it is possible that the distribution of the estimate $\hat{v}_{i,\ell}$ depends on the offered assortment $S_\ell$, which in turn depends on the history and therefore, previous estimates, $\{\hat{v}_{i,\tau}, \; \tau = 1, \ldots, \ell-1 \}$. In Lemma \ref{moment_generating}, we show that the moment generating function of $\hat{v}_{i,\ell}$ conditioned on $S_\ell$ only depends on the parameter $v_i$ and not on the offered assortment $S_\ell$, there by establishing that estimates are independent and identically distributed.  Using the moment generating function, we show that $\hat{v}_{i,\ell}$ is a geometric random variable with mean $v_i$, i.e., $E(\hat{v}_{i,\ell}) = v_i$. We will use this observation and extend the classical multiplicative Chernoff-Hoeffding bounds (see  \cite{mitzenmacher} and \cite{Babaioff})  to geometric random variables. The  proof is provided in Appendix \ref{sec:UCBva}

\subsection{Optimistic estimate and convergence rates}
\label{sec:UCBS}
In this section, we show that the estimated revenue converges to the optimal expected revenue from 
above. First, we show that the estimated revenue is an upper confidence bound on the optimal revenue. In particular, we have the following result. 
\begin{lemma}\label{lem:UCBS1}
Suppose $S^* \in {\cal S}$ is the assortment with highest expected revenue, and Algorithm \ref{learn_algo} offers $S_\ell \in {\cal S}$ in each epoch $\ell$. Then, for every epoch $\ell $, we have 
\[\tilde{R}_\ell(S_\ell) \geq \tilde{R}_\ell(S^*) \geq R(S^*, \mb{v}) \; \text{with probability at least}\; \;1-\frac{6}{\ell}.\]
\end{lemma}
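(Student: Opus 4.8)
The first inequality requires no probabilistic argument at all: by construction $S_\ell = \arg\max_{S \in \mathcal{S}} \tilde{R}_\ell(S)$, and since $S^* \in \mathcal{S}$ we immediately get $\tilde{R}_\ell(S_\ell) \geq \tilde{R}_\ell(S^*)$. So the whole content of the lemma is the second inequality $\tilde{R}_\ell(S^*) \geq R(S^*,\mb{v})$, which I would establish on a high-probability event. The natural temptation is to invoke $v^{\sf UCB}_{i,\ell-1} \geq v_i$ (from Lemma \ref{lem:UCBv}) together with monotonicity, but this fails: the MNL revenue $R(S,\mb{w})$ is \emph{not} monotone in the attraction parameters, since inflating $w_i$ for a product whose per-purchase revenue $r_i$ is below the current expected revenue actually \emph{lowers} $R$. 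Overcoming this non-monotonicity is the crux of the argument.

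The plan is to exploit a structural property of the optimal assortment: every product retained in $S^*$ has per-purchase revenue at least the optimal expected revenue, i.e.\ $r_i \geq R(S^*,\mb{v})$ for all $i \in S^*$. I would prove this by contradiction. If some $i^* \in S^*$ satisfied $r_{i^*} < R(S^*,\mb{v})$, then writing $A = \sum_{i \in S^*} r_i v_i$ and $B = 1 + \sum_{j \in S^*} v_j$ so that $R(S^*,\mb{v}) = A/B$, a one-line manipulation shows $R(S^*\setminus\{i^*\},\mb{v}) = (A - r_{i^*}v_{i^*})/(B - v_{i^*}) > A/B$ precisely when $r_{i^*} < A/B$. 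Since Assumption \ref{assumption1}(2) guarantees $S^* \setminus \{i^*\} \in \mathcal{S}$, this would contradict the optimality of $S^*$.

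With this in hand I would introduce the good event $\mathcal{A} = \{\,v_i \leq v^{\sf UCB}_{i,\ell-1} \text{ for all } i\,\}$, which holds with probability at least $1 - 6/\ell$ by applying Lemma \ref{lem:UCBv}(1) to each product and taking a union bound over the $N$ products. On $\mathcal{A}$ I would show $\tilde{R}_\ell(S^*) = R(S^*, \mb{v}^{\sf UCB}_{\ell-1}) \geq R(S^*,\mb{v})$ as follows. Setting $\theta := R(S^*,\mb{v})$, clearing the denominator shows that $R(S^*,\mb{w}) \geq \theta$ is equivalent to $\sum_{i \in S^*} (r_i - \theta)\, w_i \geq \theta$. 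By the definition of $\theta$ this holds with \emph{equality} at $\mb{w} = \mb{v}$, and since the structural property gives $r_i - \theta \geq 0$ for every $i \in S^*$, the left-hand side is nondecreasing in each coordinate $w_i$. Replacing $\mb{v}$ by the coordinatewise-larger $\mb{v}^{\sf UCB}_{\ell-1}$ (valid on $\mathcal{A}$) therefore only increases it, yielding $\tilde{R}_\ell(S^*) \geq \theta = R(S^*,\mb{v})$.

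The main obstacle, as noted, is the non-monotonicity of $R$ in the parameters; the key insight that resolves it is that restricting attention to $S^*$ eliminates exactly the coordinates on which monotonicity could break (those with $r_i < \theta$, which are absent from an optimal assortment). The remaining care is bookkeeping: I would verify that the rewriting $R(S^*,\mb{w}) \geq \theta \iff \sum_{i}(r_i-\theta)w_i \geq \theta$ is genuinely an equivalence (the denominator $1 + \sum_j w_j$ is positive), that equality is tight at $\mb{w}=\mb{v}$, and that the union bound over the epoch-$(\ell-1)$ confidence bounds indeed delivers the stated $1 - 6/\ell$ probability.
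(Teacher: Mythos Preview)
Your proof is correct and follows essentially the same approach as the paper: both hinge on the structural fact that $r_i \ge R(S^*,\mb{v})$ for every $i \in S^*$ (proved via the downward-closure part of Assumption~\ref{assumption1}), and both finish by a union bound over the $N$ events from Lemma~\ref{lem:UCBv}(1). Your linear rewriting $R(S^*,\mb{w}) \ge \theta \iff \sum_{i\in S^*}(r_i-\theta)w_i \ge \theta$ is in fact a slightly cleaner way to conclude monotonicity than the paper's one-coordinate-at-a-time iteration in Lemma~\ref{UCB_bound}, since it makes the monotonicity in each $w_i$ manifest in a single line without needing to re-verify the threshold condition after each coordinate update.
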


\medskip \noindent 
In Lemma \ref{UCB_bound}, we show that the optimal expected revenue is monotone in the MNL parameters.   It is important to note that we do not claim that the expected revenue is in general a monotone function, but only the value of the expected revenue corresponding to the optimal assortment increases with increase in the MNL parameters. The result follows from Lemma \ref{lem:UCBv}, where we established that $v^{\sf UCB}_{i,\ell}> v_{i}$ with high probability. We provide the detailed proof in Appendix \ref{sec:UCBSa}.

\medskip \noindent The following result provides the convergence rates of the estimate $\tilde{R}_\ell(S_\ell)$ to the optimal expected revenue. 

\begin{lemma}\label{lem:UCBS2}
If $r_i \in [0,1]$, there exists constants $C_1$ and $C_2$ such that for every $\ell =1,\cdots,L$, we have
\[\textstyle (1+\sum_{j \in S_\ell} v_{j})(\tilde{R}_{\ell}(S_\ell) - R(S_\ell, \mb{v})) \leq C_1\sqrt{\frac{v_i\log{({\sqrt{N}}\ell+1)}}{|\mathcal{T}_i(\ell)|}} + C_2\frac{\log{({\sqrt{N}}\ell+1)}}{|\mathcal{T}_i(\ell)|},\; \]$\text{with probability at least}\;\;1-\frac{13}{\ell}.$
\end{lemma}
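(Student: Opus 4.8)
The plan is to reduce the claim to a single exact identity for the revenue gap and then feed in the two parts of Lemma~\ref{lem:UCBv}. First I would use the equivalence established around \eqref{eq:optimistic_assort}--\eqref{eq:update_assort} (and invoked in Lemma~\ref{lem:UCBS1}) to write $\tilde{R}_\ell(S_\ell) = R(S_\ell,\mb{w})$, where $\mb{w}$ is the vector of upper confidence bounds $w_i := v^{\sf UCB}_{i,\ell-1}$ driving epoch $\ell$. The quantity to control is then simply $R(S_\ell,\mb{w}) - R(S_\ell,\mb{v})$, i.e.\ the \emph{same} revenue functional evaluated at the optimistic parameters and at the true parameters on the \emph{common} assortment $S_\ell$. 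This is the key simplification, since both terms share the identical fractional structure.

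The core step is an exact algebraic decomposition realizing the Lipschitz property mentioned in the proof outline. Writing $V := 1 + \sum_{j\in S_\ell} v_j$ and $W := 1 + \sum_{j\in S_\ell} w_j$, and adding and subtracting $\tfrac{1}{W}\sum_{i\in S_\ell} r_i v_i$, one obtains
\[
V\bigl(\tilde{R}_\ell(S_\ell) - R(S_\ell,\mb{v})\bigr) = \frac{V}{W}\sum_{i\in S_\ell}\bigl(r_i - R(S_\ell,\mb{v})\bigr)\,(w_i - v_i).
\]
Thus the revenue gap is a weighted sum of the per-product estimation errors $w_i - v_i$, with weights $r_i - R(S_\ell,\mb{v})$ that are bounded in magnitude by $1$ because $r_i\in[0,1]$ and $R(S_\ell,\mb{v})\in[0,1]$. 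I would next condition on the event of part~1 of Lemma~\ref{lem:UCBv}, namely $w_i \ge v_i$ for all $i$, which holds with probability at least $1-\tfrac{6}{\ell}$ after a union bound over the $N$ products. On this event $W\ge V$, so $V/W\le 1$, and each $w_i - v_i \ge 0$; combining these with $|r_i - R(S_\ell,\mb{v})|\le 1$ gives
\[
V\bigl(\tilde{R}_\ell(S_\ell) - R(S_\ell,\mb{v})\bigr) \le \sum_{i\in S_\ell} (w_i - v_i).
\]
Finally I would apply part~2 of Lemma~\ref{lem:UCBv} to each term $w_i - v_i = v^{\sf UCB}_{i,\ell-1} - v_i$, which holds with probability at least $1-\tfrac{7}{N\ell}$ per product, i.e.\ $1-\tfrac{7}{\ell}$ after a union bound; a further union bound across the two events yields the failure probability $\tfrac{6}{\ell}+\tfrac{7}{\ell}=\tfrac{13}{\ell}$, and summing the per-product bounds over $i\in S_\ell$ produces the stated right-hand side.

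I expect the main obstacle to be the sign bookkeeping rather than the algebra: the weights $r_i - R(S_\ell,\mb{v})$ are not of a fixed sign, so the estimate $\sum_i (r_i - R)(w_i-v_i)\le \sum_i (w_i-v_i)$ is legitimate only after we know $w_i - v_i \ge 0$, which is precisely why part~1 of Lemma~\ref{lem:UCBv} (the UCB property) is indispensable here and also what forces $V/W\le 1$ so that the leading factor does not inflate the bound. A minor technical point is the epoch-index mismatch—$\tilde{R}_\ell$ is formed from $v^{\sf UCB}_{i,\ell-1}$ whereas the target bound is phrased with $T_i(\ell)=|\mathcal{T}_i(\ell)|$—which I would absorb into the constants $C_1,C_2$ via $T_i(\ell-1)=T_i(\ell)-1$ together with the monotonicity of the logarithmic terms.
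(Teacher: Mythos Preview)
Your proposal is correct and follows essentially the same route as the paper. The paper isolates the Lipschitz step as a separate lemma (``Bounding Regret''): under $v_i \le v^{\sf UCB}_{i,\ell}$ it shows directly that $\tilde{R}_\ell(S_\ell) - R(S_\ell,\mb{v}) \le \frac{\sum_{i\in S_\ell}(v^{\sf UCB}_{i,\ell}-v_i)}{1+\sum_{j\in S_\ell}v_j}$ via the chain $\frac{\sum r_i w_i}{W} - \frac{\sum r_i v_i}{V} \le \frac{\sum r_i(w_i-v_i)}{W} \le \frac{\sum (w_i-v_i)}{W} \le \frac{\sum (w_i-v_i)}{V}$, and then plugs in the two parts of Lemma~\ref{lem:UCBv} with the same $6/\ell + 7/\ell = 13/\ell$ union bound you give. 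Your exact identity $V(\tilde{R}_\ell - R) = \tfrac{V}{W}\sum_{i\in S_\ell}(r_i - R(S_\ell,\mb{v}))(w_i - v_i)$ is a slightly sharper starting point than the paper's chain of inequalities, but it leads to the identical intermediate bound $\sum_{i\in S_\ell}(w_i-v_i)$ under the identical event, so the two arguments are equivalent in substance.
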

\medskip \noindent 
In Lemma \ref{lipschitz}, we show that the expected revenue function satisfies a certain kind of Lipschitz condition. Specifically, the difference between the estimated, $\tilde{R}_{\ell}(S_\ell)$, and expected revenues,  $R_\ell(S_\ell)$, is bounded by the sum of errors in parameter estimates for the products, $|v^{\sf UCB}_{i,\ell} - v_i|$. This observation in conjunction with the ``optimistic estimates'' property  will let us bound the regret as an aggregated difference between estimated revenues and expected revenues of the offered assortments. Noting that we have already computed convergence rates between the parameter estimates earlier, we can extend them to show that the estimated revenues converge to the optimal revenue from above. We complete the proof in Appendix \ref{sec:UCBSa}.

\section{Lower bounds and near-optimality of the proposed policy}\label{sec:lowerBound}
In this section, we consider the special case of TU constraints, namely, a cardinality constrained assortment optimization problem, and establish that any policy must incur a regret of $\Omega(\sqrt{NT/K})$.  More precisely, we prove the following result. 

\begin{theorem1}[Lower bound on achievable performance]\label{lower_bound}
There exists a (randomized) instance of the \banditMNL~problem with $v_0 \geq v_i\;, i=1,\ldots,N$, such that {for any $N$ and $K$},  and any policy $\mathcal{\pi}$ that offers assortment $\mathcal{S}_t^{\mathcal{\pi}}$, $|{S}_t^{\mathcal{\pi}}| \leq K$ at time $t$, we have 
for all $T\geq N$ that,$$Reg_\pi(T,\mb{v}):= \mathbbm{E}_\pi\left( \sum_{t=1}^T R(S^*, \mb{v}) - R({S}_t^{\mathcal{\pi}}, \mb{v})\right) \geq C\sqrt{\frac{NT}{K}},$$
where $S^*$ is (at-most) $K$-cardinality assortment with maximum expected revenue, and $C$ is an absolute constant. 
\end{theorem1}
\begin{remark}{\normalfont{\textbf{(Optimality)}} \normalfont { \change{ Theorem \ref{lower_bound} establishes that Algorithm \ref{learn_algo} is optimal if we assume $K$ to be fixed. We note that the assumption that $K$ is fixed holds in many realistic settings, in particular, in online retailing, where there are a large number of products, but only fixed number of slots to show these products. Algorithm \ref{learn_algo} is nearly optimal if $K$ is also considered to be a problem parameter, with the upper bound being within a factor of $\sqrt{K}$ of the lower bound. In recent work, \cite{xichen} established a lower bound of $\Omega\left({\sqrt{NT}}\right)$ for the \banditMNL\;problem, when $K < N/4$, thus suggesting that Algorithm \ref{learn_algo} is optimal even with respect to its dependence on $K$. For the special case of the unconstrained \banditMNL\;problem (i.e., $K=N$), the regret bound of Algorithm \ref{learn_algo} can be improved to $O(\sqrt{|S^*|T})$, where $|S^*|$ is the size of the optimal assortment (see Appendix \ref{sec:unconstrained_mnl_bandit}) and the optimality gap for the unconstrained setting is $\sqrt{|S^*|}$.}   }}\end{remark}

\subsection{Proof overview}
{For ease of exposition, we focus here on the case where $K < N$, and present the proof for lower bound when $K=N$ in Appendix \ref{lower_boundN}. To that end, we will assume that $K < N$ for the rest of this section.} We prove Theorem \ref{lower_bound} by a reduction to a parametric multi-armed bandit (MAB) problem, for which a lower bound is known.  
\begin{definition}[MAB instance $I_{\sf MAB}$] Define $I_{\sf MAB}$ as a (randomized) instance of MAB problem with $N\ge 2$ Bernoulli arms {(reward is either $0$ or $1$) and the probability of the reward being $1$ for arm $i$ is given by,}
$$
\mu_i= \left\{\begin{array}{ll}
\alpha,  & \text{if } i\neq j,\\
\alpha + \epsilon, & \text{if } i=j,
\end{array} \right.  \text{ \ \ for all } i = 1,\ldots, N,
$$
where $j$ is set uniformly at random from $\{1,\ldots, N\}$, $\alpha <1$ and $\epsilon = \frac{1}{100}\sqrt{\frac{N\alpha}{T}}$. 
\end{definition}

{\color{black} Throughout this section we will use the terms algorithm and policy interchangeably.  An algorithm $\ep{A}$ is referred to as online if it adaptively selects a history dependent $\ep{A}_t \in \{1,\ldots,n\}$ at each time t as in \eqref{eq:selection} for the MAB problem. }

\begin{lemma}\label{lower_bound_MAB}
For any $N \geq 2$, $\alpha<1$, $T$ and any online algorithm  $\mathcal{A}$ that plays arm $\mathcal{A}_t$ at time $t$, the expected regret on instance $I_{\sf MAB}$ is at least $\displaystyle \frac{\epsilon T}{6}$. That is,
		$$Reg_{\ep{A}}(T,\pmb{\mu}):= \mathbb{E}\left[\sum_{t =1}^T (\mu_j - \mu_{\mathcal{A}_t})\right] \geq \displaystyle \frac{\epsilon T}{6},$$
where, the expectation is both over the randomization in generating the instance (value of $j$), as well as the random outcomes that result from pulled arms.% during execution of the algorithm on an instance. 
\end{lemma}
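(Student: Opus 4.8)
The plan is to prove a $\Omega(\epsilon T)$ lower bound for the parametric MAB instance $I_{\sf MAB}$ by a standard information-theoretic argument, the core of which is to show that any algorithm cannot reliably identify the ``special'' arm $j$ (the one with elevated mean $\alpha+\epsilon$) unless it has collected enough samples, and that the per-round cost of not playing $j$ is exactly $\epsilon$. Since $j$ is drawn uniformly at random and $\epsilon$ is calibrated as $\tfrac{1}{100}\sqrt{N\alpha/T}$, I expect the total number of samples $T$ to be insufficient, by roughly a factor of $N$, to distinguish the $N$ candidate hypotheses.

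First I would set up the change-of-measure / hypothesis-testing framework. Let $\mathbb{P}_j$ denote the distribution over the observed reward sequence (and the algorithm's internal randomness) under the hypothesis that arm $j$ is the special arm, and let $\mathbb{P}_0$ denote a reference ``null'' instance in which every arm has mean $\alpha$ (no special arm). The regret is $\mathbb{E}[\sum_{t=1}^T(\mu_j-\mu_{\ep{A}_t})] = \epsilon\, \mathbb{E}[\sum_{t=1}^T \mathbbm{1}(\ep{A}_t\neq j)] = \epsilon\bigl(T - \mathbb{E}[N_j(T)]\bigr)$, where $N_j(T)$ is the number of times the special arm is pulled. So it suffices to show that, averaged over the uniform choice of $j$, the expected number of pulls of the special arm is at most, say, $5T/6$, equivalently $\tfrac{1}{N}\sum_j \mathbb{E}_j[N_j(T)] \le \tfrac{5T}{6}$. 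The strategy is to bound how much $\mathbb{E}_j[N_j(T)]$ can exceed its ``null'' value $\mathbb{E}_0[N_j(T)]$; summing the latter over $j$ gives exactly $T$ (total pulls), so the uniform average of $\mathbb{E}_0[N_j(T)]$ is $T/N$, which is tiny.

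The key step is to bound $\mathbb{E}_j[N_j(T)] - \mathbb{E}_0[N_j(T)]$ using Pinsker's inequality together with a KL-divergence decomposition. Since $N_j(T)\in[0,T]$, the difference is at most $T\cdot \mathrm{TV}(\mathbb{P}_j,\mathbb{P}_0) \le T\sqrt{\tfrac12 \mathrm{KL}(\mathbb{P}_0\,\|\,\mathbb{P}_j)}$. By the chain rule for KL divergence on the reward-sequence filtration, and because the two instances differ only in the reward distribution of arm $j$ (Bernoulli$(\alpha)$ versus Bernoulli$(\alpha+\epsilon)$), the divergence factorizes as $\mathrm{KL}(\mathbb{P}_0\|\mathbb{P}_j) = \mathbb{E}_0[N_j(T)]\cdot \mathrm{KL}\bigl(\mathrm{Ber}(\alpha)\,\|\,\mathrm{Ber}(\alpha+\epsilon)\bigr)$. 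The single-sample Bernoulli divergence is $O(\epsilon^2/(\alpha(1-\alpha)))$, and here crucially $O(\epsilon^2/\alpha)$ since $\alpha<1$; plugging $\epsilon^2 = \tfrac{1}{10000}\cdot N\alpha/T$ makes $\mathrm{KL}(\mathrm{Ber}(\alpha)\|\mathrm{Ber}(\alpha+\epsilon)) = O(N/T)$. Averaging over $j$ and using $\tfrac1N\sum_j \mathbb{E}_0[N_j(T)] = T/N$, I would apply Jensen/Cauchy--Schwarz to control $\tfrac1N\sum_j \sqrt{\mathbb{E}_0[N_j(T)]}$ by $\sqrt{\tfrac1N\sum_j \mathbb{E}_0[N_j(T)]} = \sqrt{T/N}$, yielding an average excess of pulls bounded by a constant multiple of $T\cdot\sqrt{(T/N)\cdot (N/T)} \cdot (\text{const}) = O(T)$ with a small enough constant (driven by the $1/100$ factor). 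Choosing constants carefully forces $\tfrac1N\sum_j\mathbb{E}_j[N_j(T)] \le \tfrac{5T}{6}$, hence regret $\ge \epsilon T/6$.

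The main obstacle I anticipate is getting the constants and the averaging order exactly right: I must define the null instance so that the KL decomposition is clean, handle the algorithm's internal randomization $U$ without breaking the chain rule, and, most delicately, perform the average over the uniform random $j$ before applying Pinsker (so that the concavity/Cauchy--Schwarz step converts $\tfrac1N\sum_j\sqrt{\mathbb{E}_0[N_j(T)]}$ into $\sqrt{T/N}$ rather than something larger). A subtle point is ensuring $\alpha+\epsilon<1$ and that the Bernoulli divergence bound $\mathrm{KL}(\mathrm{Ber}(\alpha)\|\mathrm{Ber}(\alpha+\epsilon))\le \tfrac{\epsilon^2}{\alpha(1-\alpha-\epsilon)}$ (or a comparable form) holds in the relevant regime; the generous $\tfrac{1}{100}$ constant in $\epsilon$ is precisely what gives slack to absorb these estimates and drive the final bound below the $\tfrac56$ threshold.
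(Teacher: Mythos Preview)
Your proposal is correct and rests on the same information-theoretic ingredients as the paper: a null instance $\mathbb{P}_0$ with all arms at mean $\alpha$, the chain-rule decomposition $\mathrm{KL}(\mathbb{P}_0\|\mathbb{P}_j)=\mathbb{E}_0[N_j(T)]\cdot\mathrm{KL}(\mathrm{Ber}(\alpha)\|\mathrm{Ber}(\alpha+\epsilon))$, the Bernoulli KL bound of order $\epsilon^2/\alpha$, and Pinsker's inequality. The packaging differs, however. The paper argues \emph{per time step}: for each $t$ it uses a counting argument under $\mathbb{P}_0$ to exhibit a set $J_t$ of at least $N/3$ arms with $\mathbb{E}_0[N_j(t)]\le 3t/N$ and $\mathbb{P}_0(a_t=j)\le 3/N$, then applies Pinsker to the single event $\{a_t=j\}$ to get $\mathbb{P}_j(a_t=j)\le 1/2$ for $j\in J_t$, and finally averages over the uniform location of the special arm to obtain the $\epsilon T/6$ bound. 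You instead work with the \emph{aggregate} pull count $N_j(T)$, apply Pinsker once to the full trajectory distribution, and average over $j$ via Cauchy--Schwarz on $\sum_j\sqrt{\mathbb{E}_0[N_j(T)]}\le\sqrt{NT}$. Your route is arguably cleaner and more in line with the modern textbook presentation; the paper's per-timestep ``hard set'' argument is slightly more hands-on but yields the same constant with the same slack from the $1/100$ factor. Either way the constants go through comfortably.
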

The proof of Lemma \ref{lower_bound_MAB} is a simple extension of the proof of the $\Omega(\sqrt{NT})$ lower bound for the Bernoulli instance with parameters $\frac{1}{2}$ and $\frac{1}{2} + \epsilon$ (for example, see \citealp{MAL-024}), and has been provided in Appendix \ref{lower_bound_parametric_mab} for the sake of completeness. We use the above lower bound for the MAB problem to prove that any algorithm must incur at least $\Omega(\sqrt{NT/K})$ regret on the following instance of the \banditMNL~problem.

\begin{definition}[{{\banditMNL}}~instance $I_{\sf MNL}$] Define $I_{\sf MNL}$ as the following (randomized) instance of \banditMNL~problem with $K$-cardinality constraint, $\hat{N} = NK$ products, %where the parameters $v_0,v_1,\ldots, v_{\hat{N}}$ be defined as, 
 parameters $v_0 = K$ and for $i=1,\ldots, \hat{N}$,
$$
v_i = \left\{ 
\begin{array}{ll}
\alpha, & \text{ if } {\lceil}{\frac{i}{K}}{\rceil} \neq j, \\ 
\alpha + \epsilon, & \text{ if } {\lceil}{\frac{i}{K}}{\rceil} =j,
\end{array}\right.
$$
where $j$ is set uniformly at random from $\{1,\ldots,N\}$, $\alpha<1$, and $\epsilon=\frac{1}{100}\sqrt{\frac{N\alpha}{T}}$ and $r_i = 1.$
\end{definition}
%Let $I_{\sf MNL}$ denote the above instance of the  problem, and $I_{\sf MAB}$ denote the instance of  MAB problem defined in \mbox{Lemma \ref{lower_bound_MAB}}.  

\noindent We will show that any \banditMNL~algorithm has to incur a regret of $\Omega\left(\sqrt{\frac{NT}{K}}\right)$ on instance $I_{\sf MNL}$. The main idea in our reduction is to show that if there exists an algorithm $\ep{A}_{\sf MNL}$ for \banditMNL~that achieves $o(\sqrt{\frac{NT}{K}})$ regret on instance $I_{\sf MNL}$, then we can use $\ep{A}_{\sf MNL}$ as a subroutine to construct an algorithm $\ep{A}_{\sf MAB}$ for the MAB problem that achieves strictly less than $\frac{\epsilon T}{6}$ regret on instance $I_{\sf MAB}$ in time $T$, thus contradicting the lower bound of Lemma \ref{lower_bound_MAB}. This will prove Theorem \ref{lower_bound} by contradiction.

\subsection{Construction of the MAB algorithm using the MNL algorithm}

\begin{algorithm}
\begin{algorithmic}[1]
\State \textbf{Initialization:} $t=0$, $\ell = 0$
\While{$t \leq T$}

	\State Update $\ell=\ell+1$
		
	\State {\bf Call} \mbox{$\ep{A}_{\sf MNL}$}, receive assortment $S_\ell \subset [\hat{N}]$
		
	\State{\bf Repeat until `exit loop'} 
	
\State 	\hspace{0.5cm} With probability $\frac{1}{2}$, send {\bf Feedback to} $\ep{A}_{\sf MNL}$ `no product was purchased', {\bf exit loop}
			\State  \hspace{0.5cm}  Update $t=t+1$
			
	 \State  		{\hspace{0.5cm} With probability $\frac{1}{2K}$, {\bf pull} arm ${\cal A}_t=\lceil{\frac{i}{K}\rceil}$, where $i\in S_{\ell}$} 			
	 \State  		{\hspace{0.5cm} With probability $\frac{1}{2} - \frac{|S_\ell|}{2K}$, {\bf continue the loop} (go to Step-5)}

			\State \hspace{0.5cm} If reward is $1$, send {\bf Feedback to} $\ep{A}_{\sf MNL}$ `$i$ was purchased' and {\bf exit loop}
						
  \State {\bf end loop}	
\EndWhile
\end{algorithmic}
\caption{Algorithm $\ep{A}_{\sf MAB}$}
\label{mab_algo}
\end{algorithm}
\newcommand{\amnl}{$\ep{A}_{\sf MNL}$~}
\newcommand{\amab}{$\ep{A}_{\sf MAB}$~}

Algorithm \ref{mab_algo} provides the exact construction of $\ep{A}_{\sf MAB}$, which simulates the $\ep{A}_{\sf MNL}$ algorithm as a ``black-box.'' Note that \amab pulls arms at time steps $t=1,\ldots, T$. These arm pulls are interleaved by simulations of \amnl steps ({\bf Call} \amnl, {\bf Feedback to} \amnl). When step $\ell$ of \amnl is simulated, it uses the feedback from $1,\ldots, \ell-1$ to suggest an assortment $S_{\ell}$; and recalls a feedback from \amab on which product (or no product) was purchased out of those offered in $S_\ell$, where the probability of purchase of product $i\in S_{\ell}$ is ${v_i}{\big /}{(v_0+\sum_{i\in S_\ell}v_i})$. Before showing that the \amab indeed provides the right feedback to \amnl in the $\ell^{th}$ step for each $\ell$, we introduce some notation.

Let $M_\ell$ denote the length of the loop at step $\ell$, more specifically, the cumulative number of times, \amnl was executing steps 6, 8 or 9 in the $\ell^{th}$ step before exiting the loop. For every $i \in S_\ell \cup 0$, let ${\zeta^i_\ell}$ denote the event that the feedback to \amnl from \amab after step $\ell$ of \amnl is ``product $i$ is purchased''. We have, 

$$ \mathcal{P}(M_\ell=m \; \cap\; \zeta^i_\ell) = \frac{v_i}{2K}\left( \frac{1}{2K}\sum_{i \in S_\ell} (1-v_i) {+ \frac{1}{2} - \frac{|S_\ell|}{2K}}\right)^{m-1}\;\text{for each $i\in S_\ell \cup \{0\}$}. $$
Hence, the probability that \amab's feedback to \amnl is  ``product $i$ is purchased'' is,  
\begin{equation*}
\begin{aligned}
{p}_{S_\ell}(i) &= \sum_{m=1}^{\infty} \mathcal{P}(M_\ell=m \; \cap\; \zeta^i_\ell)= \displaystyle \frac{v_i}{v_0+\sum_{q \in S_\ell} v_q}.
\end{aligned}
\end{equation*}
This establish that \amab provides the appropriate feedback to \amnl. 

\subsection{ Proof of Theorem \ref{lower_bound}}
We prove the result by establishing three key results. First, we upper bound the regret for the MAB algorithm, \amab. Then, we prove a lower bound on the regret for the MNL algorithm, \amnl. Finally, we relate the regret of \amab and \amnl and use the established lower and upper bounds to show a contradiction. 

\noindent For the rest of this proof, assume that $L$ is the total number of calls to \amnl in \amab. Let $S^*$ be the optimal assortment for $I_{\sf MNL}$. For any instantiation of $I_{\sf MNL}$, it is easy to see that the optimal assortment contains $K$ items, all with parameter $\alpha+\epsilon$, i.e., it contains all $i$ such that $\lceil \frac{i}{K}\rceil=j$. Therefore, $V(S^*)= K(\alpha+\epsilon) = K\mu_j$. Note that if an algorithm offers an assortment, $S_\ell$, such that $|S_\ell| < K$, then we can improve the regret incurred by this algorithm for the \banditMNL\;instance $I_{\sf MNL}$ by offering an assortment $\hat{S}_\ell = S_\ell \cup \{i\}$ for some $i \not \in S_\ell.$  Since our focus is on lower bounding the regret, we will assume, without loss of generality, that $|S_\ell| = K$ for the rest of this section.

\medskip \noindent {\bf Upper bound for the regret of the MAB algorithm.} %Let $L$ be the total number of calls to \amnl in \amab. Intuitvely, after any call  to \amnl (``{\bf Call} \amnl" in  Algorithm \ref{mab_algo}), many iterations of the following loop may be executed; in roughly $1/2$ of those iterations, an arm is pulled and $t$ is advanced (with probability $1/2$, the loop is exited without advancing $t$). Therefore, $T$ should be at least a constant fraction of $L$. The following result makes this intuition precise, we provide the proof in Appendix \ref{lower_bound_parametric_mab}. 
The first step in our analysis is to prove an upper bound on the regret of the MAB algorithm, \amab on the instance $I_{\sf MAB}$. Let us label the loop following the $\ell$th call to \amnl in Algorithm \ref{mab_algo} as $\ell$th loop. Note that the probability of exiting the loop is \mbox{$p =E[\frac{1}{2}+\frac{1}{2}\mu_{{\cal A}_t}]$} $= \frac{1}{2}+\frac{1}{2K} V(S_\ell),$ where $V(S_\ell) \overset{\Delta}{=} \sum_{i \in S_\ell} v_i$. In every step of the loop until exited, an arm is pulled with probability $1/2$. The optimal strategy would pull the best arm so that the total expected optimal reward in the loop is \mbox{$\sum_{r=1}^\infty (1-p)^{r-1} \frac{1}{2} \mu_j = \frac{\mu_j}{2p} = \frac{1}{2Kp}V(S^*)$}. Algorithm \amab pulls a random arm from $S_{\ell}$, so total expected algorithm's reward in the loop is $\sum_{r=1}^\infty (1-p)^{r-1} \frac{1}{2K} V(S_\ell) = \frac{1}{2Kp} V(S_\ell)$. Subtracting the algorithm's reward from the optimal reward, and substituting $p$, we obtain that the total expected regret of \amab over the arm pulls in loop $\ell$ is   
$$ \frac{V(S^*) - V(S_\ell)}{(K+ V(S_\ell))}.$$ Noting that $V(S_\ell) \geq K\alpha$, we have the following upper bound on the regret for the MAB algorithm.
\begin{equation}\label{eq:MAB_ub} Reg_{\text{\amab}}(T, \pmb{\mu}) \le \frac{1}{(1+{\alpha})} \mathbb{E}\left(\sum_{\ell=1}^L \frac{1}{K} (V(S^*) - V(S_\ell))\right),\end{equation} where the expectation in equation \eqref{eq:MAB_ub} is over the random variables $L$ and $S_\ell$.

\medskip \noindent {\bf Lower bound for the regret of the MNL algorithm.} 
Here, we derive a lower bound on the  regret of the MNL algorithm, \amnl on the instance $I_{\sf MNL}$.  Specifically,
\begin{eqnarray*} 
 Reg_{\text{\amnl}}(L, \mb{v}) & = & \mathbb{E}\left[\sum_{\ell=1}^L \frac{V(S^*)}{v_0+V(S^*)} - \frac{V(S_\ell)}{v_0+V(S_\ell)}\right]\\
& \ge & \frac{1}{K(1+\alpha)} \mathbb{E}\left[\sum_{\ell=1}^L \left(\frac{V(S^*)}{1+\frac{\epsilon}{1+\alpha}} - V(S_\ell)\right)\right].
\end{eqnarray*} 
Therefore, it follows that, 
\begin{equation}\label{eq:MNL_lb}
  Reg_{\text{\amnl}}(L, \mb{v}) \ge  \frac{1}{(1+\alpha)} \mathbb{E}\left[\sum_{\ell=1}^L \frac{1}{K} (V(S^*) - V(S_\ell)) - \frac{\epsilon v^* L}{(1+\alpha)^2}\right],
  \end{equation}
where {$v^* = \alpha + \epsilon$} and the expectation in equation \eqref{eq:MNL_lb} is over the random variables $L$ and $S_\ell$.  

\medskip \noindent {\bf Relating the regret of the MNL algorithm and the MAB algorithm.} Finally, we relate the regret of the MNL algorithm \amnl and MAB algorithm \amab to derive a contradiction.  The first step in relating the regret involves relating the length of the horizons of \amnl and $\ep{A}_{\sf MAB},$ $L$ and $T$ respectively. {Note that, after every call}  to \amnl (``{\bf Call} $\ep{A}_{\sf MNL}$" in  Algorithm \ref{mab_algo}), many iterations of the following loop may be executed; in roughly $1/2$ of those iterations, an arm is pulled and $t$ is advanced (with probability $1/2$, the loop is exited without advancing $t$). Therefore, $T$ should be at least a constant fraction of $L$. Lemma \ref{lemmaTL} in Appendix \ref{lower_bound_parametric_mab} makes this precise by showing that $\mathbb{E}(L) \leq 3T$. 

Now we are ready to prove Theorem \ref{lower_bound}. From \eqref{eq:MAB_ub} and \eqref{eq:MNL_lb}, we have 
\begin{equation*} Reg_{\text{\amab}}(T, \pmb{\mu}) \le \mathbb{E}\left(Reg_{\text{\amnl}}(L, \mb{v}) +  \frac{\epsilon v^* L}{(1+\alpha)^2}\right).\end{equation*}  
For the sake of contradiction, suppose that the regret of the \text{\amnl}, $ Reg_{\text{\amnl}}(L, \mb{v}) \le c\sqrt{\frac{\hat{N}L}{K}}$ for a constant $c$ to be prescribed below. Then, from Jensen's inequality, it follows that, 
\begin{eqnarray*}
Reg_{\text{\amab}}(T, \pmb{\mu}) & \le &  c\sqrt{\frac{\hat{N}\mathbb{E}(L)}{K}} +  \frac{\epsilon v^* \mathbb{E}(L)}{(1+\alpha)^2}.
\end{eqnarray*}
From lemma \ref{lemmaTL}, we have that $\mathbb{E}(L) \leq 3T$. Therefore, we have, $c\sqrt{\frac{\hat{N}\mathbb{E}(L)}{K}} = c\sqrt{N\mathbb{E}(L)} \le  c\sqrt{3NT} = c\epsilon T \sqrt{\frac{3}{\alpha}} < \frac{\epsilon T}{12}$ on setting $c< \frac{1}{12}\sqrt{\frac{\alpha}{3}} $. Also, using $v^*=\alpha+\epsilon \le 2\alpha$, and setting $\alpha$ to be a small enough constant, we can get that the second term above is also strictly less than $\frac{\epsilon T}{12}$. Combining these observations, we have \vspace{-0.3cm}
$$ \textstyle Reg_{\text{\amab}}(T, \pmb{\mu})  <  \frac{\epsilon T}{12} + \frac{\epsilon T}{12} = \frac{\epsilon T}{6},  \vspace{-0.3cm}	$$
thus arriving at a contradiction. \hfill $\halmos$

\section{Extensions}\label{sec:extn}
\change{In this section, we consider two extensions of the \banditMNL\;problem. In the first extension, we consider problem instances that are ``well separated'' and present an improved logarithmic regret bound. We will then consider a setting where the ``no purchase'' assumption ($v_i \leq v_0$ for all $i$) is relaxed and present a modified algorithm that works for more general class of MNL parameters and establish $\tilde{O}(\sqrt{BNT})$ regret bounds.}
\subsection{{Improved regret bounds for ``well-separated'' instances}}\label{sec:parameter_dependent_bound}
In this section, we derive an $O(\log{T})$ regret bound for Algorithm \ref{learn_algo} for instances that are ``well separated.''  In Section \ref{sec:regretAnalysis}, we established worst case regret bounds for Algorithm \ref{learn_algo} that hold for all problem instances satisfying Assumption \ref{assumption1}. Although our algorithm ensures that the exploration-exploitation tradeoff is balanced at all times, for problem instances that are ``well separated,'' our algorithm quickly converges to the optimal solution leading to better regret bounds.  More specifically, we consider problem instances where the optimal assortment and ``second best'' assortment are sufficiently ``separated'' and derive a $O(\log{T})$ regret bound that depends on the parameters of the instance. Note that, unlike the regret bound derived in Section \ref{sec:regretAnalysis} that holds for all problem instances satisfying Assumption \ref{assumption1}, the bound we derive here only holds for instances having certain separation between the revenues corresponding to optimal and second best assortments. In particular, let $\Delta(\mb{v})$ denote the difference between the expected revenues of the optimal and second-best assortment, i.e., 
\begin{equation}\label{delta_v}
\Delta (\mathbf{v}) \; = \underset{\{S \in \mathcal{S} | R(S,\mb{v}) \neq R(S^*,\mb{v})\}}{\min} \{R(S^*,\mb{v})-R(S)\}.
\end{equation} %\change{ For ease of exposition, we focus on the setting where the set of feasible assortments is cardinality constrained, specifically, $$\ep{S} = \left\{S \subset \{1,\cdots,N\} \Big| |S| \leq K \right\}.$$} We have the following result.
We have the following result.
\begin{theorem1}[Performance Bounds for Algorithm \ref{learn_algo} in ``well separated'' case]\label{assymptotic_bound}
{For any instance $\mb{v} = (v_0, \ldots, v_N)$ of the \banditMNL~problem with $N$ products,  $r_i\in [0,1]$ and Assumption \ref{assumption1}, the regret of the policy given by Algorithm~\ref{learn_algo} at any time $T$ is bounded as,
\begin{equation*} 
Reg_\pi(T,\mb{v}) \leq B_1\left(\frac{N^2\log{T}}{\Delta(\mb{v})}\right) + B_2,
\end{equation*}
where $B_1$ and $B_2$ are absolute constants.}
\end{theorem1}
%\change{It should be noted that the bound obtained in Theorem \ref{assymptotic_bound} is better than the $O(N^2 \log^2{T})$ regret bound established by \cite{rusme}  and similar to the $O(N\log{T})$ obtained by \cite{Saure}. Note that the setting in \cite{Saure} considers $K$ as constant and ignores the exact dependence of the regret on $K$ .} Furthermore, our algorithm does not require knowledge of the parameter $\Delta(\mb{v})$ unlike existing work which requires a conservative estimate of $\Delta(\mb{v})$ to implement a policy. parameter_bound_cardinality}
%It should be noted that the bound obtained in Theorem \ref{assymptotic_bound} is similar in magnitude to the regret bounds obtained by \cite{Saure} with respect to the horizon length ($O(\log{T})$) and our algorithm has improved regret bounds compared to the $O(N^2 \log^2{T})$ bound established by \cite{rusme}.} It is also important to note that our algorithm does not require knowledge of the parameter $\Delta(\mb{v})$ unlike the existing work which requires a conservative estimate of $\Delta(\mb{v})$ to implement a policy.
\medskip \noindent {\bf Proof outline.}
In this setting, we analyze the regret by separately considering the epochs that satisfy certain desirable properties and the ones that do not. Specifically, we denote epoch $\ell$ as a ``good'' epoch if the parameters $v^{\sf UCB}_{i,\ell}$ satisfy the following property, 
$$0\leq v^{\sf UCB}_{i,\ell} -v_i \leq   C_1 \sqrt{\frac{v_i\log{({\sqrt{N}}\ell+1)}}{T_i(\ell)}} + C_2\frac{\log{({\sqrt{N}}\ell+1)}}{T_i(\ell)}, $$ and we call it a ``bad'' epoch otherwise, where $C_1$ and $C_2$ are constants as defined in Lemma \ref{lem:UCBv}. Note that every epoch $\ell$ is a good epoch with high probability $(1-\frac{13}{\ell})$ and we show that regret due to ``bad'' epochs is bounded by a constant (see Appendix \ref{parametric_bounds}).  Therefore, we focus on ``good'' epochs and show that there exists a constant $\tau$, such that after each product has been offered in at least $\tau$ ``good'' epochs, Algorithm \ref{learn_algo} finds the optimal assortment. Based on this result, we can then bound the total number of ``good'' epochs in which a sub-optimal assortment can be offered by our algorithm. Specifically, let 
\begin{equation}\label{eq:tau}
\tau = \frac{4{N}C\log{{N}T}}{\Delta^2(\mb{v})},
\end{equation} 
where $C = \max\{{C^2_1},C_2\}$. Then we have the following result. 

\begin{lemma}\label{sensitivity_analysis}
Let $\ell$ be a ``good'' epoch and $S_\ell$ be the assortment offered by Algorithm \ref{learn_algo} in epoch $\ell$. If every product in assortment $S_\ell$ is offered in at least $\tau$ ``good epochs,'' i.e. $T_i(\ell) \geq \tau \;\; \text{for all} \;\; i,$  then we have $R(S_\ell, \mb{v}) = R(S^*, \mb{v})$ .
\end{lemma}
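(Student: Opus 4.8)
The plan is to argue by contradiction: assume $R(S_\ell,\mb{v}) \neq R(S^*,\mb{v})$ and show that the optimism of the estimate $\tilde{R}_\ell(S_\ell)$, combined with the good‑epoch error bounds and the assumption $T_i(\ell)\ge\tau$, forces the revenue gap to be strictly smaller than $\Delta(\mb{v})$, which is impossible by the definition of the separation gap. First I would record the consequences of $\ell$ being a good epoch. Goodness gives $v^{\sf UCB}_{i,\ell} \ge v_i$ for every $i$ deterministically (this inequality is part of the good‑epoch condition), so the monotonicity/optimism argument behind Lemma \ref{lem:UCBS1} applies verbatim and yields $\tilde{R}_\ell(S_\ell) \ge \tilde{R}_\ell(S^*) \ge R(S^*,\mb{v})$. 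Now suppose $R(S_\ell,\mb{v}) \neq R(S^*,\mb{v})$; by the definition of the separation gap in \eqref{delta_v} this means $R(S^*,\mb{v}) - R(S_\ell,\mb{v}) \ge \Delta(\mb{v})$. Chaining with optimism gives
$$\Delta(\mb{v}) \le R(S^*,\mb{v}) - R(S_\ell,\mb{v}) \le \tilde{R}_\ell(S_\ell) - R(S_\ell,\mb{v}),$$
so it suffices to show the right‑hand side is strictly below $\Delta(\mb{v})$.

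Next I would bound $\tilde{R}_\ell(S_\ell) - R(S_\ell,\mb{v})$ using the Lipschitz property of the revenue function (Lemma \ref{lipschitz}) underlying Lemma \ref{lem:UCBS2}, which controls this difference by the aggregated per‑product errors,
$$\tilde{R}_\ell(S_\ell) - R(S_\ell,\mb{v}) \le \frac{1}{1+\sum_{j\in S_\ell} v_j}\sum_{i\in S_\ell}\big(v^{\sf UCB}_{i,\ell}-v_i\big).$$
Since $\ell$ is good and $T_i(\ell)\ge\tau$ for every $i\in S_\ell$, and since $\log(\sqrt{N}\ell+1)\le\log NT$ (as $\ell\le L\le T$), the good‑epoch bound gives $v^{\sf UCB}_{i,\ell}-v_i \le C_1\sqrt{v_i\log(NT)/\tau}+C_2\log(NT)/\tau$. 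Substituting $\tau = 4NC\log(NT)/\Delta^2(\mb{v})$ with $C=\max\{C_1^2,C_2\}$ collapses these into $v^{\sf UCB}_{i,\ell}-v_i \le \tfrac{C_1}{\sqrt C}\cdot\tfrac{\Delta(\mb{v})}{2}\sqrt{v_i/N}+\tfrac{C_2}{C}\cdot\tfrac{\Delta^2(\mb{v})}{4N}$.

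The crux — and the step I expect to be the main obstacle — is cancelling the apparent factor of $N$ coming from summing $|S_\ell|\le N$ per‑product terms; a naive bound would only yield $O(\sqrt N\,\Delta(\mb{v}))$, which is useless. The resolution is to exploit the denominator $1+\sum_{j\in S_\ell}v_j$. Writing $V(S_\ell)=\sum_{j\in S_\ell}v_j$, for the square‑root terms Cauchy--Schwarz gives $\sum_{i\in S_\ell}\sqrt{v_i}\le \sqrt{|S_\ell|\,V(S_\ell)}\le\sqrt{N\,V(S_\ell)}$, and then $\tfrac{\sqrt{V(S_\ell)}}{1+V(S_\ell)}\le \tfrac12$, so the total square‑root contribution is at most $\tfrac{C_1}{\sqrt C}\cdot\tfrac{\Delta(\mb{v})}{4}\le \tfrac{\Delta(\mb{v})}{4}$. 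For the linear terms, $|S_\ell|\le N$ cancels the $N$ in $\tau$ directly, giving a contribution at most $\tfrac{C_2}{C}\cdot\tfrac{\Delta^2(\mb{v})}{4}\le\tfrac{\Delta(\mb{v})}{4}$, using $\Delta(\mb{v})\le 1$. Adding the two yields $\tilde{R}_\ell(S_\ell)-R(S_\ell,\mb{v})\le \tfrac{\Delta(\mb{v})}{2}<\Delta(\mb{v})$, contradicting the displayed inequality. Hence $R(S_\ell,\mb{v})=R(S^*,\mb{v})$, which is exactly the claim; note that this choice of $\tau$ is precisely calibrated so that the $\sqrt{N}$ gained from Cauchy--Schwarz is offset by the $1/\sqrt{N}$ in each square‑root error term.
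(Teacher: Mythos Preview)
Your proposal is correct and follows essentially the same approach as the paper: both arguments use the good-epoch conditions to invoke optimism ($\tilde{R}_\ell(S_\ell)\ge R(S^*,\mb{v})$) and the Lipschitz bound (Lemma~\ref{lipschitz}), then control the aggregated per-product errors via Cauchy--Schwarz on $\sum_{i\in S_\ell}\sqrt{v_i}\le\sqrt{|S_\ell|\,V(S_\ell)}$ together with the denominator $1+V(S_\ell)$ so that the choice of $\tau$ forces the gap strictly below $\Delta(\mb{v})$. The only cosmetic differences are that the paper argues directly (obtaining $R(S^*,\mb{v})-R(S_\ell,\mb{v})\le\tfrac{3}{4}\Delta(\mb{v})$ via $\sqrt{V}\le 1+V$) rather than by contradiction, whereas you use the equivalent AM--GM form $\tfrac{\sqrt{V}}{1+V}\le\tfrac12$ to get the slightly tighter constant $\tfrac12\Delta(\mb{v})$.
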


We prove Lemma \ref{sensitivity_analysis} in Appendix \ref{parametric_bounds}. The next step in the analysis is to show that Algorithm \ref{learn_algo} will offer a small number of sub-optimal assortments in ``good'' epochs. We make this precise in the following observation whose proof amounts to a simple counting exercise using Lemma \ref{sensitivity_analysis} (see full proof in Appendix \ref{parametric_bounds}.) 

\begin{lemma}\label{no_sub_opt}
%Algorithm \ref{learn_algo} cannot offer sub-optimal assortment in more than $\frac{N(N-1)}{2}\tau$ good epochs.
{Algorithm \ref{learn_algo} cannot offer sub-optimal assortments in more than ${N}\tau$ ``good'' epochs.}
\end{lemma}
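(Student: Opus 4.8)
The plan is to prove Lemma \ref{no_sub_opt} as a direct counting argument built on the contrapositive of Lemma \ref{sensitivity_analysis}. First I would record that contrapositive: if $\ell$ is a ``good'' epoch in which the algorithm offers a sub-optimal assortment $S_\ell$ (i.e.\ $R(S_\ell,\mb{v}) \neq R(S^*,\mb{v})$), then by Lemma \ref{sensitivity_analysis} it cannot be the case that every product $i\in S_\ell$ has been offered in at least $\tau$ ``good'' epochs. Hence there must exist at least one \emph{witness} product $i\in S_\ell$ with $T_i(\ell) < \tau$, where $T_i(\ell)$ is read as the number of ``good'' epochs up to (and including) $\ell$ in which product $i$ was offered.

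Next I would set up a charging scheme: to each ``good'' epoch $\ell$ that offers a sub-optimal assortment, assign one such witness product $i(\ell)\in S_\ell$ satisfying $T_{i(\ell)}(\ell) < \tau$. The crux is to bound, for a fixed product $i$, the number of sub-optimal ``good'' epochs that can be charged to it. Each such epoch is a distinct ``good'' epoch in which $i$ is offered, so $T_i(\cdot)$ strictly increases (by one) across these epochs; since charging requires $T_i(\ell) < \tau$ at the moment of charging, the product $i$ can serve as a witness in at most $\tau$ ``good'' epochs.

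Finally I would sum over the $N$ products. Every ``good'' epoch in which a sub-optimal assortment is offered is charged to exactly one of the $N$ products, and each product absorbs at most $\tau$ charges, so the total number of such ``good'' epochs is at most $N\tau$, which is precisely the claim.

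The main obstacle is bookkeeping rather than any genuine difficulty: I must ensure that the quantity entering Lemma \ref{sensitivity_analysis} is consistently the ``good''-epoch count for each product, and that the monotonicity of $T_i(\cdot)$ over the ``good'' epochs offering $i$ is invoked correctly. In particular, whether the current epoch is included in the count only shifts the per-product bound between $\tau-1$ and $\tau$, and in either case the aggregate bound $N\tau$ holds.
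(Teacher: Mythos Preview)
Your charging argument is correct and considerably cleaner than the paper's proof. The paper proceeds by induction on $N$: it orders the products by the epoch $\ell_i$ at which product $i$ first reaches $\tau$ offerings, partitions the ``good'' sub-optimal epochs according to whether they occur before $\ell_{N-1}$ and whether they contain product $N$, and then invokes the inductive hypothesis on the at-most-$(N-1)$-product subproblem together with a separate count on epochs containing product $N$. Your approach bypasses all of this structure by directly charging each sub-optimal good epoch to a witness product with $T_i(\ell)<\tau$ and observing that each product can absorb at most $\tau$ such charges; this yields the same $N\tau$ bound with no induction and no case analysis. One bookkeeping remark: in the paper $T_i(\ell)$ is defined in \eqref{def:Til} as the count over \emph{all} epochs, not just good ones, and the ``i.e.\ $T_i(\ell)\ge\tau$'' in Lemma~\ref{sensitivity_analysis} should be read with that definition (the phrase ``good epochs'' in its statement is loose). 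Your argument actually works more smoothly under this reading, since every charged epoch has $i\in S_\ell$ and hence increments $T_i$, so no reinterpretation is needed.
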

The proof for Theorem \ref{assymptotic_bound} follows from the above result. In particular, noting that the number of epochs in which sub-optimal assortment is offered is small, we re-use the regret analysis of Section \ref{sec:regretAnalysis} to bound the regret by {$O(N^2\log{T})$}. We provide a rigorous proof in Appendix \ref{parametric_bounds} for the sake of completeness. {Note that for the special case of cardinality constraints, we have $|S_\ell| \leq K$ for every epoch $\ell$. By modifying the definition of $\tau$ in \eqref{eq:tau} to $\tau = {4{K}C\log{{N}T}}/{\Delta^2(\mb{v})}$ and following the above analysis, we can improve the regret bound to $O(NK\log{T})$ for this case. Specifically, we have the following.
\begin{corollary}[Performance bounds in well separated case under cardinality constraints]\label{parameter_bound_cardinality}
For any instance $\mb{v} = (v_0, \ldots, v_N)$ of the \banditMNL~problem with $N$ products and cardinality constraint $K$,  $r_i\in [0,1]$ and $v_0 \geq v_i$ for all $i,$ the regret of the policy given by Algorithm~\ref{learn_algo} at any time $T$ is bounded as,
\begin{equation*} 
Reg_\pi(T,\mb{v}) \leq B_1\frac{NK\log{{N}T}}{\Delta(\mb{v})} + B_2,
\end{equation*}
where, $B_1$ and $B_2$ are absolute constants and $\Delta(\mb{v})$ is defined in \eqref{delta_v}.
\end{corollary}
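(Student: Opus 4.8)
The plan is to follow the proof of Theorem~\ref{assymptotic_bound} essentially line by line, and to isolate the single place where the cardinality structure is used to sharpen the threshold. The only structural difference from the general TU setting is that every assortment $S_\ell$ produced by Algorithm~\ref{learn_algo} now satisfies $|S_\ell|\le K$. As in that proof, I would first split the epochs into ``good'' ones---those on which $0\le v^{\sf UCB}_{i,\ell}-v_i\le C_1\sqrt{v_i\log(\sqrt{N}\ell+1)/T_i(\ell)}+C_2\log(\sqrt{N}\ell+1)/T_i(\ell)$ holds for all offered $i$---and ``bad'' ones. Since each epoch is good with probability at least $1-13/\ell$ (Lemma~\ref{lem:UCBS2}), the regret accumulated on bad epochs is absorbed into the absolute constant $B_2$ by exactly the argument used for Theorem~\ref{assymptotic_bound}; no change is needed here, so I would focus entirely on the good epochs.

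The key step is a cardinality-aware version of the sensitivity lemma, Lemma~\ref{sensitivity_analysis}, with the redefined threshold $\tau=4KC\log{NT}/\Delta^2(\mb{v})$ and $C=\max\{C_1^2,C_2\}$. I would re-run its proof: on a good epoch, optimism (Lemma~\ref{lem:UCBS1}) gives $R(S^*,\mb{v})-R(S_\ell,\mb{v})\le \tilde R_\ell(S_\ell)-R(S_\ell,\mb{v})$, and Lemma~\ref{lem:UCBS2} bounds the right-hand side by an aggregate of the per-product errors $|v^{\sf UCB}_{i,\ell}-v_i|$ taken over $i\in S_\ell$. The decisive observation is that this aggregate ranges only over the offered set, which now has at most $K$ elements; consequently, once $T_i(\ell)\ge\tau$ for every $i\in S_\ell$, the bound drops strictly below $\Delta(\mb{v})$ with $\tau$ growing linearly in $K$ rather than in $N$. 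By the definition~\eqref{delta_v} of $\Delta(\mb{v})$, a nonzero revenue gap smaller than $\Delta(\mb{v})$ is impossible, so $R(S_\ell,\mb{v})=R(S^*,\mb{v})$, i.e.\ $S_\ell$ is optimal.

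Armed with this, the counting Lemma~\ref{no_sub_opt} carries over verbatim: a sub-optimal good epoch must offer at least one product that has appeared in fewer than $\tau$ good epochs, and each of the $N$ products can be under-sampled in this way in at most $\tau$ good epochs, so at most $N\tau=4NKC\log{NT}/\Delta^2(\mb{v})$ good epochs are sub-optimal. To finish, I would reuse the per-epoch regret accounting of Section~\ref{sec:regretAnalysis}, but sum it only over these sub-optimal good epochs (optimal good epochs contribute nothing). Applying the same Cauchy--Schwarz aggregation as in Section~\ref{sec:regretAnalysis}, now with $|S_\ell|\le K$ and with the number of relevant epochs at most $N\tau$, collapses the telescoping sums $\sum 1/\sqrt{T_i(\ell)}$ and yields a total of order $NK\log{NT}/\Delta(\mb{v})$, which together with $B_2$ gives the claimed bound.

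I expect the main obstacle to be the re-derivation of Lemma~\ref{sensitivity_analysis} with a threshold linear in $K$: one must check that aggregating both error terms of Lemma~\ref{lem:UCBS2} over the at-most-$K$ offered products and balancing the dominant $\sqrt{\log{NT}/\tau}$ contribution against $\Delta(\mb{v})$ indeed closes at $\tau\propto K$ (and not $\tau\propto K^2$). This requires exploiting the normalization $1+\sum_{j\in S_\ell}v_j$ in the denominator of $\tilde R_\ell(S_\ell)$, so that the per-product errors enter weighted by $v_i/(1+\sum_{j\in S_\ell} v_j)$ rather than purely additively; getting these weights to produce an absolute constant $B_1$ is the only genuinely delicate computation, and everything else is inherited unchanged from the proof of Theorem~\ref{assymptotic_bound}.
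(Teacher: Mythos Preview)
Your proposal is correct and matches the paper's approach exactly: the paper simply states that one redefines $\tau = 4KC\log{NT}/\Delta^2(\mb{v})$ and re-runs the proof of Theorem~\ref{assymptotic_bound}, and you have correctly unpacked where the cardinality enters---namely in the Cauchy--Schwarz step of Lemma~\ref{sensitivity_analysis}, where $\sum_{i\in S_\ell}\sqrt{v_i}\le \sqrt{|S_\ell|\,V(S_\ell)}\le \sqrt{K}(V(S_\ell)+1)$ replaces the $\sqrt{N}$ factor, and in the final aggregation where $|S_\ell|\le K$ gives $\sum_i T_i\le K\cdot N\tau$. Your identification of the normalization $1+\sum_{j\in S_\ell}v_j$ as the mechanism that keeps the threshold linear in $K$ is precisely the point of that Cauchy--Schwarz argument.
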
}
{It should be noted that the bound obtained in Corollary \ref{parameter_bound_cardinality} is similar in magnitude to the regret bounds obtained by \cite{Saure}, when $K$ is assumed to be fixed, and is strictly better than the regret bound $O(N^2\log^2{T})$ established by \cite{rusme}. Moreover, our algorithm does not require the knowledge of $\Delta(\mb{v}),$ unlike the aforementioned papers which build on a conservative estimate of $\Delta(\mb{v})$ to implement their proposed policies.}

\subsection{Relaxing the ``no purchase'' assumption}\label{sec:modified_policy}
In this section, we extend our approach (Algorithm \ref{learn_algo}) to the setting where the assumption that $v_i \leq v_0$ for all $i$ is relaxed. The essential ideas in the extension remain the same as our earlier approach, specifically optimism under uncertainty, and our policy is structurally similar to Algorithm \ref{learn_algo}. The modified policy requires a small but mandatory initial exploration period. However, unlike the works of \cite{rusme} and \cite{Saure}, the exploratory period does not depend on the specific instance parameters and is constant for all problem instances. Therefore, our algorithm is parameter independent and remains relevant for practical applications. Moreover, our approach continues to simultaneously explore and exploit after the initial exploratory phase.  In particular, the initial exploratory phase is to ensure that the estimates converge to the true parameters from above particularly in cases when the attraction parameter $v_i$ (frequency of purchase), is large for certain products. We describe our approach in Algorithm \ref{learn_algo_extn}.
\begin{algorithm}
\begin{algorithmic}[1]
\State \textbf{Initialization:} $v^{\sf UCB}_{i,0} = 1$ for all $i=1,\ldots,N$

\State $t = 1$ ; $\ell = 1$ keeps track of the time steps and total number of epochs respectively

\State $T_i(1) = 0$ for all $i = 1, \ldots, N$

\While{$t < T$}

	\State Compute $S_{\ell} = \underset{S \in \mathcal{S}}{\text{argmax}}\;\tilde{R}_{\ell}(S) = \frac{\displaystyle\sum_{i \in S}r_i v^{\sf UCB}_{i,\ell-1}}{1+\displaystyle\sum_{j \in S} v^{\sf UCB}_{j,\ell-1}}$
           \If {$T_i(\ell) < {48} \log{({\sqrt{N}}\ell+1)}$ for some $i \in S_\ell$} 
		\State Consider $\hat{S}$ =$\{ i | T_i(\ell) < 48 \log{({\sqrt{N}}\ell+1)}\}$
                     	\State  Choose $S_\ell \in \mathcal{S}$ such that $ S_\ell \subset \hat{S}$
           \EndIf
	\State Offer assortment $S_{\ell}$, observe the purchasing decision, ${c_t}$ of the consumer

	\If{$c_t = 0$}
		\State compute $\hat{v}_{i,\ell} = \sum_{t \in \ep{E}_\ell} \mathbbm{1}(c_t = i)$, no. of consumers who preferred $i$ in epoch $\ell$, for all $i \in S_\ell$
		\State update $\mathcal{T}_i(\ell) = \left\{\tau \leq \ell \, \middle| \, i \in S_\ell\right\}, T_i(\ell)=|\mathcal{T}_i(\ell)|$, no. of epochs until $\ell$ that offered  \mbox{product $i$}
		\State update $\bar{v}_{i,\ell}$ = $\displaystyle \frac{1}{T_i(\ell)}\sum_{\tau \in \ep{T}_i(\ell)} \hat{v}_{i,\tau}$, sample mean of the estimates
		\State	update $ v^{\sf UCB2}_{i,\ell}$  =$\bar{v}_{i,\ell} + \max{\left\{\sqrt{\bar{v}_{i,\ell}}, \bar{v}_{i,\ell}\right\}}\sqrt{\frac{48\log{({\sqrt{N}}\ell+1)}}{T_i(\ell)}} + \frac{48 \log{({\sqrt{N}}\ell+1)}}{T_i(\ell)}$	
		\State $\ell = \ell + 1$

	\Else
		\State $\ep{E}_\ell = \ep{E}_\ell \cup t$, time indices corresponding to epoch $\ell$

	\EndIf
\State {$t = t+1$}
\EndWhile
\end{algorithmic}
\caption{Exploration-Exploitation algorithm for \banditMNL~general parameters}
\label{learn_algo_extn}
\end{algorithm}

We can extend the analysis in Section \ref{sec:regretAnalysis} to bound the regret of Algorithm \ref{learn_algo_extn} as follows. 
\begin{theorem1}[Performance Bounds for Algorithm \ref{learn_algo_extn}]\label{main_result_extn}
For any instance $\mb{v} = (v_0, \ldots, v_N)$, of the \banditMNL~problem with $N$ products,  $r_i\in [0,1]$ for all $i=1,\ldots, N$,  the regret of the policy corresponding to Algorithm~\ref{learn_algo_extn} at any time $T$ is bounded as,
\begin{equation*} 
Reg_{\pi}(T,\mb{v}) \leq  C_1\sqrt{BNT\log{{N}T}} + C_2N\log^2{{N}T}+C_3 N B \log{{N}T},
\end{equation*}
where $C_1$, $C_2$ and $C_3$ are absolute constants and $B = \max\{\max_{i} \frac{v_i}{v_0},1\}$. 
\end{theorem1}

\medskip \noindent {\bf Proof outline.}
Note that Algorithm \ref{learn_algo_extn} is very similar to Algorithm \ref{learn_algo} except for the initial exploratory phase. Hence, to bound the regret we first prove that the initial exploratory phase is indeed bounded and then follow the approach discussed in Section \ref{sec:regretAnalysis} to establish the correctness of the confidence intervals, the optimistic assortment, and finally deriving the convergence rates and regret bounds. We make the above notions precise and provide the complete proof in Appendix \ref{sec:proof_main_result_extn}.

\section{Computational study}\label{sec:computation}
In this section, we present insights from numerical experiments that test the empirical performance of our policy and highlight some of its salient features. We study the performance of Algorithm \ref{learn_algo} from the perspective of robustness with respect to the ``separability parameter'' of the underlying instance. In particular, we consider varying levels of separation between the revenues corresponding to the optimal assortment and the second best assortment and perform a regret analysis numerically. We contrast the performance of Algorithm \ref{learn_algo} with the approach in \cite{Saure} for different levels of separation. We observe that when the separation between the revenues corresponding to optimal assortment and second best assortment is sufficiently small, the approach in \cite{Saure} breaks down, i.e., incurs linear regret, while the regret of Algorithm \ref{learn_algo} only grows sub-linearly with respect to the selling horizon. We also present results from a simulated study on a real world data set. 
%we present results from sensitivity analysis on our algorithm computational study comparing the performance of our algorithm  to that of \cite{Saure} and then later present results from a simulated study on real world data set. 

\subsection{Robustness of Algorithm \ref{learn_algo}}
Here, we present a study that examines the robustness of Algorithm \ref{learn_algo} with respect to the instance separability. We consider a parametric instance (see \eqref{eq:problem_instance}), where the separation between the revenues of the optimal assortment and next best assortment is specified by the parameter $\epsilon$ and compare the performance of Algorithm \ref{learn_algo} for different values of $\epsilon$. 

\medskip \noindent {\bf Experimental setup.} We consider the parametric MNL setting with $N = 10$, $K=4$, $r_i=1$ for all $i$ and utility parameters $v_0 = 1$ and for $i=1,\ldots, N$,
\begin{equation}\label{eq:problem_instance}
v_i = \left\{ 
\begin{array}{ll}
0.25+ \epsilon, & \text{ if } i \in \{1,2,9,10\} \\ 
0.25, & \text{ else },
\end{array}\right.
\end{equation}
%\end{example}
where $0 < \epsilon < 0.25$, specifies the difference between revenues corresponding to the optimal assortment and the next best assortment. Note that this problem has a unique optimal assortment, $\{1,2,9,10\}$ with an expected revenue of ${1+4\epsilon}/{2+4\epsilon}$ and next best assortment has revenue of ${1+3\epsilon}/{2+3\epsilon}$. We consider four different values for $\epsilon$, $\epsilon = \{0.05, 0.1, 0.15, 0.25\}$, where higher value of $\epsilon$ corresponds to larger separation, and hence an ``easier'' problem instance.

\begin{figure}
\begin{center}
{\includegraphics[scale=0.3]{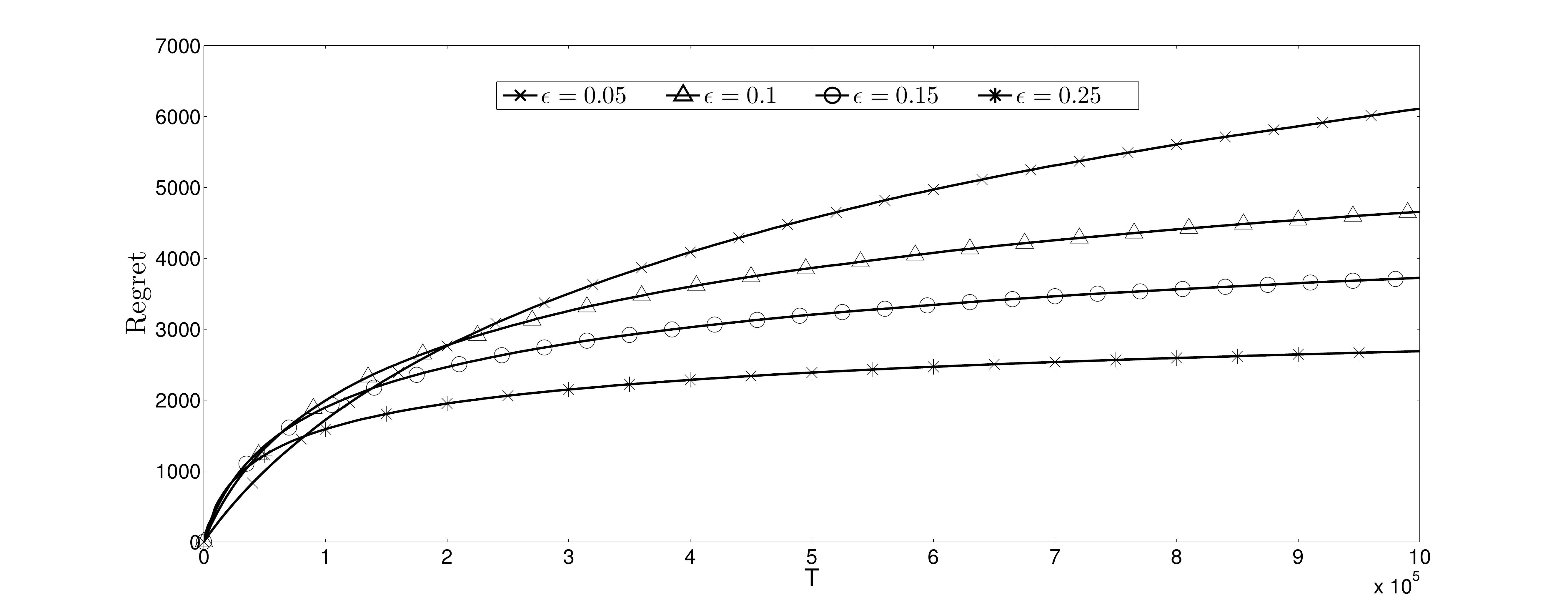}}
\caption{Performance of Algorithm \ref{learn_algo} measured as the regret  on the parametric instance \eqref{eq:problem_instance}. The graphs  illustrate the dependence of the regret on $T$ for ``separation gaps'' $\epsilon = 0.05, 0.1, 0.15 \; \text{and} \; 0.25$ respectively.  }\label{fig:regret_robust}
\end{center}
\end{figure}

\medskip \noindent {\bf Results.} Figure \ref{fig:regret_robust} summarizes the performance of Algorithm \ref{learn_algo} for different values of $\epsilon$. The results are based on running 100 independent simulations, the standard errors are within 2\%. Note that the performance of Algorithm \ref{learn_algo} is consistent across different values of $\epsilon$; with a regret that exhibits sub linear growth. Observe that as the value of $\epsilon$ increases the regret of Algorithm \ref{learn_algo} decreases. While not immediately obvious from Figure \ref{fig:regret_robust}, the regret behavior is fundamentally different in the case of ``small'' $\epsilon$ and ``large'' $\epsilon$. To see this, in Figure \ref{fig:regret_regress} we focus on the regret for $\epsilon = 0.05$ and $\epsilon = 0.25$ and fit to $\log{T}$ and $\sqrt{T}$ respectively. (The parameters of these functions are obtained via simple linear regression of the regret vs $\log{T}$ and $\sqrt{T}$ respectively). It can be observed that the regret is roughly logarithmic when $\epsilon = 0.25$, and in contrast roughly behaves like $\sqrt{T}$ when $\epsilon = 0.05$. This illustrates the theory developed in Section \ref{sec:parameter_dependent_bound}, where we showed that the regret grows logarithmically with time, if the optimal assortment and next best assortment are ``well separated,'' while the worst-case regret scales as $\sqrt T$.  %the algorithm is constantly exploring and the regret following a sub-linear growth with time. Observe that as the value of $\epsilon$ increases, the regret of Algorithm \ref{learn_algo} decreases and closely resembles logarithmic trend. In Figure \ref{fig:regret_regress}, we plot the best  $y = \beta_1 log{x} +\beta_0$  fit and the best $y = \beta_1 \sqrt{x} +\beta_0$ fit corresponding to the regret of Algorithm \ref{learn_algo} for $\epsilon = 0.05$ and $\epsilon = 0.25$.  It can be observed that $y = \beta_1 log{x} +\beta_0$  approximates the regret better when $\epsilon = 0.25$, while $y = \beta_1 \sqrt{x} +\beta_0$ approximates the regret better when $\epsilon = 0.05$. This validates the analysis in Section \ref{sec:parameter_dependent_bound}, where we showed that the regret grows logarithmically with time, if the optimal assortment and next best assortment are ``well separated''. 

\begin{figure}
\begin{center}
\begin{tabular}{c c}
%\begin{figure}[t]
$\overset{(a)}{\includegraphics[width=3.5in,height=2in]{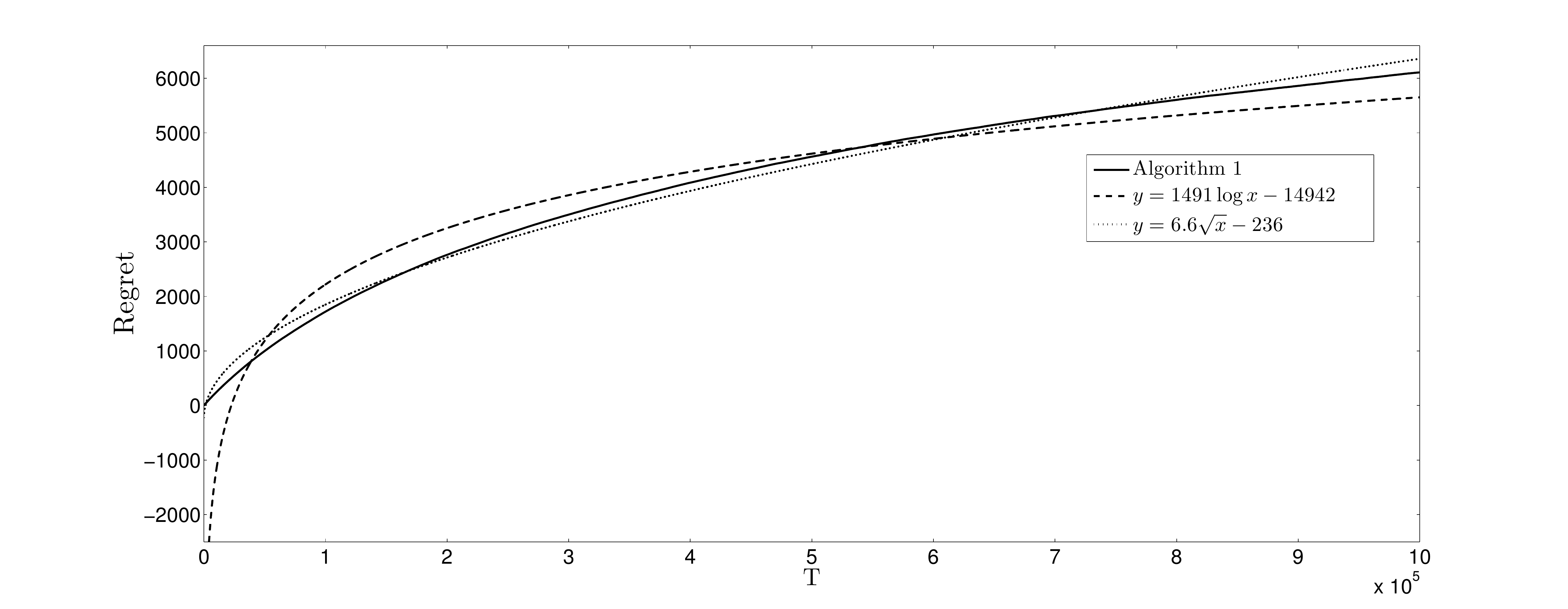}}$ & $\overset{(b)}{\includegraphics[width=3.5in,height=2in]{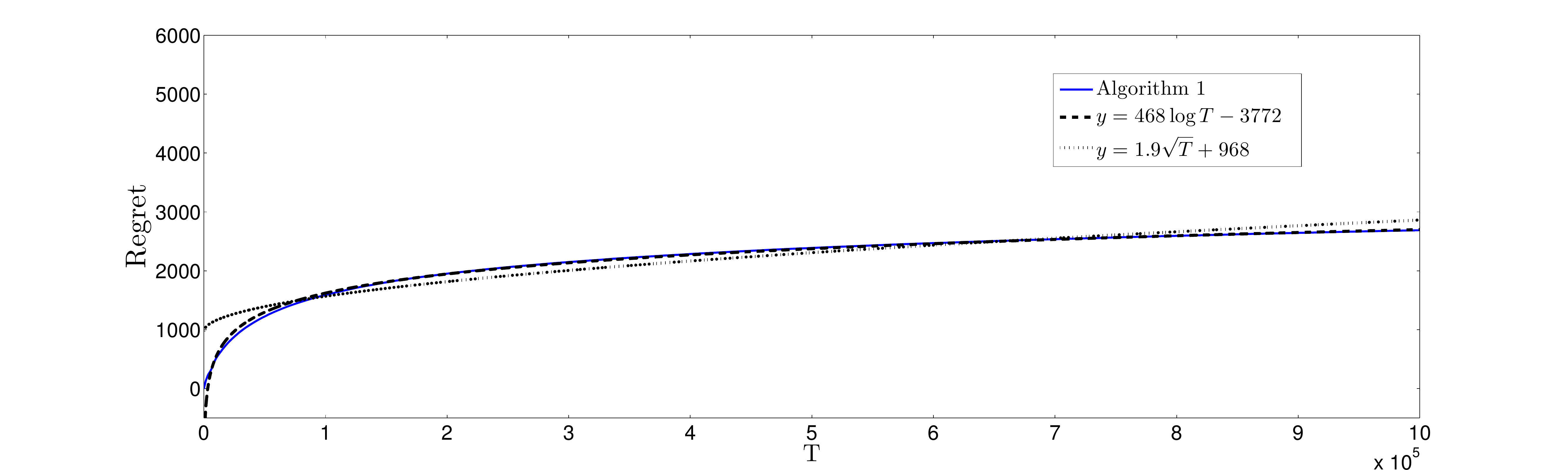}}$ 
\end{tabular}\caption{Best fit for the regret of Algorithm \ref{learn_algo} on the parametric instance \eqref{eq:problem_instance}. The graphs (a), (b) illustrate the dependence of the regret on $T$ for ``separation gaps'' $\epsilon = 0.05, \; \text{and} \; 0.25$ respectively. The best $y = \beta_1 \log{T} + \beta_0$ fit and best $y = \beta_1 \sqrt{T} + \beta_0$ fit are superimposed on the regret curve.}\label{fig:regret_regress}
\end{center}
\end{figure}

\subsection{ Comparison with existing approaches} In this section, we present a computational study comparing the performance of our algorithm to that of \cite{Saure}. (To the best of our knowledge,  \cite{Saure} is currently the best existing approach for our problem setting.) To be implemented, their approach requires certain a priori information of a ``separability parameter''; roughly speaking, measuring the degree to which the optimal and next-best assortments are distinct from a revenue standpoint. More specifically, their algorithm follows an \emph{explore-then-exploit} approach, where every product is offered for a minimum duration of time that is determined by an estimate of said ``separability parameter.'' After this mandatory exploration phase, the parameters of the choice model are estimated based on the past observations and the optimal assortment corresponding to the estimated parameters is offered for the subsequent consumers. If the optimal assortment and the next best assortment are ``well separated,''  then the offered assortment is optimal with high probability, otherwise, the algorithm could potentially incur linear regret. Therefore, the knowledge of this ``separability parameter'' is crucial. For our comparison, we consider the exploration period suggested by \cite{Saure} and compare it with the performance of Algorithm \ref{learn_algo} for different values of separation ($\epsilon$). We will see that for any given exploration period, there is an instance where the approach in \cite{Saure} ``breaks down'' or in other words incurs linear regret, while the regret of Algorithm \ref{learn_algo} grows sub-linearly ($O(\sqrt T)$, more precisely) for all values of $\epsilon$ as asserted in Theorem \ref{main_result}. 

%For their theoretical asymptotic regret bounds, \cite{Saure} consider a very conservative estimate of the exploration period. Selecting a conservative exploration period will ensure high probability of convergence to the optimal assortment, but will also lead to a longer exploration phase and thereby, a higher regret. Hence, for our comparison, we choose two different minimum exploration period (say $\kappa \log{T}$, where $\kappa$ is a parameter), one corresponding to an optimistic estimate of $\kappa$ and another corresponding to conservative estimate of $\kappa$. 

\medskip \noindent {\bf Experimental setup and results.} We consider the parametric MNL setting as described in \eqref{eq:problem_instance} and for each value of $\epsilon \in \{0.05, 0.1, 0.15, 0.25\}$. Since the implementation of the policy in \cite{Saure} requires knowledge of the selling horizon and minimum exploration period a priori, we take the exploration period to be $20 \log{T}$ as suggested in~\cite{Saure} and the selling horizon $T = 10^6$. Figure \ref{fig:regret} compares the regret of Algorithm \ref{learn_algo} with that of \cite{Saure}.  The results are based on running 100 independent simulations with standard error of $2\%$. We observe that the regret for \cite{Saure} is better than the regret of Algorithm \ref{learn_algo} when $\epsilon = 0.25$ but is worse for other values of $\epsilon$. This can be attributed to the fact that for the assumed exploration period, their algorithm fails to identify the optimal assortment within the exploration phase with sufficient probability and hence incurs a linear regret for $\epsilon = 0.05, 0.1 \;\text{and} \; 0.15$.   Specifically, among the 100 simulations we tested, the algorithm of \cite{Saure} identified the optimal assortment for only $7\%, 40\%, 61\% \; \text{and} \; 97\%$ cases, when $\epsilon = 0.05, 0.1, 0.15, \; \text{and} \;0.25$, respectively. This highlights the sensitivity to the ``separability parameter'' and the importance of having a reasonable estimate for the exploration period. Needless to say, such information is typically not available in practice. In contrast, the performance of Algorithm \ref{learn_algo} is consistent across different values of $\epsilon$, insofar as the regret grows in a sub-linear fashion in all cases. 

\begin{figure}
\begin{center}
\begin{tabular}{c c}
%\begin{figure}[t]
$\overset{(a)}{\includegraphics[width=3.5in,height=2in]{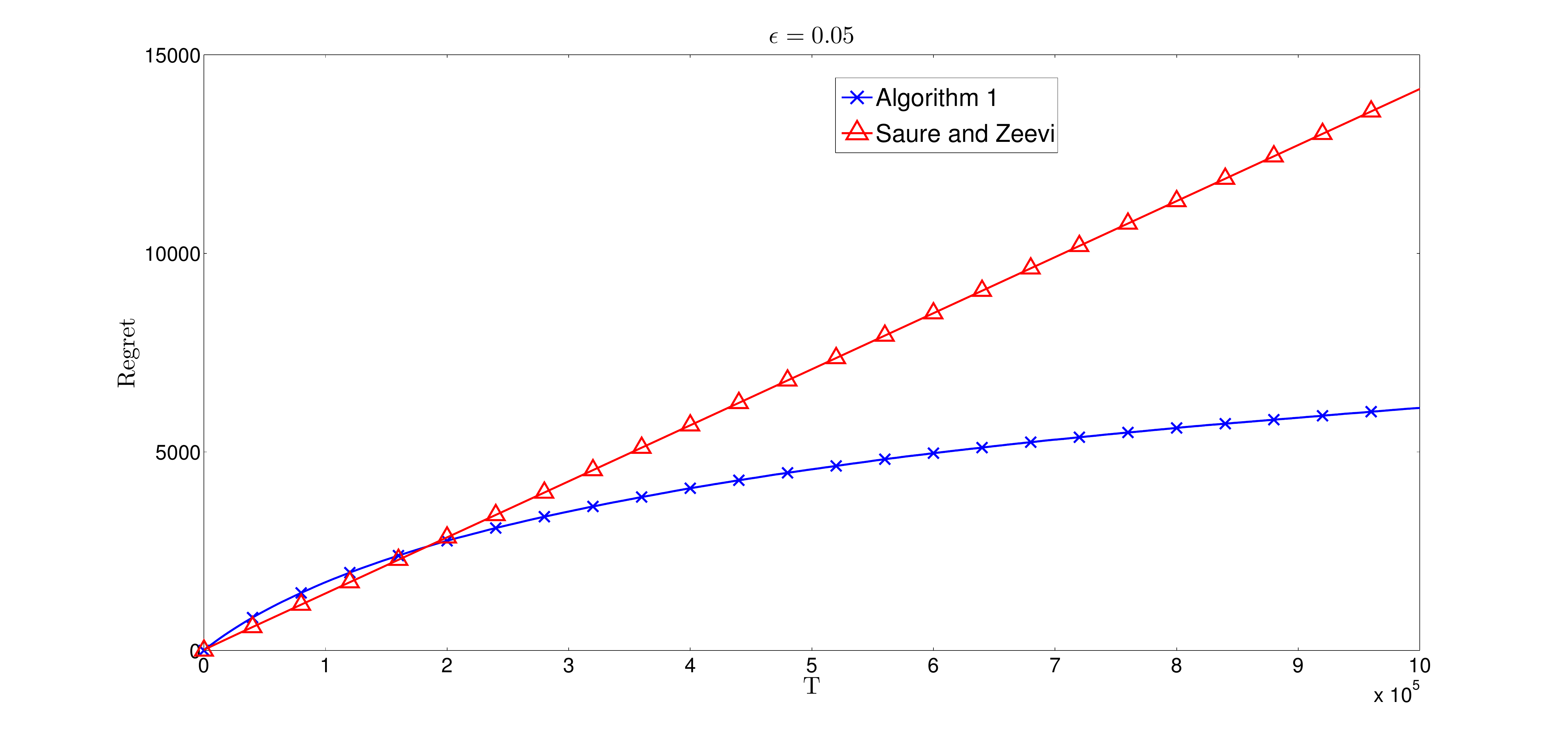}}$ & $\overset{(b)}{\includegraphics[width=3.5in,height=2in]{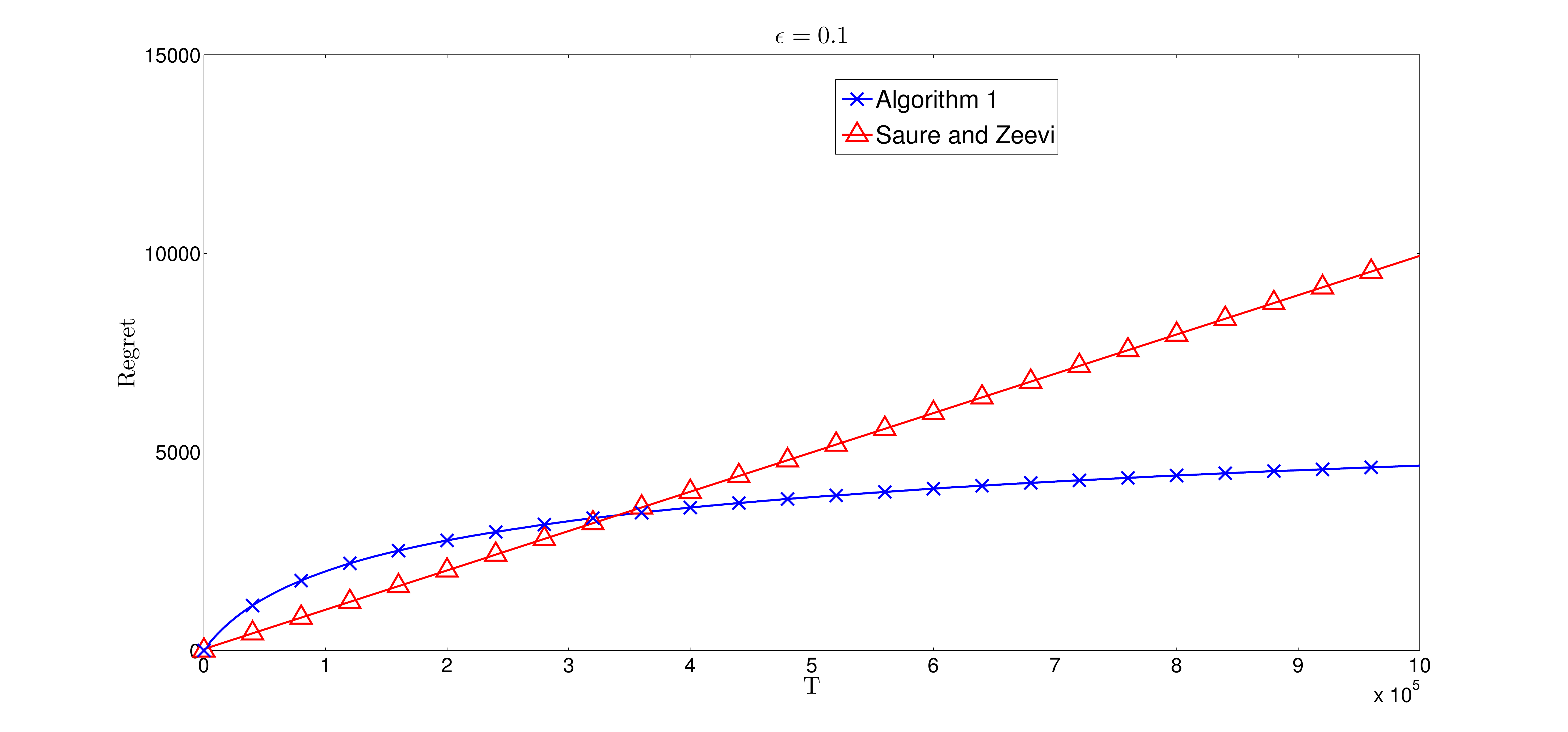}}$ \\
$\overset{(c)}{\includegraphics[width=3.5in,height=2in]{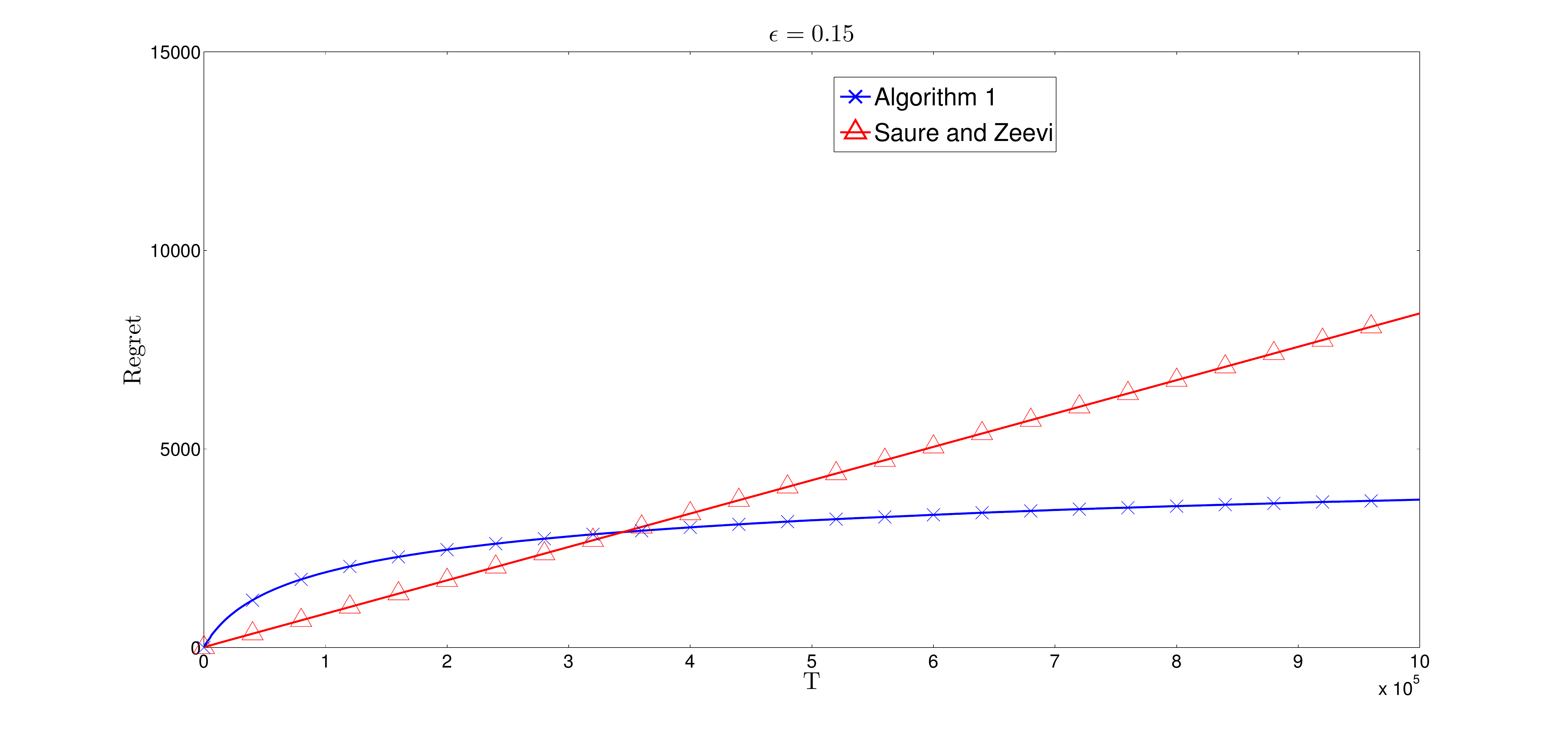}}$ & $\overset{(d)}{\includegraphics[width=3.5in,height=2in]{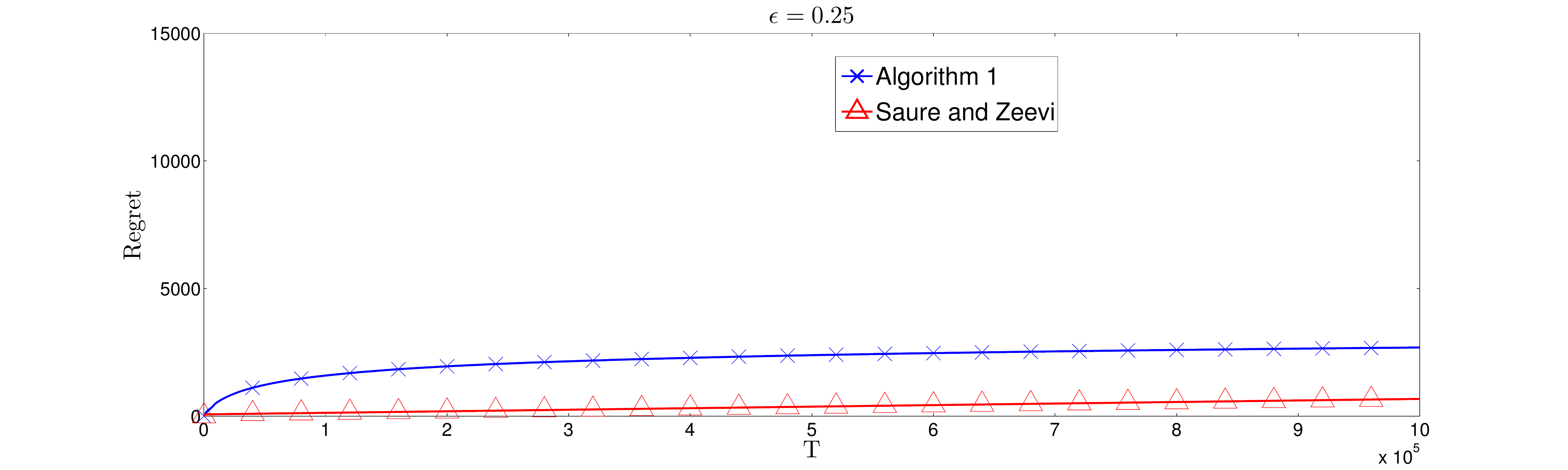}}$
\end{tabular}\caption{Comparison with the algorithm of \cite{Saure}. The graphs (a), (b), (c) and (d) compares the performance of Algorithm \ref{learn_algo} to that of \cite{Saure} on problem instance \eqref{eq:problem_instance}, for $\epsilon = 0.05, 0.1, 0.15 \; \text{and} \; 0.25$  respectively. }\label{fig:regret}
\end{center}
\end{figure}

\subsection{Performance of Algorithm \ref{learn_algo} on a simulation of real data}
Here, we present the results of a simulated study  on a real data set and compare the performance  of Algorithm \ref{learn_algo} to that of \cite{Saure}. 

\medskip \noindent {\bf Data description.} We consider the ``UCI Car Evaluation Database'' (see \cite{UCI}) which contains attributes for $N = 1728$ cars and consumer ratings for each car.  %More specifically, for every car in the database, we have information about its overall price, expected maintenance costs, number of doors, passenger and luggage capacity and estimated safety factor. 
The exact details of the attributes are provided in Table \ref{attributes: cars}. Rating for each car is also available. In particular, every car is associated with one of the following four ratings, unacceptable, acceptable, good and very good. 

\begin{table}
\begin{center}
\begin{tabular}{ c | c }
\hline
Attribute & Attribute Values\\
\hline
\hline
price & Very-high, high, medium, low\\
\hline
maintenance costs & Very-high, high, medium, low\\
\hline
\# doors & 2, 3, 4,  5 or more\\
\hline
passenger capacity  & 2, 4, more than 4\\
\hline
luggage capacity & small, medium and big \\
\hline
safety perception & low, medium, high\\
\end{tabular}\\
\caption{Attribute information of cars in the database}\label{attributes: cars}
\end{center}
\end{table}

\medskip \noindent {\bf Assortment optimization framework.} %We will assume that the consumer preferences for cars are not known and consider the problem of dynamically learning the preferences, while optimizing cumulative sales over the selling horizon. 
We assume that the consumer choice is modeled by the MNL model, where the mean utility of a product is linear in the values of attributes. More specifically, we convert the categorical attributes described in Table \ref{attributes: cars} to attributes with binary values by adding dummy attributes (for example ``price very high'', ``price low'' are considered as two different attributes that can take values 1 or 0). Now every car is associated with an attribute vector $m_i \in \{0,1\}^{22}$, which is known a priori and the mean utility of product $i$ is given by the inner product $$\mu_i = {\theta}\cdot m_i \;\; \; i =1,\ldots,N,$$ where ${\theta} \in \mathbb{R}^{22}$  is some fixed but initially unknown attribute weight vector. Under this model, the probability that a consumer purchases product $i$ when offered an assortment $S \subset \{1,\ldots, N\}$ is assumed to be, 
\begin{equation}\label{choice_probabilities_feature}
p_i(S) =\begin{cases}
 \displaystyle \frac{e^{{\theta}\cdot m_i}}{1+\sum_{j \in S}e^{{\theta}\cdot m_j}}, &\quad \text{if} \; i \in S \cup \{0\}\\
0, & \quad \; \text{otherwise},
\end{cases}
\end{equation}
Let $\mb{m} = \left(m_1, \ldots, m_N\right)$. Our goal is to offer assortments $S_1, \ldots, S_T$ at times $1,\ldots,T$ respectively such that the cumulative sales are maximized or alternatively, minimize the regret defined as 
\begin{equation}\label{Regret_sales}
Reg_\pi(T,\mb{m}) = \sum_{t=1}^T \left(\sum_{i \in S^*} p_i(S) - \sum_{i \in S_t} p_i(S_t)\right),
\end{equation}
where
\[ S^* = \arg\max_{S} \sum_{i \in S}  \frac{e^{{\theta}\cdot m_i}}{1+\sum_{j \in S}e^{{\theta}\cdot m_j}}.\]
Note that regret defined in \eqref{Regret_sales} is a special case formulation of the regret defined in \eqref{Regret} with $r_i = 1$ and $v_i = e^{{\theta}\cdot m_i}$ for all $i = 1, \ldots, N$.

\medskip \noindent {\bf Experimental setup and results.}  We first estimate a ground truth MNL model as follows. Using the available attribute level data and consumer rating for each car, we estimate a logistic model assuming every car's rating is independent of the ratings of other cars to estimate the attribute weight vector $\theta$. Specifically, under the logistic model, the probability that a consumer will purchase a car whose attributes are defined by the vector $m \in \{0,1\}^{22}$ and the attribute weight vector $\theta$ is given by
$$p_{\sf buy}(\theta, m) \overset{\Delta}{=} \; \mathbb{P}\left(\text{buy} \middle| \theta \right) = \frac{e^{\theta\cdot m}}{1+e^{\theta\cdot m}}.$$
For the purpose of training the logistic model on the available data, we consider the consumer ratings of ``acceptable,'' ``good,'' and ``very good'' as success or intention to buy and the consumer rating of ``unacceptable'' as a failure or no intention to buy.  We then use the maximum likelihood estimate $\theta_{\sf MLE}$ for $\theta$ to run simulations and study the performance of Algorithm \ref{learn_algo} for the realized $\theta_{\sf MLE}$. In particular, we compute $\theta_{\sf MLE}$ that maximizes the following regularized log-likelihood

\begin{figure}[t]
\begin{center}
\includegraphics[height=2.5in]{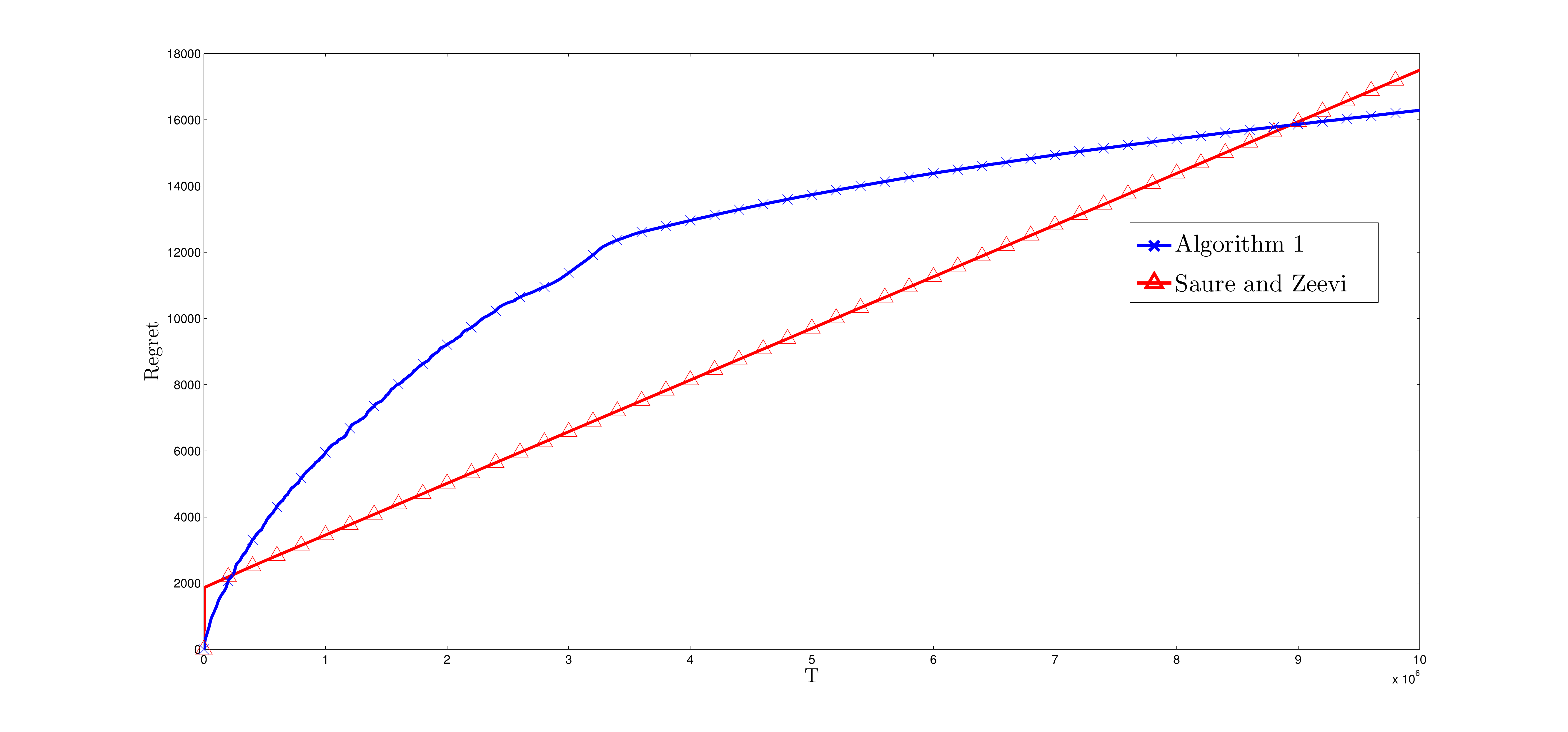}
\caption{\change{Comparison with the algorithm of \cite{Saure} on real data. The graph compares the performance of Algorithm \ref{learn_algo} to that of \cite{Saure} on the ``UCI Car Evaluation Databse' for $T = 10^7$.} }\label{fig:regret_car}
\end{center}
\end{figure}

$$\theta_{\sf MLE} = \underset{\theta}{\text{arg}\max} \;\sum_{i=1}^N \log{p_{\sf buy}(\theta,m_i)} - \|\theta\|_2.$$ The objective function in the preceding optimization problem is convex and therefore we can use any of the standard convex optimization techniques to obtain the estimate, $\theta_{\sf MLE}$. It is important to note that the logistic model is only employed to obtain an estimate for $\theta$, $\theta_{\sf MLE}$. The estimate $\theta_{\sf MLE}$ is assumed to be the ground truth MNL model and is used to simulate the feedback of consumer choices for our learning Algorithm \ref{learn_algo} {and the learning algorithm proposed by \cite{Saure}.} 

{We compare the performance of Algorithm~\ref{learn_algo} with that of \cite{Saure}, in terms of regret as defined in~\eqref{Regret_sales} with $\theta = \theta_{\sf MLE}$ and at each time index, the retailer can only show at most  $k=100$ cars. We implement \cite{Saure}'s approach with their suggested mandotary exploration period, which explores every product for at least $20 \log{T}$ periods.  Figure \ref{fig:regret_car} plots the regret of Algorithm \ref{learn_algo} and the \cite{Saure} policy, when the selling horizon is $T=10^7.$ The results are based on running $100$ independent simulations and have a standard error of $2\%$. We can observe that while the initial regret of \cite{Saure} is smaller, the regret grows linearly with time, suggesting that the exploration period was too small. This further illustrates the shortcomings of an explore-then-exploit approach which requires knowledge of underlying parameters. In contrast, the regret of Algorithm \ref{learn_algo} grows in a sublinear fashion with respect to the selling horizon and does not require any a priori knowledge on the parameters, making a case for the universal applicability of our approach. }

\section{Conclusions and future work}

\medskip \noindent {\bf {Summary and main insights.}} { In this paper, we have studied the dynamic assortment selection problem under the widely used multinomial logit  choice model. Formulating the problem as a parametric multi-arm bandit problem, we present a policy that learns the parameters of the choice model while simultaneously maximizing the cumulative revenue. Focusing on a policy that would be universally applicable, we highlight the limitations of existing approaches and present a novel computationally efficient algorithm, whose performance (as measured by the regret) is nearly-optimal. Furthermore, our policy is adaptive to the complexity of the problem instance, as measured by ``separability'' of items. The adaptive nature of the algorithm is manifest in its ``rate of learning'' the unknown instance parameters, which is more rapid if the problem instance is ``less complex.''   } 

\medskip \noindent {\bf {Limitations and future research.}}
{In this work we primarily focused on developing an algorithm that would be broadly applicable. In so doing,  we only consider the setting where every product has its own utility parameter and has to be estimated separately. However, in many settings a large number of products are effectively described by a small number of product features, via what is often referred to as factor model. An important extension of our problem would be to consider a policy that leverages the relation between products as measured via their features, and achieves a regret bound that is independent of the number of products and only depends on the dimensionality of feature space.}

\change{Another interesting direction is to consider the settings where the consumers are heterogeneous. If the consumer type is known a priori, then we can easily generalize our algorithm to learn only model parameters of that type. In a recent work, \cite{kallus} consider the setting of heterogeneous consumers where each consumer segment follows a separate MNL model, but the underlying structure of these parameters over all the segments has low dimension. Assuming the consumer type is observable a priori, they present an explore first exploit later approach to dynamically learn the preferences of heterogeneous consumer population. Their work also demonstrates significant improvements in performance in comparison to trivially extending the existing dynamic learning approaches (\citealp{Saure}, \citealp{rusme}) to learn a different MNL model for each consumer type. Generalizing our work to design a parameter independent algorithm to learn the preferences of heterogeneous consumers with an underlying low rank structure would be an important extension with significant practical implications.}

\change{As discussed earlier, Thompson Sampling is a natural algorithm for the \banditMNL\;problem. Despite being empirically superior to other bandit policies, TS-based algorithms remain challenging to analyze and theoretical work on TS is limited.  An interesting direction is to consider a TS-based approach for the \banditMNL\;problem and derive similar regret bounds to the ones obtained in this paper. Due to its combinatorial nature, selecting a suitable prior and efficiently updating the posterior present a significant challenge in designing a good TS-based algorithm for the \banditMNL~problem. Some preliminary results in this direction are reported in \cite{agrawal2017thompson}.}%  Due to its combinatorial nature, designing a TS based algorithm for the \banditMNL~problem that }

\ACKNOWLEDGMENT{V. Goyal is supported in part by NSF Grants CMMI-1351838 (CAREER) and CMMI-1636046. A. Zeevi is supported in part by NSF Grants NetSE-0964170 and  BSF-2010466.}
{\Large{
\bibliographystyle{ormsv080}
\bibliography{acmsmall-sample-bibfile}
}}

\begin{appendices}
%\appendix
\section{Proof of Theorem \ref{main_result}} \label{Appendix_proof_thm1}
{ In this section, we provide a detailed proof of Theorem \ref{main_result} following the outline discussed in Section \ref{proof_outline_thm1}. The proof is organized as follows. In Section \ref{sec:UCBva}, we complete the proof of Lemma \ref{lem:UCBv} and in Section \ref{sec:UCBSa}, we prove Lemma \ref{lem:UCBS1} and Lemma \ref{lem:UCBS2}. Finally, in Section \ref{sec:completeProof}, we utilize these results to complete the proof of Theorem \ref{main_result}.}

\subsection{Properties of estimates $v^{\sf UCB}_{i,\ell}$: Proof of Lemma \ref{lem:UCBv}}
\label{sec:UCBva}
First, we prove Lemma \ref{lem:UCBv}. To complete the proof, we establish certain properties of the estimates $v^{\sf UCB}_{i,\ell}$, and then extend these properties to establish the necessary properties of $\hat{v}_{i,\ell}$ and $\bar{v}_{i,\ell}$. 
%In Section \ref{sec:unbiased_estimate}, we show that $\hat{v}_{i,\ell}, \ell \leq L$ are independent and identically distributed unbiased estimates for true parameter $v_i$ and bounded with high probability. In Section \ref{sec:chernoff}, we establish Chernoff Bounds on the error of the estimate $\bar{v}_{i,\ell}$ and use these bounds to compute an estimate for the upper bound, $v^{\sf UCB}_{i,\ell}$ of $v_i$. In Section \ref{sec:upper_bound}, we show how to obtain upper bounds on the expected revenues for assortments based on the upper bound estimates $v^{\sf UCB}_{i,\ell}$. Finally, before proving the main result, in Section \ref{sec:lipschitz}, we will show how the error on the expected revenues is related to the error on the estimates.  

\begin{lemma}[Moment Generating Function]\label{moment_generating}
The moment generating function of estimate conditioned on $S_\ell$, $\hat{v}_{i}$, is given by,
\begin{equation*}
\mathbb{E}_{\pi}\left(e^{\theta\hat{v}_{i,\ell}} {\Big |} S_\ell\right) = \frac{1}{1-v_i(e^\theta-1)},\;\text{for all}\; \;\theta \leq \log{\frac{1+v_i}{v_i}},  \; \text{for all} \;\; i=1,\cdots,N.
\end{equation*}
\end{lemma}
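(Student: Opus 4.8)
The plan is to condition on the offered assortment $S_\ell = S$ and to exploit the fact that within an epoch the assortment $S$ is offered repeatedly, with the epoch terminating at the first no-purchase. Writing $V := \sum_{j\in S} v_j$ and recalling $v_0 = 1$, the conditional independence assumption on $\{c_t\}$ implies that, given $S_\ell = S$, the responses at the time steps $t\in\ep{E}_\ell$ are i.i.d.\ draws from the MNL law \eqref{choice_probabilities}: each trial independently yields ``purchase $i$'' with probability $v_i/(1+V)$, ``purchase some other $j\in S$'' with probability $(V-v_i)/(1+V)$, and ``no-purchase'' (which ends the epoch) with probability $1/(1+V)$.

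First I would compute the conditional law of $\hat v_{i,\ell}$. Since the epoch is the string of trials up to and including the first no-purchase, the event $\{\hat v_{i,\ell}=n\}$ corresponds to strings with exactly $n$ purchases of $i$ and an arbitrary number $k\ge 0$ of purchases of other products, terminated by a no-purchase. Summing over $k$ and the $\binom{n+k}{n}$ interleavings gives
\begin{equation*}
\mathbb{P}_{\pi}(\hat v_{i,\ell}=n \mid S_\ell = S) = \frac{1}{1+V}\left(\frac{v_i}{1+V}\right)^{\!n}\sum_{k\ge 0}\binom{n+k}{n}\left(\frac{V-v_i}{1+V}\right)^{\!k}.
\end{equation*}
Using the negative binomial identity $\sum_{k\ge 0}\binom{n+k}{n}b^k=(1-b)^{-(n+1)}$ with $b=(V-v_i)/(1+V)$, so that $1-b=(1+v_i)/(1+V)$, all of the $(1+V)$ factors cancel and I obtain
\begin{equation*}
\mathbb{P}_{\pi}(\hat v_{i,\ell}=n \mid S_\ell = S) = \left(\frac{v_i}{1+v_i}\right)^{\!n}\frac{1}{1+v_i}, \qquad n=0,1,2,\ldots,
\end{equation*}
a geometric law depending only on $v_i$ and not on $S$. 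This cancellation of the normalizing factor $(1+V)$ --- morally the IIA property of the MNL model --- is the crux of the argument and the main thing to get right; it is exactly what makes the per-epoch estimates identically distributed regardless of which assortment is offered.

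It then remains to sum the moment generating function. With $\rho:=v_i/(1+v_i)$,
\begin{equation*}
\mathbb{E}_{\pi}\!\left(e^{\theta \hat v_{i,\ell}}\,\middle|\,S_\ell\right) = (1-\rho)\sum_{n\ge 0}\left(\rho e^{\theta}\right)^{n} = \frac{1-\rho}{1-\rho e^{\theta}},
\end{equation*}
where the geometric series converges precisely when $\rho e^{\theta}<1$, i.e.\ $\theta \le \log\frac{1+v_i}{v_i}$. Substituting $\rho=v_i/(1+v_i)$, the numerator is $1/(1+v_i)$ and the denominator simplifies to $\big(1+v_i-v_i e^{\theta}\big)/(1+v_i)$, yielding $\frac{1}{1+v_i-v_i e^{\theta}}=\frac{1}{1-v_i(e^{\theta}-1)}$, the claimed expression. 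Since this depends only on $v_i$, the conditioning on $S_\ell$ is immaterial beyond the requirement $i\in S_\ell$, which completes the proof; as an aside, differentiating at $\theta=0$ recovers $\mathbb{E}_{\pi}[\hat v_{i,\ell}\mid S_\ell]=v_i$, the unbiasedness invoked in Corollary \ref{unbiased_estimate}.
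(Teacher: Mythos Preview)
Your proof is correct, but it takes a genuinely different route from the paper's. The paper conditions on the epoch length $n_\ell = |\ep{E}_\ell|-1$: it observes that $n_\ell$ is geometric with success probability $p_0(S_\ell)=1/(1+V)$, that $\hat v_{i,\ell}$ given $n_\ell$ is binomial with $n_\ell$ trials and success probability $q_i=v_i/V$, and then computes the MGF by iterated expectation, using the binomial MGF $(q_ie^\theta+1-q_i)^{n_\ell}$ and the closed form $\mathbb{E}[\alpha^{n_\ell}]=p_0/(1-\alpha(1-p_0))$. The assortment-free form $\frac{1}{1-v_i(e^\theta-1)}$ then drops out of the algebra, and only afterwards (in Corollary~\ref{unbiased_estimate}) does the paper read off the geometric law for $\hat v_{i,\ell}$. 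You instead marginalize directly over all trajectories, sum the negative-binomial series to obtain the geometric$(1/(1+v_i))$ distribution first, and then compute its MGF. Your ordering has the advantage of making the IIA cancellation completely explicit at the level of the distribution, so Corollary~\ref{unbiased_estimate} is obtained en route rather than as a consequence; the paper's ordering is arguably slicker in that it never needs the combinatorial identity, only two standard MGFs. Both arguments are equally rigorous. One small quibble: the geometric series $\sum_{n\ge 0}(\rho e^\theta)^n$ converges for $\rho e^\theta<1$, i.e.\ strict inequality $\theta<\log\frac{1+v_i}{v_i}$; the $\le$ in the statement (which the paper also writes) is a harmless boundary slip.
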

\proof{Proof.}
 From \eqref{choice_probabilities}, we have that probability of no purchase event when assortment $S_\ell$ is offered is given by \[\textstyle p_0(S_\ell) = \displaystyle \frac{1}{1+\sum_{j\in S_\ell} v_j}.\] 
Let $n_\ell$ be the total number of offerings in epoch $\ell$ before a no purchased occurred, i.e., $n_\ell = |\ep{E}_\ell|-1$. 
Therefore, $n_\ell$ is a geometric random variable with probability of success $p_0(S_\ell)$. 
%Conditioned on the random variable $n_\ell$, probability of purchasing product $i$ is the conditional probability of purchasing product $i$ given that some product has been purchased. Let $p_i$ denote the aforementioned conditional probability. Then $p_i$ is given by,
%\[p_i = \frac{v_i}{\sum_{j\in S} v_\ell}.\]
%Hence, conditioned on the random variable $n_\ell$, 
And, given any fixed value of $n_{\ell}$,
$\hat{v}_{i,\ell}$ is a binomial random variable with $n_\ell$ trials and probability of success given by 
$$ \textstyle q_i(S_\ell)= \displaystyle \frac{v_i}{\sum_{j\in S_\ell} v_j}.$$ 
In the calculations below, for brevity we use $p_0$ and $q_i$ respectively to denote $p_0(S_\ell)$ and $q_i(S_\ell)$. Hence, we have 
\begin{equation}\label{eq:conditional_mgf}\mathbb{E}_{\pi}\left( e^{\theta\hat{v}_{i,\ell}}\right) = E_{n_\ell}\left\{\mathbb{E}_{\pi}\left( e^{\theta\hat{v}_{i,\ell}}\,\middle| \,n_\ell\right)\right\}.\end{equation}
Since the moment generating function for a binomial random variable with parameters $n,p$ is $\left(pe^\theta + 1-p\right)^n$, we have
\begin{equation}\label{eq:binomial_mgf}\mathbb{E}_{\pi}\left( e^{\theta\hat{v}_{i,\ell}}\, \middle| \, n_\ell\right) =  \mathbb{E}_{n_\ell}\left\{{\left(q_ie^\theta + 1-q_i\right)^{n_\ell}}\right\}.\end{equation}
For any $\alpha$, such that $\alpha(1-p) < 1$, if $n$ is a geometric random variable with parameter $p$, then we have \[\mathbb{E}(\alpha^n) = \frac{p}{1-\alpha(1-p)}.\] 
{Since $n_\ell$ is a geometric random variable with parameter $p_0$ and by definition of $q_i$ and $p_0$, we have, $q_i(1-p_0) = v_i p_0$, it follows that for any $\theta < \log{\frac{1+v_i}{v_i}}$, we have,
\begin{equation}\label{eq:geometric_mgf}\mathbb{E}_{n_\ell}\left\{{\left(q_ie^\theta + 1-q_i\right)^{n_\ell}}\right\} =  \frac{p_0}{1-\left(q_ie^\theta + 1-q_i\right)(1-p_0)} = \frac{1}{1-v_i(e^\theta-1)}.\end{equation}
The result follows from \eqref{eq:conditional_mgf}, \eqref{eq:binomial_mgf} and \eqref{eq:geometric_mgf}}. \hfill $\Halmos$
\\

From the moment generating function, we can establish that $\hat{v}_{i,\ell}$ is a geometric random variable with parameter $\frac{1}{1+v_i}$. Thereby also establishing that $\hat{v}_{i,\ell}$ and $\bar{v}_{i,\ell}$ are unbiased estimators of $v_i$.  More specifically, from Lemma \ref{moment_generating}, we have the following result.
\begin{corollary}[Unbiased Estimates]\label{unbiased_estimate}
%\begin{minipage}[t]{\linewidth}
We have the following results.
\begin{enumerate}
\item $ \hat{v}_{i,\ell}, \; \ell \leq L$ are  i.i.d geometrical  random variables with parameter $\frac{1}{1+v_i}$, i .e. for any $\ell,i$ $$\mathbb{P}_{\pi}\left(\hat{v}_{i,\ell} = m\right) = {\left(\frac{v_i}{1+v_i}\right)}^m\left(\frac{1}{1+v_i}\right) \ \;\forall\; m=\{0,1,2,\cdots\} .$$
\item $ \hat{v}_{i,\ell}, \; \ell \leq L$ are unbiased  i.i.d estimates of $v_i$, i .e. $\mathbb{E}_{\pi}\left(\hat{v}_{i,\ell}\right) = v_i \;\forall\; \ell, i.$ \label{p1}
\end{enumerate}
%\end{minipage}
\end{corollary}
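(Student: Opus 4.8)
The plan is to read off the distribution of $\hat{v}_{i,\ell}$ directly from the moment generating function computed in Lemma \ref{moment_generating}, and then to exploit the crucial feature that this conditional MGF does not depend on the offered assortment $S_\ell$. First I would recall that a geometric random variable $X$ supported on $\{0,1,2,\ldots\}$ with success probability $p$ has MGF $\mathbb{E}(e^{\theta X}) = p/(1-e^\theta(1-p))$ whenever $e^\theta(1-p)<1$. Substituting $p = 1/(1+v_i)$, so that $1-p = v_i/(1+v_i)$, this expression simplifies to $1/(1-v_i(e^\theta-1))$, which is exactly the conditional MGF asserted in Lemma \ref{moment_generating}, valid precisely on the range $\theta \leq \log((1+v_i)/v_i)$ (equivalently $e^\theta(1-p)<1$). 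Since this MGF is finite on an open interval around the origin (note $\log((1+v_i)/v_i)>0$), it uniquely determines the law, and hence conditioned on $S_\ell$ the estimate $\hat{v}_{i,\ell}$ is geometric with parameter $1/(1+v_i)$; this establishes the conditional form of part 1 and the stated PMF.

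The heart of the argument, and the main obstacle, is upgrading this conditional statement to the claim that the $\hat{v}_{i,\ell}$ are independent across epochs and identically distributed, despite the adaptive nature of the policy. The subtlety is that $S_\ell$ is $\ep{H}_\ell$-measurable and therefore depends on the earlier estimates $\hat{v}_{i,1},\ldots,\hat{v}_{i,\ell-1}$, so a priori the estimates could be coupled. The observation that resolves this is that the conditional MGF in Lemma \ref{moment_generating} does not depend on $S_\ell$ at all; it depends only on the fixed parameter $v_i$. Consequently the conditional law of $\hat{v}_{i,\ell}$ given $\ep{H}_\ell$ is the same geometric law regardless of the realized history, which is exactly the statement that $\hat{v}_{i,\ell}$ is independent of $\ep{H}_\ell$. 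I would make joint independence precise by an induction over epochs: for any indices $\ell_1 < \cdots < \ell_k$, conditioning on $\ep{H}_{\ell_k}$ (which contains $\hat{v}_{i,\ell_1},\ldots,\hat{v}_{i,\ell_{k-1}}$) leaves the law of $\hat{v}_{i,\ell_k}$ unchanged, so it is independent of all previous estimates; iterating yields joint independence, while the common conditional law yields identical distribution, giving part 1 in full.

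Finally, for the unbiasedness claim in part 2, I would either differentiate the MGF at $\theta = 0$, giving $\frac{d}{d\theta}\,[1-v_i(e^\theta-1)]^{-1}\big|_{\theta=0} = v_i$, or simply invoke the mean $(1-p)/p$ of a geometric random variable with $p = 1/(1+v_i)$, which equals $v_i$ directly. Either computation gives $\mathbb{E}_\pi(\hat{v}_{i,\ell}) = v_i$ for every $\ell$ and $i$, completing the proof. I expect the distributional identification and the mean computation to be entirely routine; essentially all of the conceptual content lives in the independence step, where the assortment-independence of the conditional MGF, a manifestation of the IIA property of the MNL model, is what breaks the apparent circular dependence between $\hat{v}_{i,\ell}$ and the adaptively chosen $S_\ell$.
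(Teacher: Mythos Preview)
Your proposal is correct and follows essentially the same approach as the paper: the paper states that the corollary follows directly from Lemma \ref{moment_generating} by recognizing the MGF as that of a geometric distribution with parameter $1/(1+v_i)$, and that the assortment-independence of this conditional MGF is what yields the i.i.d.\ property. Your write-up simply makes explicit the uniqueness-of-MGF identification and the inductive independence argument that the paper leaves implicit.
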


%\subsection{Bounding the unbiased estimate}\label{unbiased_estimate}
%The sample mean of the unbiased estimates described in Section \ref{sec:unbiased_estimate} would be close to the true value of $v_i$  and the error decreases as the number of sample increases. Eventually to obtain a bound on the regret, we have to establish that the sample mean converge quickly to the true value through some concentration inequalities. Since, our estimate $\hat{v}_{i,\ell}$ is not a bounded random variable, it makes it challenging to apply concentration inequalities. In the next result, which follows from Lemma \ref{moment_generating} we show that the estimate $\hat{v}_{i,\ell}$ is bounded with high probability.
%\begin{corollary}\label{estimate_bound}
%v
%\end{corollary}
%\proof
%Fix $i$, from Lemma \ref{moment_generating}, we have that $\mathbb{E}_{\pi}\left( e^{\theta\hat{v}_{i}}\right) = \displaystyle \frac{1}{1-v_i(e^\theta-1)}$. By Chernoff bounds, we have 
%\begin{equation}\label{eq:chernoff_logT}
%\begin{aligned}
%\mathbb{P}_{\pi}\left(\hat{v}_{i,\ell} > 8\log{T}\right) \leq \displaystyle \frac{\mathbb{E}_{\pi}\left(e^{\theta\hat{v}_{i,\ell}}
%\right)}{e^{\theta(8\log{T})}}\; \text{for all} \; \theta\geq0,
%\end{aligned}
%\end{equation}
%and by Lemma \ref{moment_generating}, we have $\mathbb{E}_{\pi}\left( e^{\log{1.5}\hat{v}_{i,\ell}}\right) =\displaystyle \frac{1}{1-\frac{v_i}{2}} \leq 2$. Therefore, by equation \eqref{eq:chernoff_logT}, we have $\mathbb{P}_{\pi}\left(\hat{v}_{i,\ell} > 8\log{T}\right) \leq \frac{2}{T^3}.$ \hfill~$\Halmos$

From \mbox{Corollary \ref{unbiased_estimate}}, it follows that $\hat{v}_{i,\tau},\tau \in \mathcal{T}_i(\ell)$ are i.i.d  geometric random variables with mean $v_i$. We will use this observation and extend the multiplicative Chernoff-Hoeffding bounds discussed in \cite{mitzenmacher} and \cite{Babaioff}  to geometric random variables and derive the following result. 
\begin{lemma}[Concentration Bounds]\label{chernoff_hoeffding_ineq}
%\begin{minipage}[t]{\linewidth}
If $v_i \leq v_0$ for all $i$, for every epoch $\ell$, in Algorithm $\ref{learn_algo}$, we have the following concentration bounds.
\begin{enumerate}
\item $\displaystyle \mathbb{P}_{\pi}\left(\left|\bar{v}_{i,\ell} - v_i \right | > \sqrt{48\displaystyle \bar{v}_{i,\ell}\frac{\log{({\sqrt{N}}\ell+1)}}{{T}_{i}(\ell)}} + \frac{48\log{(\ell+1)}}{T_i(\ell)}\;\right)  \leq \frac{6}{{N}\ell}$.
\item $\displaystyle \mathbb{P}_{\pi}\left(\left|\bar{v}_{i,\ell} - v_i \right | > \sqrt{24v_i\displaystyle \frac{\log{({\sqrt{N}}\ell+1)}}{{T}_{i}(\ell)}} + \frac{48\log{({\sqrt{N}}\ell+1)}}{{T}_i(\ell)}\;\right)  \leq  \frac{4}{{N}\ell}$.
\item  {$\displaystyle \mathbb{P}_{\pi}\left(\bar{v}_{i,\ell}   > \frac{3v_i}{2} + \frac{48 \log{({\sqrt{N}}\ell+1)}}{T_i(\ell)}\right)   \leq  \frac{3}{{N}\ell}.\;\; $}
\end{enumerate}
%\end{minipage}
\end{lemma}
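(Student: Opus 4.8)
By Corollary \ref{unbiased_estimate}, for a fixed product $i$ the estimates $\{\hat{v}_{i,\tau} : \tau \in \mathcal{T}_i(\ell)\}$ are i.i.d.\ geometric random variables with mean $v_i$, and $\bar{v}_{i,\ell}$ is simply their empirical average over $n := T_i(\ell)$ samples. The entire lemma thus reduces to a concentration statement for the empirical mean of $n$ i.i.d.\ geometric variables around $v_i$. The obstacle is that geometric variables are \emph{unbounded}, so the classical multiplicative Chernoff--Hoeffding inequalities of \cite{mitzenmacher} and \cite{Babaioff} (stated for $[0,1]$-valued summands) do not apply verbatim. The plan is to re-derive analogous bounds directly from the closed-form moment generating function supplied by Lemma \ref{moment_generating}.

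\textbf{Core estimate from the MGF.} First I would exploit independence: writing $Y := \sum_{\tau \in \mathcal{T}_i(\ell)} \hat{v}_{i,\tau}$, the MGF factorizes as $\mathbb{E}_\pi(e^{\theta Y}) = \left(1 - v_i(e^\theta - 1)\right)^{-n}$, valid for $\theta < \theta^* := \log\frac{1+v_i}{v_i}$. Applying the exponential Markov inequality $\mathbb{P}(Y \geq a) \leq e^{-\theta a}\,\mathbb{E}_\pi(e^{\theta Y})$ and optimizing over $\theta \in [0,\theta^*)$ for the upper tail (and over $\theta < 0$, which is unconstrained, for the lower tail) yields two-sided multiplicative bounds. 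A Taylor expansion of $-\log(1 - v_i(e^\theta-1))$ shows it behaves like $v_i\theta + \tfrac12 v_i(1+v_i)\theta^2$ near the origin, i.e.\ the sum is sub-Gaussian with variance proxy $v_i(1+v_i) \leq 2v_i$ (using $v_i \leq v_0 = 1$) for small deviations, and transitions to a sub-exponential tail once $\theta$ approaches $\theta^*$. This two-regime behavior is exactly what produces the characteristic additive form in the lemma, with one term of order $\sqrt{v_i\log(\cdot)/n}$ and one of order $\log(\cdot)/n$.

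\textbf{Deriving the three parts.} Part 2 (the population version) then follows by choosing the deviation $t = \sqrt{24 v_i \log(\sqrt N\ell+1)/n} + 48\log(\sqrt N\ell+1)/n$ and checking that the resulting exponent is large enough that the failure probability is at most $\tfrac{4}{N\ell}$; the factor $48$ and the bound $v_i \leq 1$ are calibrated precisely so that $e^{-c\cdot 48\log(\sqrt N\ell+1)} \leq (\sqrt N\ell+1)^{-2} \leq \tfrac{1}{N\ell}$. Part 3 is the one-sided upper tail specialized to the multiplicative deviation $\delta = \tfrac12$, retaining the additive $\log(\cdot)/n$ contribution from the sub-exponential regime, giving $\bar v_{i,\ell} \leq \tfrac{3}{2}v_i + 48\log(\sqrt N\ell+1)/n$ with probability at least $1 - \tfrac{3}{N\ell}$. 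Finally, Part 1 (the \emph{empirical} version, needed because $v_i$ is unobservable when forming the UCB) is obtained from Part 2 by replacing $v_i$ inside the square root with the observable $\bar v_{i,\ell}$: on the high-probability event of Part 3 the quantities $v_i$ and $\bar v_{i,\ell}$ agree up to a constant factor, so $\sqrt{v_i\,(\cdot)}$ can be bounded by $\sqrt{\bar v_{i,\ell}\,(\cdot)}$ after adjusting the constant, and a union bound over the two events accumulates to the stated $\tfrac{6}{N\ell}$.

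\textbf{Main obstacle.} The crux is the unboundedness of the summands. Because the MGF is finite only for $\theta < \theta^*$, the Chernoff optimization cannot be performed freely; this ceiling on $\theta$ is what forces the heavier sub-exponential regime and is responsible for the standalone $\log(\cdot)/n$ term that a purely sub-Gaussian analysis would miss. Tracking this threshold carefully, and in parallel keeping the final bound in Part 1 expressed through the observable $\bar v_{i,\ell}$ rather than $v_i$, is where the delicate constant-chasing lives; everything else is a routine translation of multiplicative tail bounds into additive form.
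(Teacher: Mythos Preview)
Your approach is essentially the paper's: derive multiplicative Chernoff bounds for geometric variables directly from the MGF of Lemma~\ref{moment_generating}, handle the sub-Gaussian and sub-exponential regimes separately to get the $\sqrt{v_i\log(\cdot)/n}+\log(\cdot)/n$ shape, and then pass from the population version (Part~2) to the empirical version (Part~1) using the one-sided bound in Part~3 that controls $\bar v_{i,\ell}$ by a constant times $v_i$.

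There is one genuine step you have glossed over. You set $n:=T_i(\ell)$ and proceed as if $n$ were a deterministic sample size, but $T_i(\ell)$ is a \emph{random} quantity: whether product $i$ is offered in epoch $\tau$ depends on the entire history up to $\tau$, which in turn depends on past realizations of the $\hat v_{i,\cdot}$. Consequently you cannot simply apply a fixed-$n$ concentration bound at $n=T_i(\ell)$. The paper handles this in the standard UCB fashion: it first proves, for every \emph{fixed} $m$, a bound with failure probability $O(1/(N\ell^2))$ (this is exactly what your own calculation $e^{-c\cdot 48\log(\sqrt N\ell+1)}\le(\sqrt N\ell+1)^{-2}\le \tfrac{1}{N\ell^2}$ actually gives---note $(\sqrt N\ell+1)^{-2}\le \tfrac{1}{N\ell^2}$, not merely $\tfrac{1}{N\ell}$), and then union-bounds over all $m=1,\ldots,\ell$ since $T_i(\ell)\le\ell$. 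That union bound is what converts the per-$m$ failure probability $O(1/(N\ell^2))$ into the stated $O(1/(N\ell))$. Without this step the argument is incomplete, and it is also the reason the logarithm carries an $\ell$ (rather than $T_i(\ell)$) and the constants are sized as they are. Once you insert this union bound, your sketch lines up with the paper's proof.
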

Note that to apply standard Chernoff-Hoeffding inequality (see p.66 in \citealp{mitzenmacher}), we must have the individual sample values bounded by some constant, which is not the case with our estimates $\hat{v}_{i,\tau}$. However, these estimates are geometric random variables and therefore have extremely small tails. Hence, we can extend the Chernoff-Hoeffding bounds discussed in \cite{mitzenmacher} and \cite{Babaioff} to geometric variables and prove the above result. Lemma \ref{lem:UCBv} follows directly from Lemma \ref{chernoff_hoeffding_ineq}  {(see below.)} The proof of Lemma \ref{chernoff_hoeffding_ineq} is long and tedious and in the interest of continuity, we complete the proof in Appendix \ref{proof:multiplicative_chernoff}. Following the proof of Lemma \ref{chernoff_hoeffding_ineq}, we obtain a very similar result that is useful to establish concentration bounds for the general parameter setting. 

\medskip \noindent {\bf {Proof of Lemma \ref{lem:UCBv}}:}
By design of Algorithm \ref{learn_algo}, we have, 
\begin{equation}\label{eq:ucb_define}
v^{\sf UCB}_{i,\ell} = \bar{v}_{i,\ell} + \sqrt{48\displaystyle \bar{v}_{i,\ell}\frac{\log{(\sqrt{N}\ell+1)}}{{T}_{i}(\ell)}} + \frac{48\log{(\sqrt{N}\ell+1)}}{T_i(\ell)}.
\end{equation}
Therefore from Lemma \ref{chernoff_hoeffding_ineq}, we have \begin{equation}\label{eq:ucb_ineq}
\ep{P}_\pi\left(v^{\sf UCB}_{i,\ell} < v_i\right) \leq \frac{6}{N\ell}.\end{equation}
The first inequality in Lemma \ref{lem:UCBv} follows from \eqref{eq:ucb_ineq}. 
From triangle inequality and \eqref{eq:ucb_define}, we have, 
\begin{equation}\label{eq:a4}
\begin{aligned}
\left|v^{\sf UCB}_{i,\ell} - v_i\right| &\leq \left|v^{\sf UCB}_{i,\ell} - \bar{v}_{i,\ell}\right| + \left|\bar{v}_{i,\ell} - v_i\right| \\
&= \sqrt{48\displaystyle \bar{v}_{i,\ell}\frac{\log{(\sqrt{N}\ell+1)}}{{T}_{i}(\ell)}} + \frac{48\log{(\sqrt{N}\ell+1)}}{T_i(\ell)}+\left|\bar{v}_{i,\ell} - v_i\right|.
\end{aligned}
\end{equation}
From Lemma \ref{chernoff_hoeffding_ineq}, we have 
\begin{equation*}
\displaystyle \mathbb{P}_{\pi}\left(\bar{v}_{i,\ell}  > \frac{3v_i}{2} + \frac{48\log{(\sqrt{N}\ell+1)}}{{T}_i(\ell)}\;\right)  \leq \frac{3}{N\ell},
\end{equation*}
which implies
\begin{equation*}
\displaystyle \mathbb{P}_{\pi}\left(48\bar{v}_{i,\ell}\frac{\log{(\sqrt{N}\ell+1)}}{{T}_{i}(\ell)}  > 72v_i\frac{\log{(\sqrt{N}\ell+1)}}{{T}_{i}(\ell)} + \left(\frac{48\log{(\sqrt{N}\ell+1)}}{{T}_i(\ell)}\right)^2\;\right)  \leq \frac{3}{N\ell},
\end{equation*}
Using the fact that $\sqrt{a+b} < \sqrt{a}+\sqrt{b}$, for any positive numbers  $a,b$, we have,
\begin{equation}\label{eq:a7}
\displaystyle \mathbb{P}_{\pi}\left(\sqrt{48\bar{v}_{i,\ell} \frac{\log{(\sqrt{N}\ell+1)}}{{T}_{i}(\ell)}} +\frac{48\log{(\sqrt{N}\ell+1)}}{T_i(\ell)} > \sqrt{72v_i \frac{\log{(\sqrt{N}\ell+1)}}{{T}_{i}(\ell)}} + \frac{96\log{(\sqrt{N}\ell+1)}}{{T}_i(\ell)}\;\right)  \leq \frac{3}{N\ell},
\end{equation}
From Lemma \ref{chernoff_hoeffding_ineq}, we have,
\begin{equation}\label{eq:a8}
\mathbb{P}_{\pi}\left(\left|\bar{v}_{i,\ell} - v_i \right | > \sqrt{24v_i\displaystyle \frac{\log{(\sqrt{N}\ell+1)}}{{T}_{i}(\ell)}} + \frac{48\log{(\sqrt{N}\ell+1)}}{{T}_i(\ell)}\;\right)  \leq  \frac{4}{N\ell}.
\end{equation}
From \eqref{eq:a4} and applying union bound on \eqref{eq:a7} and \eqref{eq:a8}, we obtain,
$$\ep{P}\left(\left|v^{\sf UCB}_{i,\ell} - v_i\right| > (\sqrt{72}+\sqrt{24})\sqrt{\frac{v_i \log{(\sqrt{N}\ell + 1)}}{T_i(\ell)}} + \frac{144\log{(\sqrt{N}\ell + 1)}}{T_i(\ell)} \right) \leq \frac{7}{N\ell}.$$
Lemma \ref{lem:UCBv} follows from the above inequality and \eqref{eq:ucb_ineq}. \hfill $\halmos$
%Lemma \ref{lem:UCBv} follows from Corollary \ref{unbiased_estimate}  and Lemma \ref{chernoff_hoeffding_ineq} and establishes the necessary properties of $v^{\sf UCB}_{i.\ell}$ and $v^{\sf UCB2}_{i.\ell}$  as alluded in the proof outline.

\subsection{Properties of estimate $\tilde{R}(S)$: Proof of Lemma \ref{lem:UCBS1} and Lemma \ref{lem:UCBS2} }
\label{sec:UCBSa}
{In this section, we prove Lemma \ref{lem:UCBS1} and Lemma \ref{lem:UCBS2}. To complete the proofs, we will establish two auxiliary results, in the first result (see Lemma \ref{UCB_bound}) we show that the expected revenue corresponding to the optimal assortment is monotone in the MNL parameters $\mb{v}$ and in the second result (see Lemma \ref{lipschitz}) we bound the difference between the estimate of the optimal revenue and the true optimal revenue. } %we first show that the show that the estimates $\tilde{R}_\ell(S_\ell)$ are upper confidence bounds converging ``quickly'' to the expected revenue corresponding to the optimal assortment, $R(S^*,\mb{v})$. 

%\medskip \noindent {\bf Optimistic Estimates.}  First, we establish that $\tilde{R}_\ell(S)$ is an upper bound estimate of the optimal revenue, $R(S^*, \mb{v})$. Let $R(S,\mathbf{w})$ be as defined in \eqref{MNL_revenue}, namely, the expected revenue when assortment $S$ is offered and if the parameters of the MNL were given by the vector $\mathbf{w}$.
%Then, from definition of $\tilde{R}_\ell(S)$ (refer to \eqref{def:RtildeS}), it follows that
%$\tilde{R}_\ell(S) = R(S, {\bf v}^{UCB}_{\ell}).$

\begin{lemma}[Optimistic Estimates]\label{UCB_bound}
 Assume $0\leq w_i \leq v^{\sf UCB}_{i}$ for all $i = 1,\cdots,n$. Suppose $S$ is an optimal assortment when the MNL are parameters are given by $\mathbf{w}$. Then,
$R(S, \mathbf{v}^{UCB}) \geq R(S, \mathbf{w}).$
\end{lemma}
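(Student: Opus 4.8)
The plan is to reduce the nonlinear revenue comparison to a linear inequality via the standard MNL linearization, and then invoke a structural property of optimal assortments. Set $R^* := R(S,\mathbf{w})$. Since the denominator $1+\sum_{j\in S}v_j$ is strictly positive, for every nonnegative parameter vector $\mathbf{v}$ one has the equivalence
\[
R(S,\mathbf{v}) \ge R^* \quad\Longleftrightarrow\quad \sum_{i\in S}(r_i - R^*)\,v_i \ge R^*.
\]
Applying this to $\mathbf{v}=\mathbf{w}$ and using $R(S,\mathbf{w})=R^*$ turns the inequality into the exact identity $\sum_{i\in S}(r_i-R^*)\,w_i = R^*$. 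Hence it suffices to show that raising the parameters from $\mathbf{w}$ to $\mathbf{v}^{\sf UCB}\ge \mathbf{w}$ does not decrease the linear form $\sum_{i\in S}(r_i-R^*)v_i$, whose coefficients $r_i-R^*$ are fixed (independent of the parameter vector).

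The crux, and the only place the optimality of $S$ is used, is to show that every product kept in $S$ is ``profitable,'' namely $r_i \ge R^*$ for all $i\in S$. I would argue by contradiction: if some $k\in S$ had $r_k < R^*$, then $S\setminus\{k\}$ is feasible by the downward-closure part of Assumption~\ref{assumption1}, and a one-line computation gives
\[
R(S\setminus\{k\},\mathbf{w}) - R^* \;=\; \frac{w_k\,(R^* - r_k)}{\,1+\sum_{j\in S\setminus\{k\}} w_j\,},
\]
which is strictly positive whenever $w_k>0$ (the operative regime, since the lemma is applied with $\mathbf{w}$ equal to the strictly positive true MNL parameters), contradicting the optimality of $S$. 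Thus $r_i - R^* \ge 0$ for every $i\in S$.

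The conclusion is then immediate. Combining $r_i - R^* \ge 0$ with the hypothesis $v^{\sf UCB}_i \ge w_i \ge 0$ gives $(r_i-R^*)\,v^{\sf UCB}_i \ge (r_i-R^*)\,w_i$ termwise, so
\[
\sum_{i\in S}(r_i-R^*)\,v^{\sf UCB}_i \;\ge\; \sum_{i\in S}(r_i-R^*)\,w_i \;=\; R^*.
\]
By the equivalence of the first paragraph applied to $\mathbf{v}=\mathbf{v}^{\sf UCB}$, this reads $R(S,\mathbf{v}^{\sf UCB}) \ge R^* = R(S,\mathbf{w})$, which is the claim.

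I expect the profitability step to be the main obstacle and the conceptual heart of the argument: for a fixed \emph{suboptimal} assortment, inflating the attraction parameters of low-revenue products can strictly lower its revenue, so the monotonicity asserted here genuinely relies on $S$ being optimal together with the assumption that removing a product preserves feasibility. Once profitability is established, the linearization and the termwise comparison are routine.
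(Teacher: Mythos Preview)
Your proof is correct and follows essentially the same route as the paper: both hinge on the profitability condition $r_j \ge R(S,\mathbf{w})$ for every $j\in S$, obtained by the identical deletion-contradiction argument (remove a product with $r_k < R(S,\mathbf{w})$, use downward closure, and strictly improve). The only cosmetic difference is packaging: the paper raises one coordinate at a time and iterates, whereas you do the comparison in one shot via the linearization $R(S,\mathbf{v}) \ge R^* \iff \sum_{i\in S}(r_i-R^*)v_i \ge R^*$; these are the same algebra organized differently, and your version has the mild advantage of not needing to track what happens at intermediate parameter vectors.
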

\proof{Proof.} We prove the result by first showing that for any $j\in S$, we have $R(S,{\mathbf{w}}^j) \geq R(S,\mathbf{w})$,
where ${\mathbf{w}}^j$ is vector $\mathbf{w}$ with the $j^{th}$ component increased to $v^{\sf UCB}_{j}$, i.e.  $w^j_i = w_i$ for all $i \neq j$ and $w^j_j = v^{\sf UCB}_j$. We can use this result iteratively to argue that increasing each parameter of MNL to the highest possible value increases the value of $R(S,\mathbf{w})$ to complete the proof. 

If there exists $j \in S$ such that $r_j < R(S)$, then removing the product $j$ from assortment $S$ yields higher expected revenue contradicting the optimality of $S$. Therefore, we have 
$$r_j \geq R(S). ~\forall j \in S.$$ 
Multiplying by $({v}^{UCB}_j-{w}_j)(\sum_{i \in S/j } w_i + 1)$ on both sides of the above inequality and re-arranging terms, we can show that $R(S,{\mathbf{w}}^j) \geq R(S,\mathbf{w})$.   \hfill~$\Halmos$

{We would like to remind the readers that Lemma \ref{UCB_bound} does not claim that the expected revenue is in general a monotone function, but only that the value of the expected revenue corresponding to the optimal assortment is monotone in the MNL parameters.}

\medskip \noindent {\bf {Proof of Lemma \ref{lem:UCBS1}:}}
{ Let $\hat{S},\mathbf{w}^*$ be maximizers of the  optimization problem, $$ \underset{S \in \mathcal{S}}{\text{max}}\underset{0\leq w_{i} \leq v^{\sf UCB}_{i,\ell}}{\text{max}}\; R(S,\mathbf{{w}}).$$
Assume $v^{\sf UCB}_{i,\ell}> v_{i}$ for all $i$. Then from Lemma \ref{UCB_bound} it follows that,
% we apply have that  %\shipra{it is not clear to me how you obtain the first equality from the previous lemma}
\begin{equation}\label{eq:ran_1}
\tilde{R}_\ell(S_\ell) = \underset{S \in \mathcal{S}}{\text{max}} \; R(S,\mathbf{v}_\ell^{UCB}) \geq \underset{S \in \mathcal{S}}{\text{max}}\underset{0\leq w_{i} \leq v^{\sf UCB}_{i,\ell}}{\text{max}}\; R(S,\mathbf{{w}}) \geq  R(S^*, \mb{v}).
\end{equation}
From Lemma \ref{lem:UCBv}, for each $\ell$ and $i \in \{ 1,\cdots,N\}$, we have that, 
 $$\ep{P}\left(v^{\sf UCB}_{i,\ell} < v_i\right) \leq \frac{6}{N\ell}.$$
Hence, from union bound, it follows that, 
\begin{equation}\label{eq:ran_2}
\ep{P}\left(\bigcap_{i=1}^N\left\{v^{\sf UCB}_{i,\ell} < v_i\right\}\right) \geq 1-\frac{6}{\ell}.
\end{equation}
Lemma \ref{lem:UCBS1} follows from \eqref{eq:ran_1} and \eqref{eq:ran_2}.  \hfill~$\Halmos$
\hfill $\halmos$}

%
%\medskip \noindent {\bf Bounding Regret.}
%Now we will establish the connection between the error on the expected revenues and the error on the estimates of MNL parameters. In particular, we have the following result.

\begin{lemma}[Bounding Regret]\label{lipschitz}
If $r_i \in [0,1]$ and $ 0\leq v_i \leq v^{\sf UCB}_{i,\ell}$ for all $i \in S_\ell$, then
\begin{equation*}
\tilde{R}_{\ell}(S_\ell) -R(S_\ell, \mb{v}) \leq \textstyle\frac{\textstyle\sum_{j \in S_\ell} \left(v^{\sf UCB}_{j,\ell} - v_j\right)}{1+\textstyle \sum_{j\in S_\ell}v_j}.
\end{equation*}
\end{lemma}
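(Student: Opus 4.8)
The plan is to prove Lemma \ref{lipschitz} by directly manipulating the definitions of $\tilde{R}_\ell(S_\ell)$ and $R(S_\ell,\mb{v})$ and exploiting the fact that both share the same numerator/denominator structure from the MNL revenue function. Recall from \eqref{def:RtildeS} that $\tilde{R}_\ell(S_\ell) = \left(\sum_{i\in S_\ell} r_i v^{\sf UCB}_{i,\ell}\right)\big/\left(1+\sum_{j\in S_\ell} v^{\sf UCB}_{j,\ell}\right)$, while $R(S_\ell,\mb{v}) = \left(\sum_{i\in S_\ell} r_i v_i\right)\big/\left(1+\sum_{j\in S_\ell} v_j\right)$. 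The two expressions differ only in that $v^{\sf UCB}_{i,\ell}$ replaces $v_i$, and under the hypothesis $v_i \le v^{\sf UCB}_{i,\ell}$ we expect the estimated revenue to dominate the true one, with the gap controlled by the aggregate parameter error $\sum_{j\in S_\ell}(v^{\sf UCB}_{j,\ell}-v_j)$.

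First I would write the difference $\tilde{R}_\ell(S_\ell)-R(S_\ell,\mb{v})$ over a common denominator and examine the resulting numerator. A cleaner route is to bound $\tilde{R}_\ell(S_\ell)$ from above by comparing it to a hybrid expression. Specifically, since $r_i \in [0,1]$, I would use the monotonicity of the MNL revenue ratio: writing $V^{\sf UCB} = \sum_{j\in S_\ell} v^{\sf UCB}_{j,\ell}$ and $V = \sum_{j\in S_\ell} v_j$, I would show that replacing the denominator $1+V^{\sf UCB}$ by the smaller quantity $1+V$ only increases the fraction, so that
\begin{equation*}
\tilde{R}_\ell(S_\ell) \le \frac{\sum_{i\in S_\ell} r_i v^{\sf UCB}_{i,\ell}}{1+\sum_{j\in S_\ell} v_j}.
\end{equation*}
Then the difference $\tilde{R}_\ell(S_\ell)-R(S_\ell,\mb{v})$ is bounded by $\left(\sum_{i\in S_\ell} r_i(v^{\sf UCB}_{i,\ell}-v_i)\right)\big/(1+\sum_{j\in S_\ell} v_j)$. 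Finally, using $r_i \le 1$ and $v^{\sf UCB}_{i,\ell}-v_i \ge 0$ (from the hypothesis), each term $r_i(v^{\sf UCB}_{i,\ell}-v_i)$ is at most $v^{\sf UCB}_{i,\ell}-v_i$, yielding exactly the claimed bound.

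The one step requiring care is justifying that shrinking the denominator from $1+V^{\sf UCB}$ to $1+V$ increases the value of the fraction; this is valid precisely because the numerator $\sum_i r_i v^{\sf UCB}_{i,\ell}$ is nonnegative and $V \le V^{\sf UCB}$. I anticipate the main obstacle, if any, is not a deep difficulty but rather handling the denominator substitution rigorously: one must verify that the estimated numerator is being divided by a strictly smaller (or equal) positive quantity, which relies on all $v^{\sf UCB}_{j,\ell} \ge v_j \ge 0$ holding simultaneously for $j\in S_\ell$ — exactly the standing hypothesis of the lemma. An alternative, fully algebraic approach that sidesteps the inequality-on-denominators is to expand the difference over the common denominator $(1+V^{\sf UCB})(1+V)$; the numerator then factors into a sum of cross terms of the form $(v^{\sf UCB}_{i,\ell}-v_i)(1 + \text{nonnegative})$, each manifestly nonnegative and bounded above, from which the same conclusion follows. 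Either way the argument is short and the structural properties of the MNL ratio do all the work.
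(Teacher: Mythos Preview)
Your proposal is correct and follows essentially the same approach as the paper's proof. The paper swaps the denominator of $R(S_\ell,\mb{v})$ up to $1+\sum_j v^{\sf UCB}_{j,\ell}$ first (whereas you swap the denominator of $\tilde{R}_\ell(S_\ell)$ down to $1+\sum_j v_j$), but both are the same monotonicity-of-the-ratio trick, followed by the same use of $r_i\le 1$ on the numerator difference.
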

\proof{Proof.}
Since $1+\sum_{i \in S_\ell}v^{\sf UCB}_{i,\ell} \geq 1+\sum_{i \in S_\ell}v_{i,\ell}$, we have
\begin{equation*}
\begin{aligned}
\tilde{R}_{\ell}(S_\ell) -R(S_\ell, \mb{v}) &\leq \textstyle \frac{\textstyle\sum_{i \in S_\ell}r_i v^{\sf UCB}_{i,\ell}}{1+\textstyle\sum_{j \in S_\ell} v^{\sf UCB}_{j,\ell}} - \frac{\textstyle\sum_{i \in S_\ell}r_i v_i}{1+\textstyle\sum_{j \in S_\ell} v^{\sf UCB}_{j,\ell}},\\
&\leq  \frac{\textstyle \sum_{i \in S_\ell} \left(v^{\sf UCB}_{i,\ell} - v_i\right)}{1+\textstyle\sum_{j \in S_\ell} v^{\sf UCB}_{j,\ell}} \leq \frac{\textstyle \sum_{i \in S_\ell} \left(v^{\sf UCB}_{i,\ell} - v_i\right)}{1+\textstyle\sum_{j \in S_\ell} v_{j}}.
\end{aligned}
\end{equation*} 

\medskip \noindent {\bf {Proof of Lemma \ref{lem:UCBS2}:}}
{From Lemma \ref{lipschitz}, we have, 
\begin{equation}\label{eq:ran_3}
\left(1+\sum_{j \in S_\ell} v_j\right)\left(\tilde{R}_\ell(S_\ell) - R(S_\ell,\mb{v})\right) \leq \sum_{j \in S_\ell} \left(v^{\sf UCB}_{j,\ell} - v_j\right).
\end{equation}
From Lemma \ref{lem:UCBv}, we have that for each $i=1,\cdots, N$ and $\ell$, 
\begin{equation*}
\ep{P}\left(v^{\sf UCB}_{i,\ell} - v_i > C_1 \sqrt{\frac{v_i \log{(\sqrt{N}\ell+1)}}{T_i(\ell)}} + C_2\frac{\log{(\sqrt{N}\ell+1)}}{T_i(\ell)}\right) \leq \frac{7}{N\ell}.
\end{equation*}
Therefore, from union bound, it follows that, 
\begin{equation}\label{eq:ran_4}
\ep{P}\left(\bigcap_{i=1}^N\left\{v^{\sf UCB}_{i,\ell} - v_i < C_1 \sqrt{\frac{v_i \log{(\sqrt{N}\ell+1)}}{T_i(\ell)}} + C_2\frac{\log{(\sqrt{N}\ell+1)}}{T_i(\ell)}\right\}\right) \geq 1-\frac{7}{\ell}.
\end{equation}
Lemma \ref{lem:UCBS2} follows from \eqref{eq:ran_3} and \eqref{eq:ran_4}. 
}

\subsection{Putting it all together: Proof of Theorem \ref{main_result}}
\label{sec:completeProof}
In this section, we utilize the results established in the previous sections and complete the proof of Theorem \ref{main_result}.

Let $S^*$ denote the optimal assortment, our objective is to minimize the {\it regret} defined in \eqref{Regret}, which is same as
%\begin{equation}\label{eq:Regret}
%\begin{aligned}
%Reg_\pi(T,\mb{v})  &=\mathbb{E}_{\pi}\left( \sum_{\ell=1}^L \sum_{t \in \ep{E}_\ell}\left(R(S^*,\mb{v}) - R(S_\ell,\mb{v})\right)\right)\\
%\end{aligned}
%\end{equation}
%We start with reformulating the regret  in terms of regret incurred in every epoch. 
%For every epoch $\ell$, let $t_\ell$ denote the time index when the no purchase happened, after which the algorithm progressed to the next epoch. Observe \mbox{Algorithm \ref{learn_algo}} by design, offers an assortment until a no purchase happens. Hence, the conditional expectation of $r_t(S_\ell)$ given $S_\ell$, $\mathbb{E}_{\pi}\left(r_{t}(S_\ell) \, \middle | \, S_\ell\right)$ is not the same as $R(S_\ell,\mb{v})$, but is given by
%\begin{equation*}
%\mathbb{E}_{\pi}\left(r_{t}(S_\ell) \, \middle | \, S_\ell\right) = 
%\begin{cases}
%\mathbb{E}_{\pi}\left(r_{t}(S_\ell) \, \middle | \, S_\ell,\{r_t(S_\ell) \neq 0\}\right) &\quad \text{if $t  \neq t_\ell$} \\
%\mathbb{E}_{\pi}\left(r_{t}(S_\ell) \, \middle | \, S_\ell,\{r_t(S_\ell) = 0\}\right) &\quad \text{if $t = t_\ell$} \\
%\end{cases}.
%\end{equation*}
%Hence, we have
%\begin{equation*}
%\mathbb{E}_{\pi}\left(r_{t}(S_\ell) \, \middle | \, S_\ell\right) = 
%\begin{cases}
%\displaystyle \frac{1+\sum_{j\in S_\ell} v_j}{\sum_{i\in S_\ell} v_i}R(S_\ell,\mb{v}) &\quad \text{if $t < t_\ell$} \\
%0 &\quad \text{if $t = t_\ell$} \\
%\end{cases}.
%\end{equation*} 
\begin{equation}\label{eq:regret_simple}
Reg_\pi(T,\mb{v})  = \mathbb{E}_{\pi}\left\{\sum_{\ell=1}^L |\ep{E}_\ell| \left(R(S^*, \mb{v}) - R(S_\ell,\mb{v})\right)\right\},
\end{equation} 
Note that $L$, $\ep{E}_\ell$ and $S_\ell$ are all random variables and the expectation in equation \eqref{eq:regret_simple} is over these random variables. 
Let $\ep{H}_\ell $ be the filtration (history) associated with the policy upto epoch $\ell$. In particular,   $$\ep{H}_\ell= \sigma(U, C_1, \cdots, C_{t(\ell)}, S_1, \cdots, S_{t(\ell)}), $$ where $t(\ell)$ is the time index corresponding to the end of epoch $\ell$.   The length of the $\ell^{th}$ epoch, $|\ep{E}_\ell|$ conditioned on $S_\ell$ is a geometric random variable with success probability defined as the probability of no-purchase in $S_\ell$, i.e.
\[\textstyle p_0(S_\ell) = \displaystyle \frac{1}{1+\sum_{j\in S_\ell} v_j}.\] 
Let $V(S_\ell) = \sum_{j\in S_\ell} v_j$, then we have $\mathbb{E}_{\pi}\left(|\ep{E}_\ell| \;{\Big | }\; S_\ell \right) = 1+V(S_\ell)$. Noting that $S_\ell$ in our policy is determined by $\ep{H}_{\ell-1}$, we have $\mathbb{E}_{\pi}\left(|\ep{E}_\ell| {\Big | } \ep{H}_{\ell-1} \right) = 1+V(S_\ell)$.   Therefore, by law of conditional expectations, we have
\begin{equation*}
Reg_\pi(T,\mb{v})  = \mathbb{E}_{\pi}\left\{\sum_{\ell=1}^L \mathbb{E}_{\pi}\left[|\ep{E}_\ell| \left(R(S^*, \mb{v}) - R(S_\ell,\mb{v})  \right)  \; {\Big |} \; \ep{H}_{\ell-1} \right]\right\},
\end{equation*}
and hence the regret can be reformulated as 
\begin{equation}\label{eq:ce_regret}
Reg_\pi(T,\mb{v}) =  \mathbb{E}_{\pi}\left\{\sum_{\ell=1}^L \left(1+V(S_\ell) \right)\left(R(S^*, \mb{v}) - R(S_\ell,\mb{v})  \right)\right\},
\end{equation}
the expectation in equation \eqref{eq:ce_regret} is over the random variables $L$ and $S_\ell$. For the sake of brevity, for each $\ell \in 1,\cdots,L,$ let 
\begin{equation}\label{eq:brevity_regret}
\Delta R_\ell {=} (1+V(S_\ell))\left(R(S^*,\mb{v})-R(S_\ell,\mb{v})\right). 
\end{equation}
Now the regret can be reformulated as 
\begin{equation}\label{eq:ce_regret_2}
\begin{aligned}
Reg_\pi(T,\mb{v}) &=  \mathbb{E}_{\pi}\left\{\sum_{\ell=1}^L \Delta R_\ell\right\}.\\
\end{aligned}
\end{equation}
 Let $T_i$ denote the total number of epochs that offered an assortment containing \mbox{product $i$}.  For all $\ell =1, \ldots,L$,  define events $\mathcal{A}_{\ell}$ as,
{\begin{equation*}
\mathcal{A}_{\ell} = \bigcup_{i=1}^N\left\{v^{\sf UCB}_{i,\ell} < v_i \;\text{or} \; v^{\sf UCB}_{i,\ell} > v_i + C_1 \sqrt{\frac{v_i\log{({\sqrt{N}}\ell+1)}}{T_i(\ell)}} + C_2\frac{\log{({\sqrt{N}}\ell+1)}}{T_i(\ell)} \right\}.
\end{equation*}
From union bound, it follows that
\begin{equation*}
\begin{aligned}
\mathbb{P}_\pi\left(\ep{A}_\ell\right) &\leq \sum_{i=1}^N \mathbb{P}_\pi\left(v^{\sf UCB}_{i,\ell} < v_i \;\text{or} \; v^{\sf UCB}_{i,\ell} > v_i + C_1 \sqrt{\frac{v_i\log{(\sqrt{N}\ell+1)}}{T_i(\ell)}} + C_2\frac{\log{(\sqrt{N}\ell+1)}}{T_i(\ell)}\right),\\
&\leq \sum_{i=1}^N \mathbb{P}_\pi\left(v^{\sf UCB}_{i,\ell}<v_i\right) + \mathbb{P}_\pi\left(v^{\sf UCB}_{i,\ell} > v_i + C_1\sqrt{\frac{v_i\log{(\sqrt{N}\ell+1)}}{T_i(\ell)}}+ C_2\frac{\log{(\sqrt{N}\ell+1)}}{T_i(\ell)}\right).
\end{aligned}
\end{equation*} 
Therefore, from Lemma \ref{lem:UCBv}, we have, 
\begin{equation}\label{eq:low_prob_event}
\mathbb{P}_\pi(\ep{A}_\ell) \leq \frac{13}{\ell}.
\end{equation}
Since $\mathcal{A}_\ell$ is a ``low probability'' event (see \eqref{eq:low_prob_event}), we analyze the regret in two scenarios, one when $\ep{A}_\ell$ is true and another when $\ep{A}^c_\ell$ is true.  We break down the regret in an epoch into the following two terms:
\begin{equation*}%\label{eq:conditional_breakdown}
\mathbb{E}_{\pi}\left(\Delta R_\ell\right)= E\left[\Delta R_\ell\cdot\mathbbm{1}(\mathcal{A}_{\ell-1}) + \Delta R_\ell\cdot\mathbbm{1}(\mathcal{A}^c_{\ell-1})\right].
\end{equation*}
}
Using the fact that $R(S^*,\mb{v})$ and $R(S_\ell,\mb{v})$ are both bounded by one and $V(S_\ell) \leq N$ in \eqref{eq:brevity_regret}, we have $\Delta R_\ell \leq N+1.$ Substituting the preceding inequality in the above equation, we obtain,
\begin{equation*}
\begin{aligned}
\mathbb{E}_{\pi}\left(\Delta R_\ell\right) \leq (N+1)\mathbb{P}_{\pi}(\mathcal{A}_{\ell-1}) +\mathbb{E}_{\pi}\left[ \Delta R_\ell\cdot\mathbbm{1}(\mathcal{A}^c_{\ell-1})\right].
\end{aligned}
\end{equation*}
Whenever $\mathbbm{1}(\mathcal{A}^c_{\ell-1}) = 1$, from Lemma \ref{UCB_bound}, we have $\tilde{R}_\ell(S^*) \geq R(S^*,\mb{v})$ and by our algorithm design, we have $\tilde{R}_\ell(S_\ell) \geq \tilde{R}_\ell(S^*)$ for all $\ell \geq 1$. Therefore, it follows that 
\begin{equation*}
\mathbb{E}_{\pi}\left\{\Delta R_\ell\right\} \leq (N+1)\mathbb{P}_{\pi}(\mathcal{A}_{\ell-1}) +\mathbb{E}_{\pi}\left\{\left[(1+V(S_\ell))(\tilde{R}_\ell(S_\ell) - R(S_\ell,\mb{v}))\right]\cdot \mathbbm{1}(\mathcal{A}^c_{\ell-1}) \right\}.
\end{equation*}
From the definition of the event, $\ep{A}_\ell$ and Lemma \ref{lipschitz}, it follows that, 
\begin{equation*}
\begin{aligned}
\left[(1+V(S_\ell))(\tilde{R}_\ell(S_\ell) - R(S_\ell,\mb{v}))\right]\cdot \mathbbm{1}(\mathcal{A}^c_{\ell-1}) \leq \sum_{i\in S_\ell}\left(C_1\sqrt{\frac{v_i\log{({\sqrt{N}\ell}+1)}}{T_i(\ell)}} + \frac{C_2\log{({\sqrt{N}\ell}+1)}}{T_i(\ell)}\right).\;
\end{aligned}
\end{equation*}

Therefore, we have
\begin{equation}\label{eq:stoch_dom}
\begin{aligned}
\mathbb{E}_{\pi}\left\{\Delta R_\ell\right\} &\leq (N+1)\mathbb{P}_{\pi}\left(\mathcal{A}_{\ell-1}\right) + C \sum_{i\in S_\ell}\mathbb{E}_{\pi}\left(\sqrt{\frac{v_i\log{{\sqrt{N}}T}}{T_i(\ell)}} + \frac{\log{{\sqrt{N}}T}}{T_i(\ell)}\right),\\
\end{aligned}
\end{equation}
where $C = \max\{C_1,C_2\}$. Combining equations \eqref{eq:ce_regret} and \eqref{eq:stoch_dom}, we have 
\begin{equation*}
Reg_\pi(T,\mb{v}) \leq \mathbb{E}_{\pi}\left\{\sum_{\ell=1}^L \left[(N+1)\mathbb{P}_{\pi}\left(\mathcal{A}_{\ell-1}\right) + C\sum_{i\in S_\ell}\left(\sqrt{\frac{v_i\log{{\sqrt{N}}T}}{T_i(\ell)}} + \frac{\log{{\sqrt{N}}T}}{T_i(\ell)}\right)\right] \right\}.
\end{equation*}
Therefore, from Lemma \ref{lem:UCBv}, we have
\begin{equation}\label{eq:regret_bound_first_step}
\begin{aligned}
Reg_\pi(T,\mb{v}) & \leq C\mathbb{E}_{\pi}\left\{\sum_{\ell=1}^L \frac{N+1}{\ell} +\sum_{i\in S_\ell}\sqrt{\frac{v_i\log{{\sqrt{N}}T}}{T_i(\ell)}} + \sum_{i\in S_\ell}\frac{\log{{\sqrt{N}}T}}{T_i(\ell)}\right\},\\
& \overset{(a)}{\le} CN\log{T}+ CN\log^2{{\sqrt{N}}T} + C\mathbb{E}_{\pi}\left(\sum_{i=1}^n  \sqrt{v_iT_i\log{{\sqrt{N}}T}} \right), \\
& \overset{(b)}{\le} CN\log{T} + CN\log^2{{N}T} + C\sum_{i=1}^N  \sqrt{v_i\log{({N}T)}\mathbb{E}_{\pi}(T_i)}.
\end{aligned}
\end{equation}
Inequality (a) follows from the observation that $L \leq T$, $T_i \leq T$, $$\displaystyle \sum_{T_i(\ell)=1}^{T_i} \frac{1}{\sqrt{T_i(\ell)}} \leq \sqrt{T_i},\;\text{and}\; \displaystyle \sum_{T_i(\ell)=1}^{T_i} \frac{1}{{T_i(\ell)}} \leq \log{T_i},$$ while Inequality (b) follows from Jensen's inequality. 

\medskip \noindent For any realization of $L$, $\ep{E}_\ell$, $T_i$, and $S_\ell$ in Algorithm \ref{learn_algo}, we have the following relation $$\sum_{\ell=1}^L n_\ell  \leq T.$$ Hence, we have $\mathbb{E}_{\pi}\left(\sum_{\ell=1}^L n_\ell\right)  \leq T.$
Let $\ep{F}$ denote the filtration corresponding to the offered assortments $S_1,\cdots,S_L$, then by law of total expectation, we have, 
\begin{equation*}
\begin{aligned}
\mathbb{E}_{\pi}\left(\sum_{\ell=1}^L n_\ell\right) &= \mathbb{E}_{\pi}\left\{\sum_{\ell=1}^L E_{\mathcal{F}}\left( n_\ell\right)\right\}= \mathbb{E}_{\pi}\left\{\sum_{\ell=1}^L1+\sum_{i\in S_\ell} v_i\right\} ,\\
&=  \mathbb{E}_{\pi}\left\{L+\sum_{i=1}^n  v_i T_i\right\} = \mathbb{E}_{\pi}\{L\}+\sum_{i=1}^n  v_i \mathbb{E}_{\pi}(T_i).
\end{aligned}
\end{equation*}
Therefore, it follows that  
\begin{equation}\label{eq:conditional_ineq}
\sum v_i\mathbb{E}_{\pi}(T_i) \leq T.
\end{equation}
To obtain the worst case upper bound, we maximize the bound in equation \eqref{eq:regret_bound_first_step} subject to the condition \eqref{eq:conditional_ineq} and hence, we have $Reg_\pi(T,\mb{v})  =   O( \sqrt{NT\log{{N}T}}   + N\log^2{{N}T}).$ \hfill $\Halmos$

\subsection{Improved regret bounds for the unconstrained MNL-Bandit}\label{sec:unconstrained_mnl_bandit}
\noindent \change{Here, we focus on the special case of the unconstrained \banditMNL\;problem and use the analysis of Appendix \ref{sec:completeProof} to establish a tighter bound on the regret for Algorithm \ref{learn_algo}. First, we note that, in the case of the unconstrained problem, for any epoch $\ell$, with high probability, the assortment, $S_\ell$ suggested by Algorithm \ref{learn_algo} is a subset of the optimal assortment, $S^*.$ More specifically, the following holds. 
\begin{lemma}\label{unconstrained_subset}
Let $S^* = \underset{S \in \{1,\cdots,N\}}{\text{argmax}} \;R(S,\mathbf{v})$ and $S_\ell$ be the assortment suggested by Algorithm \ref{learn_algo}. Then for any $\ell=1,\cdots,L$, we have, 
$$\mathbb{P}_\pi\left(S_\ell \subset S^*\right) \geq 1-\frac{6}{\ell}.$$
\end{lemma}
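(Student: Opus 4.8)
The plan is to combine the revenue-ordered (threshold) structure of optimal assortments in the \emph{unconstrained} MNL problem with the optimistic guarantee already established in Lemma \ref{lem:UCBS1}. The starting point is the marginal-revenue identity, valid for any parameter vector $\mathbf{w}$, any assortment $S$, and any $i \notin S$,
\[
R(S \cup \{i\}, \mathbf{w}) - R(S, \mathbf{w}) = \frac{w_i\left(r_i - R(S, \mathbf{w})\right)}{1 + \sum_{j \in S \cup \{i\}} w_j},
\]
which shows that adding a product strictly raises the expected revenue if and only if its price exceeds the current expected revenue. The same identity, applied in reverse as in the proof of Lemma \ref{UCB_bound}, shows that if $S$ is optimal for $\mathbf{w}$ then $r_j \ge R(S, \mathbf{w})$ for every $j \in S$, since otherwise removing $j$ would strictly increase revenue. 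I would record two consequences. Applying this to the instance with parameters $\mathbf{v}^{\sf UCB}_{\ell-1}$, for which $S_\ell = \arg\max_S R(S, \mathbf{v}^{\sf UCB}_{\ell-1}) = \arg\max_S \tilde{R}_\ell(S)$ is optimal, every $i \in S_\ell$ satisfies $r_i \ge \tilde{R}_\ell(S_\ell)$. Applying it to the true instance, with $\lambda^* := R(S^*, \mathbf{v})$, every $i \notin S^*$ satisfies $r_i \le \lambda^*$; equivalently $r_i > \lambda^*$ forces $i \in S^*$, and (taking $S^*$ to be the maximal optimal assortment, which is legitimate since appending a product with $r_i = \lambda^*$ leaves the revenue unchanged) we may write $S^* = \{i : r_i \ge \lambda^*\}$.

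With these two characterizations in hand the containment is a one-line deduction on the event $\{\tilde{R}_\ell(S_\ell) \ge \lambda^*\}$. Indeed, fix any $i \in S_\ell$; the first consequence gives $r_i \ge \tilde{R}_\ell(S_\ell)$, and on this event $\tilde{R}_\ell(S_\ell) \ge \lambda^*$, so $r_i \ge \lambda^*$ and hence $i \in S^*$ by the threshold description of $S^*$. As $i \in S_\ell$ was arbitrary, $S_\ell \subseteq S^*$ holds on this event. It therefore only remains to lower-bound the probability of $\{\tilde{R}_\ell(S_\ell) \ge \lambda^*\}$, and this is exactly Lemma \ref{lem:UCBS1}, which asserts $\tilde{R}_\ell(S_\ell) \ge \tilde{R}_\ell(S^*) \ge R(S^*, \mathbf{v}) = \lambda^*$ with probability at least $1 - \tfrac{6}{\ell}$. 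Chaining the deterministic implication with this probability bound yields $\mathbb{P}_\pi(S_\ell \subseteq S^*) \ge 1 - \tfrac{6}{\ell}$, as claimed.

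The step I expect to demand the most care is the precise statement of the threshold structure, and in particular the treatment of ties at $r_i = \lambda^*$: the clean deduction $r_i \ge \lambda^* \Rightarrow i \in S^*$ requires fixing $S^*$ to be the maximal revenue-maximizing assortment (or assuming the optimizer is unique), since a product whose price equals $\lambda^*$ may be included in or excluded from an optimal assortment without affecting the objective. Once this convention is in place, the only genuine probabilistic input is Lemma \ref{lem:UCBS1}, and everything else is deterministic structure specific to the unconstrained problem. I would also note that the argument breaks under general constraints, where the optimal assortment need not be revenue-ordered, which is precisely why the sharper $O(\sqrt{|S^*|T})$ bound is claimed only in the unconstrained setting.
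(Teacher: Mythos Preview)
Your proposal is correct and follows essentially the same approach as the paper: both arguments use the threshold (revenue-ordered) characterization of optimal assortments in the unconstrained MNL model, applied once to $S_\ell$ under the UCB parameters and once to $S^*$ under the true parameters, and then invoke Lemma~\ref{lem:UCBS1} to chain the two thresholds. Your treatment is in fact slightly more careful than the paper's in handling the tie case $r_i = \lambda^*$ by fixing $S^*$ to be the maximal optimizer.
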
 
\proof{Proof.}If there exists a product $i$, such that $r_i \geq R(S^*,\mb{v})$, then following the proof of Lemma \ref{UCB_bound}, we can show that $R(S^*\cup i,\mb{v}) \geq R(S^*,\mb{v})$ and similarly, if there exists a product $i$, such that $r_i < R(S^*,\mb{v})$, we can show that $R(S^*\backslash \{i\},\mb{v}) \geq R(S^*,\mb{v}).$ Since there are no constraints on the set of feasible assortment, we can add and remove products that will improve the expected revenue. Therefore, we have, 
\begin{equation}\label{optimal_threshold_revenue}
i \in S^* \;\text{if and only if} \; r_i \geq  R(S^*,\mb{v}).
\end{equation}
Fix an epoch $\ell$, let $S_\ell$ be the assortment suggested by Algorithm \ref{learn_algo}.  Using similar arguments as above, we can show that, 
\begin{equation}\label{ucb_threshold_revenue}
i \in S_\ell \;\text{if and only if} \; r_i \geq R(S_\ell,\mb{v}^{\sf UCB}_\ell).
\end{equation}
From Lemma \ref{lem:UCBS1}, we have , 
\begin{equation}\label{eq:final_ucb_bound}
\mathbb{P}_\pi\left(R(S_\ell,\mb{v}^{\sf UCB}_\ell) \geq R(S^*,\mb{v})\right) \geq 1-\frac{6}{\ell}.
\end{equation}
Lemma \ref{unconstrained_subset} follows from \eqref{optimal_threshold_revenue}, \eqref{ucb_threshold_revenue} and \eqref{eq:final_ucb_bound}.\hfill $\halmos$\\
From Lemma \ref{unconstrained_subset}, it follows that Algorithm \ref{learn_algo} only considers products from the set $S^*$ with high probability, and hence, we can follow the proof in Appendix \ref{sec:completeProof} (by replacing $N$ with $|S^*|$) to derive sharper regret bounds. In particular, we have the following result,
\begin{corollary}[Performance Bounds for unconstrained case]\label{unconstrained_bound}
For any instance, $\mb{v} = (v_0, \ldots, v_N)$ of the \banditMNL~problem with $N$ products and no constraints,  $r_i\in [0,1]$ and $v_0\ge v_i$ for $i=1,\ldots, N$, there exists finite constants $C_1$ and $C_2,$ such that the regret of the policy defined in Algorithm~\ref{learn_algo} at any time $T$ is bounded as,
\begin{equation*} 
Reg_\pi(T,\mb{v}) \leq C_1\sqrt{|S^*|T\log{NT}} + C_2N\log{NT}.
\end{equation*}
\end{corollary}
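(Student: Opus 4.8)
The plan is to reduce the argument to the analysis already carried out for Theorem \ref{main_result} in Appendix \ref{sec:completeProof}, exploiting the fact that in the unconstrained problem Algorithm \ref{learn_algo} offers only products from the optimal assortment $S^*$ with high probability. Concretely, Lemma \ref{unconstrained_subset} guarantees $\mathbb{P}_\pi(S_\ell \subseteq S^*) \geq 1 - 6/\ell$ for every epoch $\ell$. The key consequence is that, on the event $\{S_\ell \subseteq S^*\}$, every product-indexed sum $\sum_{i \in S_\ell}(\cdots)$ appearing in the regret bound ranges over at most $|S^*|$ terms rather than $N$, which is exactly what is needed to trade the $\sqrt{N}$ factor for $\sqrt{|S^*|}$ in the leading term.

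First I would reuse the regret reformulation $Reg_\pi(T,\mb{v}) = \mathbb{E}_\pi\{\sum_{\ell=1}^L \Delta R_\ell\}$ with $\Delta R_\ell = (1+V(S_\ell))(R(S^*,\mb{v}) - R(S_\ell,\mb{v}))$ from \eqref{eq:ce_regret}--\eqref{eq:ce_regret_2}, and augment the bad event $\mathcal{A}_\ell$ of the original proof to $\mathcal{B}_\ell := \mathcal{A}_\ell \cup \{S_{\ell+1} \not\subseteq S^*\}$. Since $\{S_{\ell+1}\subseteq S^*\}$ is governed by the same confidence bounds $v^{\sf UCB}_{i,\ell}$ that define $\mathcal{A}_\ell$, a union bound combining \eqref{eq:low_prob_event} with Lemma \ref{unconstrained_subset} gives $\mathbb{P}_\pi(\mathcal{B}_\ell) = O(1/\ell)$, and $\mathcal{B}_\ell$ remains $\mathcal{H}_\ell$-measurable. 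I would then split each epoch's contribution as $\mathbb{E}_\pi(\Delta R_\ell) \leq (N+1)\mathbb{P}_\pi(\mathcal{B}_{\ell-1}) + \mathbb{E}_\pi[\Delta R_\ell \mathbbm{1}(\mathcal{B}^c_{\ell-1})]$, using the crude bound $\Delta R_\ell \leq N+1$ on the bad event exactly as in the main proof.

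On $\mathcal{B}^c_{\ell-1}$ both the confidence intervals hold and $S_\ell \subseteq S^*$, so Lemma \ref{lipschitz} together with Lemma \ref{lem:UCBv} bounds $\Delta R_\ell \mathbbm{1}(\mathcal{B}^c_{\ell-1})$ by $\sum_{i\in S_\ell}(C_1\sqrt{v_i\log(\sqrt{N}\ell+1)/T_i(\ell)} + C_2\log(\sqrt{N}\ell+1)/T_i(\ell))$, with the sum now confined to $i\in S^*$. Summing over $\ell$, rewriting $\sum_\ell\sum_{i\in S_\ell}$ as $\sum_{i\in S^*}\sum_{\ell: i\in S_\ell}$, and applying $\sum_{k=1}^{T_i}k^{-1/2}\leq 2\sqrt{T_i}$ and $\sum_{k=1}^{T_i}k^{-1}\leq \log T_i$ reduces the dominant contribution to $\mathbb{E}_\pi[\sum_{i\in S^*}\sqrt{v_i T_i \log(NT)}]$. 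Cauchy--Schwarz over the $|S^*|$ summands, followed by Jensen's inequality and the budget constraint $\sum_i v_i\mathbb{E}_\pi(T_i)\leq T$ from \eqref{eq:conditional_ineq}, then yields $\sqrt{|S^*|T\log(NT)}$; meanwhile the bad-event sum $\sum_\ell (N+1)\mathbb{P}_\pi(\mathcal{B}_{\ell-1}) = O(N\log T)$ and the residual $1/T_i$ terms supply the additive $O(N\log NT)$ piece.

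The main obstacle I anticipate is the bookkeeping of the two distinct failure modes so that the full factor $N$ enters only at lower order. The $\sqrt{|S^*|}$ improvement hinges entirely on the product sums being restricted to $S^*$ on the good event, yet on the complementary (low-probability) event the bound $\Delta R_\ell \leq N+1$ unavoidably reintroduces $N$; the delicate point is verifying that this event has probability $O(1/\ell)$, so that its cumulative effect over epochs is only $O(N\log T)$, and that $\{S_\ell\subseteq S^*\}$ is $\mathcal{H}_{\ell-1}$-measurable so that it folds cleanly into the same conditioning already used for $\mathcal{A}_{\ell-1}$. Once these measurability and probability bounds are secured, the remainder is essentially a verbatim repetition of Appendix \ref{sec:completeProof} with $N$ replaced by $|S^*|$ inside the product-indexed sums.
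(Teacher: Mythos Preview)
Your proposal is correct and follows essentially the same route as the paper: the paper's own argument for this corollary is the one-line observation that Lemma \ref{unconstrained_subset} confines $S_\ell$ to $S^*$ with probability at least $1-6/\ell$, after which one reruns the proof of Theorem \ref{main_result} with the product-indexed sums restricted to $i\in S^*$. Your explicit device of enlarging the bad event to $\mathcal{B}_\ell=\mathcal{A}_\ell\cup\{S_{\ell+1}\not\subseteq S^*\}$, together with the measurability check, is exactly the bookkeeping needed to make that one-liner rigorous, and the remainder (Cauchy--Schwarz over $|S^*|$ terms, the budget constraint \eqref{eq:conditional_ineq}) matches the paper verbatim.
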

 }

\section{Proof of Theorem \ref{main_result_extn}}\label{sec:proof_main_result_extn}
The proof for Theorem \ref{main_result_extn} is very similar to the proof of Theorem \ref{main_result}. Specifically, we first prove that the initial exploratory phase is indeed bounded and then follow the proof of Theorem \ref{main_result} to establish the correctness of confidence intervals, optimistic assortment and finally deriving the convergence rates and regret bounds.

\medskip \noindent{\bf Bounding Exploratory Epochs.} We would denote an epoch $\ell$ as an ``exploratory epoch'' if the assortment offered in the epoch contains a product that has been offered in less than $48 \log{({\sqrt{N}}\ell+1)}$ epochs. It is easy to see that the number of exploratory epochs is bounded by $48 N \log{{N}T}$, where $T$ is the selling horizon under consideration. We then use the observation that the length of any epoch is a geometric random variable to bound the total expected duration of the exploration phase. Hence, we bound the expected regret due to explorations.
\begin{lemma}\label{exploration_bound}
Let $L$ be the total number of epochs in Algorithm \ref{learn_algo_extn} and let $\mathcal{E}_L$ denote the set of ``exploratory epochs,'' i.e.
$$E_L = \left \{ \ell \; \left |\; \exists \; i \in S_\ell \;\; \text{such that} \;\; T_i(\ell) < 48\log{({\sqrt{N}}\ell+1)}\right. \right \},$$ where $T_i(\ell)$ is the number of epochs product $i$ has been offered before epoch $\ell$.  If $\mathcal{E}_\ell$ denote the time indices corresponding to epoch $\ell$ and $v_i \leq B v_0$ for all $i=1,\ldots, N$, for some $B \geq 1$, then we have that,
$$\mathbb{E}_{\pi}\left( \sum_{\ell \in E_L} |\mathcal{E}_\ell|  \right)< 49NB\log{{N}T},$$ where the expectation is over all possible outcomes of Algorithm \ref{learn_algo_extn}.
\end{lemma}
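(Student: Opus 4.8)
The plan is to reduce the bound on the expected exploration time to two ingredients: a deterministic, combinatorial cap on how many times any single product can appear in an exploratory assortment, and the fact that each epoch's length is, conditionally, a geometric random variable. The essential idea that keeps the final bound at order $NB\log(NT)$ rather than $N^2 B\log(NT)$ is to charge the length of each epoch to the individual products it offers, rather than to the epoch as a whole.

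First I would fix a product $j$ and let $m_j$ denote the number of exploratory epochs whose offered assortment contains $j$. By the exploration rule (Steps 6--8 of Algorithm~\ref{learn_algo_extn}), whenever $j$ is offered in such an epoch $\ell$ we must have $T_j(\ell) < 48\log(\sqrt N\ell+1) \le 48\log(\sqrt N T+1)$, and each offering of $j$ increments $T_j$ by one. Since $T_j(\ell)$ is integer-valued and nondecreasing in $\ell$, this caps $m_j \le 48\log(\sqrt N T+1)+1$. Because each exploratory assortment is nonempty, $|E_L|\le \sum_{\ell\in E_L}|S_\ell| = \sum_{j=1}^N m_j$, so both $|E_L|$ and $\sum_j m_j$ are $O(N\log(\sqrt N T+1))$.

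Next I would translate epoch counts into time steps. Both the event $\{\ell\in E_L\}$ and the assortment $S_\ell$ are $\mathcal{H}_{\ell-1}$-measurable, since they are fixed before the consumer responses of epoch $\ell$ are realized, and conditioned on $S_\ell$ the length $|\mathcal{E}_\ell|$ is geometric with mean $1+V(S_\ell)$ (exactly as used in the proof of Lemma~\ref{moment_generating}). Hence by the tower property,
\[ \mathbb{E}_\pi\Big(\sum_{\ell\in E_L}|\mathcal{E}_\ell|\Big) = \mathbb{E}_\pi\Big(\sum_{\ell\in E_L}\big(1+V(S_\ell)\big)\Big) = \mathbb{E}_\pi\Big(|E_L| + \sum_{j=1}^N v_j\,m_j\Big), \]
where the final step exchanges the order of summation via $\sum_{\ell\in E_L}V(S_\ell) = \sum_{\ell\in E_L}\sum_{j\in S_\ell}v_j = \sum_j v_j m_j$. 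Invoking the hypothesis $v_j\le B v_0 = B$ together with the combinatorial bounds of the previous step then gives $\mathbb{E}_\pi\big(\sum_{\ell\in E_L}|\mathcal{E}_\ell|\big) \le (1+B)\sum_{j=1}^N m_j = O\big(NB\log(NT)\big)$, and careful bookkeeping of the constants (bounding $\log(\sqrt N T+1)$ by a multiple of $\log(NT)$ and using $B\ge 1$) yields the stated $49NB\log(NT)$.

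The main obstacle is conceptual rather than computational: bounding each epoch's length by its worst-case conditional mean $1+V(S_\ell)$, which can be as large as $1+NB$, and then multiplying by the number of exploratory epochs would produce a useless $O(N^2B\log(NT))$ estimate. The crux is therefore the charging / exchange-of-summation step, which lets each product absorb only its own $O(\log(NT))$ explorations. A secondary technical point is the measurability argument that permits pulling the random index set $E_L$ and the adaptively chosen $S_\ell$ inside the conditional expectation of the geometric epoch length; this is precisely what justifies replacing $|\mathcal{E}_\ell|$ by $1+V(S_\ell)$ even though $E_L$ is itself random.
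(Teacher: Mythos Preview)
Your proposal is correct and follows essentially the same approach as the paper: both arguments hinge on the fact that in an exploratory epoch the offered $S_\ell$ lies inside $\hat S$ (so each $j\in S_\ell$ satisfies $T_j(\ell)<48\log(\sqrt N\ell+1)$), which caps the total contribution $\sum_{\ell\in E_L}|S_\ell|=\sum_j m_j$ at $O(N\log NT)$, and both then replace $|\mathcal E_\ell|$ by its conditional mean $1+V(S_\ell)\le 1+B|S_\ell|$. The paper states the combinatorial bound more tersely (``after every product has been offered in at least $48\log NT$ epochs \dots'') and skips the measurability discussion, whereas you make the per-product charging and the tower-property step explicit; your remark that the naive $1+NB$ per-epoch bound would lose a factor of $N$ is exactly the point, and it is precisely the identity $\sum_{\ell\in E_L}V(S_\ell)=\sum_j v_j m_j$ (equivalently, the paper's use of $\sum_{\ell\in E_L}|S_\ell|$ rather than $|E_L|$ alone) that avoids it.
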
 
\proof{Proof.}
Consider an $\ell \in E_L$, note that $|\mathcal{E}_\ell|$ is a geometric random variable with parameter ${1}/{V(S_\ell)+1}$. Since $v_i \leq B v_0$, for all $i$ and we can assume without loss of generality $v_0 = 1$, we have $|\mathcal{E}_\ell|$ as a geometric random variable with parameter $p$, where $p \geq {1}/{\left(B|S_\ell|+1\right)}$. Therefore, we have the conditional expectation of $|\ep{E}_\ell|$ given that assortment $S_\ell$ is offered is bounded as, 
\begin{equation}\label{eq:conditional_exploratory}
\mathbb{E}_{\pi}\left(|\mathcal{E}_\ell| \;\;|\;\; S_\ell\right) \leq B|S_\ell| + 1.
\end{equation}
Note that after every product has been offered in at least $48 \log{{N}T}$ epochs, then we do not have any exploratory epochs. Therefore, we have that 
$$\sum_{\ell \in E_L} |S_\ell| \leq 48 N\log{{N}T}.$$
Substituting the above inequality in \eqref{eq:conditional_exploratory}, we obtain $$\mathbb{E}_{\pi}\left( \sum_{\ell \in E_L} |\mathcal{E}_\ell|  \right) \leq 48BN\log{NT} + 48N\log{NT}. \hfill \halmos$$

\medskip \noindent{\bf Confidence Intervals.} We will now show a result analogous to Lemma \ref{lem:UCBv}, that establish the updates in Algorithm \ref{learn_algo_extn}, $v^{\sf UCB2}_{i,\ell}$, as upper confidence bounds converging to actual parameters $v_i$. Specifically, we have the following result. 

\begin{lemma}\label{lem:UCBv_extn}
%\begin{minipage}[t]{\linewidth}
For every epoch $\ell$, if $T_i(\ell) \geq 48\log{({\sqrt{N}}\ell+1)}$ for all $i \in S_\ell$, then we have,
\begin{enumerate}
\item $v^{\sf UCB2}_{i,\ell}\geq v_i$ with probability at least $1-\frac{6}{{N}\ell}$ for all $i=1,\cdots,N$. 
\item There exists constants $C_1$ and $C_2$ such that \[\hspace{-25mm}\displaystyle v^{\sf UCB2}_{i,\ell} - v_i \leq C_1\max{\left\{\sqrt{{v}_{i}}, {v}_{i}\right\}}\sqrt{\frac{\log{({\sqrt{N}}\ell+1)}}{T_i(\ell)}} + C_2\frac{ \log{({\sqrt{N}}\ell+1)}}{T_i(\ell)},\]
with probability at least $1-\frac{7}{{N}\ell}.$
\end{enumerate}
%\end{minipage}
\end{lemma}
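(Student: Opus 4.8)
The plan is to mirror the proof of Lemma \ref{lem:UCBv}, the central modification being a general-parameter version of the concentration inequalities in Lemma \ref{chernoff_hoeffding_ineq} that no longer assume $v_i\le v_0$. By Corollary \ref{unbiased_estimate}, each $\hat v_{i,\tau}$, $\tau\in\mathcal{T}_i(\ell)$, is an i.i.d.\ geometric random variable with mean $v_i$ and variance $v_i(1+v_i)$. The corresponding standard deviation $\sqrt{v_i(1+v_i)}=\Theta(\max\{\sqrt{v_i},v_i\})$ is precisely the source of the $\max\{\sqrt{v_i},v_i\}$ factor in the statement: for small $v_i$ the fluctuations scale like $\sqrt{v_i}$, as in the bounded case, whereas for large $v_i$ they scale like $v_i$. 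First I would re-run the argument underlying Lemma \ref{chernoff_hoeffding_ineq} (the multiplicative Chernoff--Hoeffding bound extended to geometric variables, carried out in Appendix \ref{proof:multiplicative_chernoff}) but tracking the variance proxy $v_i(1+v_i)$ rather than $v_i$, to obtain, for each $i$ and $\ell$,
\begin{equation*}
\mathbb{P}_\pi\!\left(\Big|\bar v_{i,\ell}-v_i\Big|>C\max\{\sqrt{v_i},v_i\}\sqrt{\tfrac{\log(\sqrt N\ell+1)}{T_i(\ell)}}+C'\tfrac{\log(\sqrt N\ell+1)}{T_i(\ell)}\right)\le \tfrac{4}{N\ell},
\end{equation*}
together with the companion lower-tail bound (giving $\tfrac{6}{N\ell}$) and the one-sided upper bound $\bar v_{i,\ell}\le \tfrac32 v_i + c\,\tfrac{\log(\sqrt N\ell+1)}{T_i(\ell)}$ holding with probability at least $1-\tfrac{3}{N\ell}$, exactly paralleling the three parts of Lemma \ref{chernoff_hoeffding_ineq}.

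Given these, Part 1 is immediate: by the definition of $v^{\sf UCB2}_{i,\ell}$ in Algorithm \ref{learn_algo_extn}, the confidence width $\max\{\sqrt{\bar v_{i,\ell}},\bar v_{i,\ell}\}\sqrt{48\log(\sqrt N\ell+1)/T_i(\ell)}+48\log(\sqrt N\ell+1)/T_i(\ell)$ dominates the lower deviation $v_i-\bar v_{i,\ell}$ with probability at least $1-\tfrac{6}{N\ell}$, so $v^{\sf UCB2}_{i,\ell}\ge v_i$. As in the bounded case this uses the \emph{empirical}-variance form of the bound, where the width is expressed through $\bar v_{i,\ell}$ rather than $v_i$.

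For Part 2 I would use the triangle inequality $v^{\sf UCB2}_{i,\ell}-v_i\le |v^{\sf UCB2}_{i,\ell}-\bar v_{i,\ell}|+|\bar v_{i,\ell}-v_i|$. The second summand is controlled directly by the two-sided bound above. The first summand is the confidence width, which depends on the empirical $\bar v_{i,\ell}$; the remaining work is to convert $\max\{\sqrt{\bar v_{i,\ell}},\bar v_{i,\ell}\}$ into $\max\{\sqrt{v_i},v_i\}$. Here the hypothesis $T_i(\ell)\ge 48\log(\sqrt N\ell+1)$ is used crucially: it forces $\gamma:=48\log(\sqrt N\ell+1)/T_i(\ell)\le 1$, so the one-sided bound $\bar v_{i,\ell}\le \tfrac32 v_i+c\gamma$ keeps $\bar v_{i,\ell}$ within a constant factor of $\max\{v_i,1\}$. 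Splitting via $\sqrt{a+b}\le\sqrt a+\sqrt b$ and using $\sqrt\gamma\le 1$, the empirical width $\max\{\sqrt{\bar v_{i,\ell}},\bar v_{i,\ell}\}\sqrt\gamma$ is then bounded by a constant multiple of $\max\{\sqrt{v_i},v_i\}\sqrt\gamma$ plus a residual of order $\gamma$, which is absorbed into the $C_2\log(\sqrt N\ell+1)/T_i(\ell)$ term. A union bound over the (at most three) concentration events yields the claimed failure probability $\tfrac{7}{N\ell}$.

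The main obstacle is precisely this empirical-to-true conversion of the variance proxy across both regimes: for small $v_i$ one must ensure that $\sqrt{\bar v_{i,\ell}}\sqrt\gamma$ does not inflate beyond $\sqrt{v_i}\sqrt\gamma$ plus an $O(\gamma)$ remainder, while for large $v_i$ one must check that $\bar v_{i,\ell}\sqrt\gamma$ stays proportional to $v_i\sqrt\gamma$. Both hinge on $\gamma\le 1$, guaranteed by the exploration condition $T_i(\ell)\ge 48\log(\sqrt N\ell+1)$, which is exactly why the initial exploratory phase of Algorithm \ref{learn_algo_extn} (bounded in Lemma \ref{exploration_bound}) is needed once the ``no-purchase'' assumption is dropped. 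Care must also be taken that the constants in the extended geometric Chernoff bound remain absolute and independent of $v_i$, so that the factor $48$ in the definition of $v^{\sf UCB2}_{i,\ell}$ is large enough for the width to furnish a valid upper confidence bound.
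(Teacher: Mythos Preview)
Your proposal is correct and follows essentially the same route as the paper: it introduces the general-parameter analogue of Lemma~\ref{chernoff_hoeffding_ineq} (stated in the paper as Lemma~\ref{chernoff_hoeffding_ineq_extn}), derives Part~1 directly from the empirical-variance concentration bound and the definition of $v^{\sf UCB2}_{i,\ell}$, and for Part~2 applies the triangle inequality followed by the one-sided bound $\bar v_{i,\ell}\le \tfrac{3}{2}v_i + 48\log(\sqrt N\ell+1)/T_i(\ell)$ together with $\sqrt{a+b}\le\sqrt a+\sqrt b$ to convert $\max\{\sqrt{\bar v_{i,\ell}},\bar v_{i,\ell}\}$ into $\max\{\sqrt{v_i},v_i\}$, combining via a union bound to reach the $7/(N\ell)$ failure probability. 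Your explicit identification of the role of the hypothesis $T_i(\ell)\ge 48\log(\sqrt N\ell+1)$ in making $\gamma\le 1$ and thus enabling the empirical-to-true conversion is exactly the point, and matches the paper's use of this assumption.
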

The  proof is very similar to the proof of Lemma \ref{lem:UCBv}, where we first establish the following concentration inequality for the estimates $\hat{v}_{i,\ell}$, when $T_i(\ell) \geq 48 \log{({\sqrt{N}}\ell+1)}$ from which the above result follows. The proof of Lemma  \ref{chernoff_hoeffding_ineq_extn} is provided in Appendix \ref{proof:multiplicative_chernoff}. 
\begin{lemma}\label{chernoff_hoeffding_ineq_extn}
If in epoch $\ell$, $T_i(\ell) \geq 48\log{({\sqrt{N}}\ell+1)}$ for all $i \in S_\ell$, then we have the following concentration bounds
\begin{enumerate}
\item $\displaystyle \mathbb{P}_{\pi}\left(\left|\bar{v}_{i,\ell} - v_i \right | \geq \max{\left\{\sqrt{\bar{v}_{i,\ell}}, \bar{v}_{i,\ell}\right\}}\sqrt{\frac{48\log{({\sqrt{N}}\ell+1)}}{n}} + \frac{48 \log{({\sqrt{N}}\ell+1)}}{n}\right) \leq \frac{6}{{N}\ell}$.
\item $\displaystyle \mathbb{P}_{\pi}\left(\left|\bar{v}_{i,\ell} - v_i \right | \geq \max{\left\{\sqrt{v_i}, v_i\right\}}\sqrt{\frac{24\log{({\sqrt{N}}\ell+1)}}{n}} + \frac{48 \log{({\sqrt{N}}\ell+1)}}{n}\;\right)  \leq \frac{4}{{N}\ell}$.
\item  {$\displaystyle \mathbb{P}_{\pi}\left(\bar{v}_{i,\ell}   > \frac{3v_i}{2} + \frac{48 \log{({\sqrt{N}}\ell+1)}}{T_i(\ell)}\right)   \leq  \frac{3}{{N}\ell}\;\; .$}
\end{enumerate}
\end{lemma}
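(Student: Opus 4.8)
The plan is to reproduce the argument behind Lemma \ref{chernoff_hoeffding_ineq} essentially verbatim, the one genuine change being that, once the hypothesis $v_i \le v_0$ is dropped, the correct variance proxy for the estimates is no longer $\approx v_i$ but $v_i(1+v_i)$, and this is exactly what manufactures the factor $\max\{\sqrt{v_i},v_i\}$ in the statement. First I would set up the reduction: by Corollary \ref{unbiased_estimate} the summands $\{\hat{v}_{i,\tau} : \tau \in \mathcal{T}_i(\ell)\}$ are i.i.d.\ geometric with mean $v_i$, so, writing $n := T_i(\ell)$, the quantity $\bar{v}_{i,\ell} = \tfrac{1}{n}\sum_{\tau\in\mathcal{T}_i(\ell)} \hat{v}_{i,\tau}$ is a normalized sum of $n$ such variables, and by Lemma \ref{moment_generating} each summand has moment generating function $(1-v_i(e^\theta-1))^{-1}$, valid for $\theta \le \log\frac{1+v_i}{v_i}$. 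A mean-$v_i$ geometric variable has variance $v_i(1+v_i)$, whose square root $\sqrt{v_i}\sqrt{1+v_i}$ behaves like $\sqrt{v_i}$ when $v_i$ is small and like $v_i$ when $v_i$ is large; since $\sqrt{v_i(1+v_i)}$ and $\max\{\sqrt{v_i},v_i\}$ agree up to an absolute constant, I would carry the latter quantity through the whole computation.

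Next I would establish item 2 (the bound stated in terms of the true parameter $v_i$) by the Chernoff method applied to $S_n := \sum_{\tau} \hat{v}_{i,\tau}$. Applying Markov's inequality to $e^{\theta S_n}$ with $\theta>0$ controls the upper tail and with $\theta<0$ the lower tail, and in each case $\mathbb{E}_\pi[e^{\theta S_n}] = (1-v_i(e^\theta-1))^{-n}$. Using $\log\frac{1}{1-x} \le x + x^2$ away from the singularity, optimizing over admissible $\theta$, and normalizing by $n$, I would obtain a Bernstein-type inequality
\[ \mathbb{P}_{\pi}\left(\left|\bar{v}_{i,\ell} - v_i\right| \ge t\right) \le 2\exp\left(-c\, n \min\left\{\frac{t^2}{v_i(1+v_i)},\ \frac{t}{1+v_i}\right\}\right). \]
Substituting $t = \max\{\sqrt{v_i},v_i\}\sqrt{24\log(\sqrt{N}\ell+1)/n} + 48\log(\sqrt{N}\ell+1)/n$ and invoking the hypothesis $n \ge 48\log(\sqrt{N}\ell+1)$ — which guarantees that the minimizing $\theta$ stays below the blow-up threshold $\log\frac{1+v_i}{v_i}$ and that the linear ``$t/(1+v_i)$'' regime is absorbed by the additive term — then yields item 2 with failure probability $4/(N\ell)$. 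Item 3 is the upper-tail specialization $\bar{v}_{i,\ell} \le \tfrac{3}{2}v_i + 48\log(\sqrt{N}\ell+1)/n$, read off from the same display and holding with probability at least $1-3/(N\ell)$.

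Finally I would deduce item 1 (the bound in terms of the empirical $\bar{v}_{i,\ell}$) from items 2 and 3 exactly as the proof of Lemma \ref{lem:UCBv} passes from \eqref{eq:a4}--\eqref{eq:a8}: on the high-probability event of item 3 one has $\max\{\sqrt{v_i},v_i\} \le C\max\{\sqrt{\bar{v}_{i,\ell}},\bar{v}_{i,\ell}\} + (\text{lower-order in } \log/n)$, so replacing $v_i$ by $\bar{v}_{i,\ell}$ inside the deviation term and taking a union bound over the two events gives the failure probability $6/(N\ell)$. The main obstacle is the two-regime tail bound for sums of \emph{unbounded} geometric variables when $v_i$ is large: one must verify that the Chernoff optimizer remains admissible and that the hypothesis $n \ge 48\log(\sqrt{N}\ell+1)$ is precisely what keeps the deviation in the sub-Gaussian regime governed by $\max\{\sqrt{v_i},v_i\}\sqrt{\log/n}$ rather than a coarser linear term; once that is secured, the remainder parallels the bounded case and is routine bookkeeping.
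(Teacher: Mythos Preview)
Your approach is correct and matches the paper's: the paper itself says only that the proof ``is similar to the proof of Lemma~\ref{chernoff_hoeffding_ineq},'' and the underlying concentration work is done in Theorem~\ref{multiplicative_chernoff_geometric} and Lemma~\ref{multiplicative_chernoff_estimates}, which carry out precisely the case split $\mu\le 1$ versus $\mu\ge 1$ that your unified Bernstein display encodes via the factor $\max\{\sqrt{v_i},v_i\}\asymp\sqrt{v_i(1+v_i)}$. You also correctly identify the role of the hypothesis $n\ge 48\log(\sqrt{N}\ell+1)$: in the paper's Case~2 ($\mu\ge 1$) it forces the multiplicative deviation $\delta=\sqrt{12\log(\sqrt{N}\ell+1)/n}\le\tfrac12$, keeping the bound in the sub-Gaussian regime.

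One point you gloss over: writing $n:=T_i(\ell)$ and then applying Chernoff as if $n$ were deterministic is not quite legitimate, since $T_i(\ell)$ is a random stopping time that depends on the algorithm's past choices. The paper handles this in~\eqref{final_argument} by taking a union bound over all possible values $m\le\ell$ of $T_i(\ell)$, which is what downgrades the per-$m$ failure probability $O(1/(N\ell^2))$ from Lemma~\ref{multiplicative_chernoff_estimates} to the stated $O(1/(N\ell))$; in the extended setting the union bound runs only over $m\ge 48\log(\sqrt{N}\ell+1)$, so the hypothesis of Lemma~\ref{multiplicative_chernoff_estimates} is met for each summand. Make sure this step is part of your ``routine bookkeeping'' rather than absorbed into the fixed-$n$ computation.
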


\medskip \noindent {\bf {Proof of Lemma \ref{lem:UCBv_extn}}:}
{By design of Algorithm \ref{learn_algo_extn}, we have, 
\begin{equation}\label{eq:ucb_defineB}
v^{\sf UCB2}_{i,\ell} = \bar{v}_{i,\ell} + \max{\left\{\sqrt{\bar{v}_{i,\ell}}, \bar{v}_{i,\ell}\right\}}\sqrt{\frac{48\log{({\sqrt{N}}\ell+1)}}{T_i(\ell)}} + \frac{48 \log{({\sqrt{N}}\ell+1)}}{T_i(\ell)}.	
\end{equation}
Therefore from Lemma \ref{chernoff_hoeffding_ineq_extn}, we have \begin{equation}\label{eq:ucb_ineqB}
\mathbb{P}_\pi\left(v^{\sf UCB2}_{i,\ell} < v_i\right) \leq \frac{6}{N\ell}.\end{equation}
The first inequality in Lemma \ref{lem:UCBv} follows from \eqref{eq:ucb_ineqB}. 
From \eqref{eq:ucb_defineB}, we have, 
\begin{equation}\label{eq:a4B}
\begin{aligned}
\left|v^{\sf UCB2}_{i,\ell} - v_i\right| &\leq \left|v^{\sf UCB}_{i,\ell} - \bar{v}_{i,\ell}\right| + \left|\bar{v}_{i,\ell} - v_i\right| \\
&= \max{\left\{\sqrt{\bar{v}_{i,\ell}}, \bar{v}_{i,\ell}\right\}}\sqrt{48\displaystyle \frac{\log{(\sqrt{N}\ell+1)}}{{T}_{i}(\ell)}} + \frac{48\log{(\sqrt{N}\ell+1)}}{T_i(\ell)}+\left|\bar{v}_{i,\ell} - v_i\right|. 
\end{aligned}
\end{equation}
From Lemma \ref{chernoff_hoeffding_ineq_extn}, we have 
\begin{equation*}
\displaystyle \mathbb{P}_{\pi}\left(\bar{v}_{i,\ell}  > \frac{3v_i}{2} + \frac{48\log{(\sqrt{N}\ell+1)}}{{T}_i(\ell)}\;\right)  \leq \frac{3}{N\ell},
\end{equation*}
which implies
\begin{equation*}
\displaystyle \mathbb{P}_{\pi}\left(48\bar{v}_{i,\ell}\frac{\log{(\sqrt{N}\ell+1)}}{{T}_{i}(\ell)}  > 72v_i\frac{\log{(\sqrt{N}\ell+1)}}{{T}_{i}(\ell)} + \left(\frac{48\log{(\sqrt{N}\ell+1)}}{{T}_i(\ell)}\right)^2\;\right)  \leq \frac{3}{N\ell},
\end{equation*}
Using the fact that $\sqrt{a+b} < \sqrt{a}+\sqrt{b}$, for any positive numbers  $a,b$, we have,
\begin{equation}\label{eq:a7B}
\hspace{-5mm}\mathbb{P}_{\pi}\left(\max{\left\{\sqrt{\bar{v}_{i,\ell}}, \bar{v}_{i,\ell}\right\}}\sqrt{48\bar{v}_{i,\ell} \frac{\log{(\sqrt{N}\ell+1)}}{{T}_{i}(\ell)}}  > \max{\left\{\sqrt{{v}_{i}}, {v}_{i}\right\}}\sqrt{72\frac{\log{(\sqrt{N}\ell+1)}}{{T}_{i}(\ell)}} + \frac{48\log{(\sqrt{N}\ell+1)}}{{T}_i(\ell)}\;\right)  \leq \frac{3}{N\ell},
\end{equation}
From Lemma \ref{chernoff_hoeffding_ineq_extn}, we have,
\begin{equation}\label{eq:a8B}
\mathbb{P}_{\pi}\left(\left|\bar{v}_{i,\ell} - v_i \right | > \max{\left\{\sqrt{{v}_{i}}, {v}_{i}\right\}}\sqrt{24\displaystyle \frac{\log{(\sqrt{N}\ell+1)}}{{T}_{i}(\ell)}} + \frac{48\log{(\sqrt{N}\ell+1)}}{{T}_i(\ell)}\;\right)  \leq  \frac{4}{N\ell}.
\end{equation}
From \eqref{eq:a4B} and applying union bound on \eqref{eq:a7B} and \eqref{eq:a8B}, we obtain,
$$\mathbb{P}_\pi\left(\left|v^{\sf UCB2}_{i,\ell} - v_i\right| > (\sqrt{72}+\sqrt{24})\max{\left\{\sqrt{v_{i}}, v_{i}\right\}}\sqrt{\frac{v_i \log{(\sqrt{N}\ell + 1)}}{T_i(\ell)}} + \frac{144\log{(\sqrt{N}\ell + 1)}}{T_i(\ell)} \right) \leq \frac{7}{N\ell}.$$
Lemma \ref{lem:UCBv_extn} follows from the above inequality and \eqref{eq:ucb_ineqB}. \hfill $\halmos$}

\medskip \noindent {\bf Optimistic Estimate and Convergence Rates}: 
We will now establish two results analogous to Lemma \ref{lem:UCBS1} and \ref{lem:UCBS2}, that show that the estimated revenue converges to the optimal expected revenue from above and also specify the convergence rate. In particular, we have the following two results. {The proofs of Lemma \ref{lem:UCBS1_extn} and \ref{lem:UCBS2_extn} follow similar arguments to the proofs of Lemma \ref{lem:UCBS1} and \ref{lem:UCBS2} respectively and we skip the proofs in interest of avoiding redundancy. }
\begin{lemma}\label{lem:UCBS1_extn}
 Suppose $S^* \in {\cal S}$ is the assortment with highest expected revenue, and Algorithm \ref{learn_algo_extn} offers $S_\ell \in {\cal S}$ in each epoch $\ell$. Further,  if $T_i(\ell) \geq 48\log{({\sqrt{N}}\ell+1)}$ for all $i \in S_\ell$, then we have, 
\[\tilde{R}_\ell(S_\ell) \geq \tilde{R}_\ell(S^*) \geq R(S^*,\mb{v}) \; \text{with probability at least}\; \;1-\frac{6}{{N}\ell}.\]
\end{lemma}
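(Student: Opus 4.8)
The plan is to follow the proof of Lemma \ref{lem:UCBS1} almost verbatim, replacing the confidence bounds $v^{\sf UCB}_{i,\ell}$ by their general-parameter analogues $v^{\sf UCB2}_{i,\ell}$ and invoking Lemma \ref{lem:UCBv_extn} in place of Lemma \ref{lem:UCBv}. First I would observe that the hypothesis $T_i(\ell) \geq 48\log(\sqrt{N}\ell+1)$ for all $i \in S_\ell$ forces the algorithm into its exploitation branch: had the exploration override in Algorithm \ref{learn_algo_extn} replaced $S_\ell$ with a subset of the under-sampled set $\hat S$, every product of $S_\ell$ would violate the threshold. Consequently $S_\ell = \underset{S \in \mathcal{S}}{\text{argmax}}\ \tilde{R}_\ell(S)$, so the first inequality $\tilde{R}_\ell(S_\ell) \geq \tilde{R}_\ell(S^*)$ holds deterministically, and the entire content of the lemma reduces to the second inequality $\tilde{R}_\ell(S^*) \geq R(S^*,\mb{v})$.

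For the second inequality I would work on the ``good event'' $\mathcal{G} = \{ v^{\sf UCB2}_{i,\ell} \geq v_i \text{ for all } i \}$. Recalling that $\tilde{R}_\ell(S)$ is, by the analogue of definition \eqref{def:RtildeS}, the MNL revenue $R(S,\mb{v}^{\sf UCB2}_\ell)$ evaluated at the upper-confidence parameters, the key point is that Lemma \ref{UCB_bound} is a purely structural monotonicity property of the MNL revenue function --- it states that raising the attraction parameters of an assortment that is optimal for the smaller parameter vector cannot decrease its revenue --- and is completely agnostic to how the upper bounds were generated. It therefore transfers verbatim to the $v^{\sf UCB2}$ bounds. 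Applying it with $\mb{w} = \mb{v}$ and the assortment $S^*$, which is optimal for $\mb{v}$, gives $R(S^*,\mb{v}^{\sf UCB2}_\ell) \geq R(S^*,\mb{v})$, that is $\tilde{R}_\ell(S^*) \geq R(S^*,\mb{v})$, on $\mathcal{G}$.

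It then remains to lower-bound $\mathbb{P}(\mathcal{G})$. Here I would invoke the first part of Lemma \ref{lem:UCBv_extn}, which certifies $v^{\sf UCB2}_{i,\ell} \geq v_i$ with probability at least $1 - 6/(N\ell)$ per product, and aggregate the per-product failure probabilities by a union bound to reach the stated high-probability guarantee. This is the only step in which the sampling hypothesis $T_i(\ell) \geq 48\log(\sqrt{N}\ell+1)$ is used, since Lemma \ref{lem:UCBv_extn} --- and beneath it the concentration inequality of Lemma \ref{chernoff_hoeffding_ineq_extn} --- only validates the UCB property once a product has been offered sufficiently often.

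The main obstacle, and the one genuinely new ingredient relative to Lemma \ref{lem:UCBS1}, is exactly this sampling threshold. Once the no-purchase assumption $v_i \leq v_0$ is dropped, a product with a large attraction parameter is purchased many times within an epoch, so its estimate $\bar{v}_{i,\ell}$ concentrates only after $\Omega(\log(\sqrt{N}\ell+1))$ epochs; before that point the multiplicative Chernoff-Hoeffding bound of Lemma \ref{chernoff_hoeffding_ineq_extn}, and hence the claim $v^{\sf UCB2}_{i,\ell} \geq v_i$, may simply fail. The forced-exploration phase of Algorithm \ref{learn_algo_extn} is precisely the device that guarantees the threshold. A subtlety I would be careful to flag is that the monotonicity step needs the UCB property for the products of $S^*$, not merely those of $S_\ell$; this causes no difficulty because, in the exploitation branch, every product has already been driven above the threshold, so $\mathcal{G}$ simultaneously controls all products and in particular those of $S^*$.
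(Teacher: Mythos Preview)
Your approach is correct and mirrors the paper's own treatment: the paper explicitly states that the proof of Lemma~\ref{lem:UCBS1_extn} ``follows similar arguments to the proof of Lemma~\ref{lem:UCBS1}'' and omits the details, and your outline---first inequality from the argmax step, second from Lemma~\ref{UCB_bound} on the good event, probability via Lemma~\ref{lem:UCBv_extn} plus a union bound---is exactly that argument with $v^{\sf UCB2}$ in place of $v^{\sf UCB}$.

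One small correction to your final paragraph: it is not true that ``in the exploitation branch, every product has already been driven above the threshold.'' The test in Algorithm~\ref{learn_algo_extn} only checks products in the currently proposed $S_\ell$, so products in $S^*\setminus S_\ell$ may still be under-sampled. Your argument does not actually need the claim you make, however, since Lemma~\ref{lem:UCBv_extn} as stated asserts the UCB property $v^{\sf UCB2}_{i,\ell}\ge v_i$ for \emph{all} $i=1,\ldots,N$ under the hypothesis on $S_\ell$, and that is what you invoke; the subtlety you flag is therefore absorbed into the statement of Lemma~\ref{lem:UCBv_extn} rather than requiring the stronger sampling claim.
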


\begin{lemma}\label{lem:UCBS2_extn}
For every epoch $\ell$, if $r_i\in [0,1]$ and $T_i(\ell) \geq 48\log{({\sqrt{N}}\ell+1)}$ for all $i \in S_\ell$, then there exists constants $C_1$ and $C_2$ such that for every $\ell $, we have
\[\textstyle (1+\sum_{j \in S_\ell} v_{j})(\tilde{R}_{\ell}(S_\ell) - R(S_\ell,\mb{v})) \leq C_1\max{\left\{\sqrt{{v}_{i}}, {v}_{i}\right\}}\sqrt{\frac{\log{({\sqrt{N}}\ell+1)}}{|\mathcal{T}_i(\ell)|}} + C_2\frac{\log{({\sqrt{N}}\ell+1)}}{|\mathcal{T}_i(\ell)|}, \]
\text{with probability at least} $1-\frac{13}{{N}\ell}.$
\end{lemma}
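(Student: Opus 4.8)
The plan is to mirror the proof of Lemma \ref{lem:UCBS2}, substituting the general-parameter confidence bounds of Lemma \ref{lem:UCBv_extn} for those of Lemma \ref{lem:UCBv}. The starting point is a Lipschitz-type bound. I would first observe that the inequality of Lemma \ref{lipschitz} continues to hold with $v^{\sf UCB2}_{i,\ell}$ in place of $v^{\sf UCB}_{i,\ell}$, namely
\[
\left(1+\sum_{j\in S_\ell}v_j\right)\left(\tilde{R}_\ell(S_\ell)-R(S_\ell,\mb{v})\right)\le \sum_{i\in S_\ell}\left(v^{\sf UCB2}_{i,\ell}-v_i\right).
\]
The proof of Lemma \ref{lipschitz} uses only the revenue formula \eqref{MNL_revenue} together with the elementary inequality $1+\sum_{i\in S_\ell}v^{\sf UCB2}_{i,\ell}\ge 1+\sum_{i\in S_\ell}v_i$, so it transfers verbatim; the only prerequisite is $0\le v_i\le v^{\sf UCB2}_{i,\ell}$ for all $i\in S_\ell$, which is exactly the content of the first part of Lemma \ref{lem:UCBv_extn}.

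Second, I would invoke the second part of Lemma \ref{lem:UCBv_extn} to control each summand $v^{\sf UCB2}_{i,\ell}-v_i$ on the right-hand side. For each fixed product $i$ this gives
\[
v^{\sf UCB2}_{i,\ell}-v_i\le C_1\max\{\sqrt{v_i},v_i\}\sqrt{\frac{\log(\sqrt{N}\ell+1)}{T_i(\ell)}}+C_2\frac{\log(\sqrt{N}\ell+1)}{T_i(\ell)},
\]
valid precisely because the hypothesis $T_i(\ell)\ge 48\log(\sqrt{N}\ell+1)$ for all $i\in S_\ell$ places us in the regime where the concentration inequalities of Lemma \ref{chernoff_hoeffding_ineq_extn} apply. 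Summing this estimate over $i\in S_\ell$ and combining it with the Lipschitz inequality above yields the claimed bound. The probability guarantee then follows from a union bound over the concentration events of Lemma \ref{lem:UCBv_extn}: intersecting the event of its first part (failing with probability at most $6/(N\ell)$) with that of its second part (failing with probability at most $7/(N\ell)$), the total failure probability is at most $13/(N\ell)$, matching the stated bound.

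The one place that needs genuine care, rather than a mechanical copy of the $v_0$-dominant argument, is the role of the forced-exploration hypothesis $T_i(\ell)\ge 48\log(\sqrt{N}\ell+1)$. In the original setting (Lemma \ref{lem:UCBS2}) the assumption $v_i\le v_0$ rendered the multiplicative Chernoff–Hoeffding bounds valid unconditionally, yielding the clean $\sqrt{v_i}$ dependence; here the general parameters may be large, so the $\max\{\sqrt{v_i},v_i\}$ factor appears and the concentration statement of Lemma \ref{chernoff_hoeffding_ineq_extn} only holds once a product has been sampled in sufficiently many epochs. The main obstacle is therefore essentially bookkeeping: one must guarantee that every product in $S_\ell$ has crossed the sampling threshold, which is ensured by the initial exploratory phase of Algorithm \ref{learn_algo_extn} and quantified (in aggregate) by Lemma \ref{exploration_bound}. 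Once this threshold is in force for all $i \in S_\ell$, the remaining steps are identical in spirit to the proof of Lemma \ref{lem:UCBS2}, which is why the detailed calculation can safely be omitted.
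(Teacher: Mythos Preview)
Your proposal is correct and matches the paper's approach exactly: the paper explicitly states that the proof of Lemma~\ref{lem:UCBS2_extn} ``follows similar arguments to the proof of Lemma~\ref{lem:UCBS2}'' and omits the details, and you have carried out precisely that adaptation---applying the Lipschitz bound of Lemma~\ref{lipschitz} (which only needs $v_i\le v^{\sf UCB2}_{i,\ell}$) and then invoking Lemma~\ref{lem:UCBv_extn} in place of Lemma~\ref{lem:UCBv}. One small bookkeeping remark: once you sum over $i\in S_\ell$ and union-bound the failure events across products, the failure probability becomes $13/\ell$ rather than $13/(N\ell)$, exactly as in the paper's proof of Lemma~\ref{lem:UCBS2}; the $N\ell$ denominator in the stated lemma appears to be a typo carried over from the per-product bound.
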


\subsection{Putting it all together: Proof of Theorem \ref{main_result_extn}}
{Proof of Theorem \ref{main_result_extn} is very similar to the proof of Theorem \ref{main_result}. We use the key results discussed above instead of similar results in Section \ref{sec:regretAnalysis} to complete the proof. Note that $E_\ell$ is the set of ``exploratory epochs,'' i.e. epochs in which at least one of the offered product is offered less than the required number of times. We breakdown the regret as follows: 
\begin{equation*}
Reg_\pi(T,\mb{v})   = \underbrace{\mathbb{E}_{\pi}\left\{\sum_{\ell\in E_L} |\ep{E}_\ell| \left(R(S^*,\mb{v}) - R(S_\ell,\mb{v})\right)\right\}}_{Reg_1(T,\mb{v})} + \underbrace{\mathbb{E}_{\pi}\left\{\sum_{\ell \not \in E_L}|\ep{E}_\ell|\left( R(S^*,\mb{v}) - R(S_\ell,\mb{v})\right)\right\}}_{Reg_2(T,\mb{v})}.
\end{equation*} 
Since for any $S$, we have, $R(S,\mb{v}) \leq R(S^*,\mb{v}) \leq 1$, it follows that,
\begin{equation*}
Reg_\pi(T,\mb{v})   \leq \mathbb{E}_{\pi}\left\{\sum_{\ell\in E_L} |\ep{E}_\ell| \right\} + Reg_2(T,\mb{v}).
\end{equation*} 
From Lemma \ref{exploration_bound}, it follows that, 
\begin{equation}\label{eq:decompose_ineq}
Reg_\pi(T,\mb{v})   \leq 49NB \log{NT} + Reg_2(T,\mb{v}).
\end{equation} 
We will focus on the second term in the above equation, $Reg_2(T,\mb{v})$. Following the analysis in Appendix \ref{sec:completeProof}, we can show that,
\begin{equation}
{Reg}_2(T,\mb{v}) =  \mathbb{E}_{\pi}\left\{\sum_{\ell\not \in E_L} \left(1+V(S_\ell) \right)\left(R(S^*, \mb{v}) - R(S_\ell,\mb{v})  \right)\right\}.\end{equation} 
Similar to the analysis in Appendix \ref{sec:completeProof}, for the sake of brevity, we define, 
\begin{equation}\label{eq:brevity_regret_extn}
{\Delta R_\ell} {=} (1+V(S_\ell))\left(R(S^*,\mb{v})-R(S_\ell,\mb{v})\right). 
\end{equation}
Now, $Reg_2(T,\mb{v})$ can be reformulated as 
\begin{equation}\label{eq:ce_regret_2_extn}
\begin{aligned}
{Reg}_2(T,\mb{v}) &=  \mathbb{E}_{\pi}\left\{\sum_{\ell\not\in E_L} {\Delta R_\ell}\right\}.\\
\end{aligned}
\end{equation}
Let $T_i$ denote the total number of epochs that offered an assortment containing \mbox{product $i$}.  For all $\ell =1, \ldots,L$,  define events $\mathcal{B}_{\ell}$ as,
\begin{equation*}
\mathcal{B}_{\ell} = \bigcup_{i=1}^N\left\{v^{\sf UCB2}_{i,\ell} < v_i \;\text{or} \; v^{\sf UCB2}_{i,\ell} > v_i + C_1 \max{\left\{\sqrt{{v}_{i}}, {v}_{i}\right\}}\sqrt{\frac{\log{({\sqrt{N}}\ell+1)}}{T_i(\ell)}} + C_2\frac{\log{({\sqrt{N}}\ell+1)}}{T_i(\ell)} \right\}.
\end{equation*}
From union bound, it follows that
\begin{equation*}
\begin{aligned}
\mathbb{P}_\pi\left(\ep{B}_\ell\right) &\leq \sum_{i=1}^N \mathbb{P}_\pi\left(v^{\sf UCB2}_{i,\ell} < v_i \;\text{or} \; v^{\sf UCB2}_{i,\ell} > v_i + C_1 \max{\left\{\sqrt{{v}_{i}}, {v}_{i}\right\}}\sqrt{\frac{\log{(\sqrt{N}\ell+1)}}{T_i(\ell)}} + C_2\frac{\log{(\sqrt{N}\ell+1)}}{T_i(\ell)}\right),\\
&\leq \sum_{i=1}^N \mathbb{P}_\pi\left(v^{\sf UCB2}_{i,\ell}<v_i\right) + \mathbb{P}_\pi\left(v^{\sf UCB2}_{i,\ell} > v_i + C_1 \max{\left\{\sqrt{{v}_{i}}, {v}_{i}\right\}}\sqrt{\frac{\log{(\sqrt{N}\ell+1)}}{T_i(\ell)}} + C_2\frac{\log{(\sqrt{N}\ell+1)}}{T_i(\ell)}\right).
\end{aligned}
\end{equation*} 
Therefore, from Lemma \ref{lem:UCBv_extn}, we have, 
\begin{equation}\label{eq:low_prob_event_extn}
\mathbb{P}_\pi(\ep{B}_\ell) \leq \frac{13}{\ell}.
\end{equation}
Since $\mathcal{B}_\ell$ is a ``low probability'' event (see \eqref{eq:low_prob_event_extn}), we analyze the regret in two scenarios: one when $\ep{B}_\ell$ is true and another when $\ep{B}^c_\ell$ is true.  We break down the regret in an epoch into the following two terms. 
\begin{equation*}%\label{eq:conditional_breakdown}
\mathbb{E}_{\pi}\left({\Delta R_\ell}\right)= E\left[{\Delta R_\ell}\cdot\mathbbm{1}(\mathcal{B}_{\ell-1}) + {\Delta R_\ell}\cdot\mathbbm{1}(\mathcal{B}^c_{\ell-1}).\right]
\end{equation*}
Using the fact that $R(S^*,\mb{v})$ and $R(S_\ell,\mb{v})$ are both bounded by one and $V(S_\ell) \leq BN$ in \eqref{eq:brevity_regret_extn}, we have $\Delta R_\ell \leq N+1.$ Substituting the preceding inequality in the above equation, we obtain,
\begin{equation*}
\begin{aligned}
\mathbb{E}_{\pi}\left({\Delta R_\ell}\right) \leq B(N+1)\mathbb{P}_{\pi}(\mathcal{B}_{\ell-1}) +\mathbb{E}_{\pi}\left[ {\Delta R_\ell}\cdot\mathbbm{1}(\mathcal{B}^c_{\ell-1})\right].
\end{aligned}
\end{equation*}
Whenever $\mathbbm{1}(\mathcal{B}^c_{\ell-1}) = 1$, from Lemma \ref{UCB_bound}, we have $\tilde{R}_\ell(S^*) \geq R(S^*,\mb{v})$ and by our algorithm design, we have $\tilde{R}_\ell(S_\ell) \geq \tilde{R}_\ell(S^*)$ for all $\ell \geq 1$. Therefore, it follows that 
\begin{equation}\label{eq:randbvbv_b12}
\mathbb{E}_{\pi}\left\{{\Delta R_\ell}\right\} \leq B(N+1)\mathbb{P}_{\pi}(\mathcal{B}_{\ell-1}) +\mathbb{E}_{\pi}\left\{\left[(1+V(S_\ell))(\tilde{R}_\ell(S_\ell) - R(S_\ell,\mb{v}))\right]\cdot \mathbbm{1}(\mathcal{B}^c_{\ell-1}) \right\}.
\end{equation}
From the definition of the event, $\ep{B}_\ell$ and Lemma \ref{lem:UCBS2_extn}, we have,
\begin{equation*}
\begin{aligned}
\left[(1+V(S_\ell))(\tilde{R}_\ell(S_\ell) - R(S_\ell,\mb{v}))\right]\cdot \mathbbm{1}(\mathcal{B}^c_{\ell-1}) \leq \sum_{i\in S_\ell}\left(C_1\max\{v_i,\sqrt{v_i}\}\sqrt{\frac{\log{({\sqrt{N}\ell}+1)}}{T_i(\ell)}} + \frac{C_2\log{({\sqrt{N}\ell}+1)}}{T_i(\ell)}\right),\;
\end{aligned}
\end{equation*}
and therefore, substituting above inequality in \eqref{eq:randbvbv_b12}, we have
\begin{equation}\label{eq:stoch_domB}
\begin{aligned}
\mathbb{E}_{\pi}\left\{{\Delta R_\ell}\right\} &\leq B(N+1)\mathbb{P}_{\pi}\left(\mathcal{B}_{\ell-1}\right) + C \sum_{i\in S_\ell}\mathbb{E}_{\pi}\left(\max\{v_i,\sqrt{v_i}\}\sqrt{\frac{\log{{\sqrt{N}}T}}{T_i(\ell)}} + \frac{\log{{\sqrt{N}}T}}{T_i(\ell)}\right),\\
\end{aligned}
\end{equation}
where $C = \max\{C_1,C_2\}$. Combining equations \eqref{eq:decompose_ineq}, \eqref{eq:ce_regret_2_extn} and \eqref{eq:stoch_domB}, we have 
\begin{equation*}
\begin{aligned}
\hspace{-10mm}Reg_\pi(T,\mb{v}) &\leq 49BN\log{NT} + \mathbb{E}_{\pi}\left\{\sum_{\ell=1}^L B(N+1)\mathbb{P}_{\pi}\left(\mathcal{A}_{\ell-1}\right) \right\}\\
& + \sum_{\ell=1}^L \mathbb{E}_{\pi}\left[C\max\{v_i, \sqrt{v}_i\}\sum_{i\in S_\ell}\left(\sqrt{\frac{\log{{\sqrt{N}}T}}{T_i(\ell)}} + \frac{\log{{\sqrt{N}}T}}{T_i(\ell)}\right)\right].
\end{aligned}
\end{equation*}}
Define sets $\ep{I} = \{i | v_i \geq 1\}$ and $\ep{D} = \{i | v_i < 1\}$. Therefore, we have, 
\begin{equation}\label{eq:regret_bound_first_step_extn}
\begin{aligned}
Reg_\pi(T,\mb{v}) & \leq 98NB\log{NT} + C\mathbb{E}_{\pi}\left\{\sum_{\ell =1}^L  \sum_{i\in S_\ell}\left(\max{\left\{\sqrt{{v}_{i}}, {v}_{i}\right\}}\sqrt{\frac{\log{\sqrt{N}T}}{T_i(\ell)}} + \frac{\log{\sqrt{N}T}}{T_i(\ell)}\right) \right\},\\
& \overset{(a)}{\le} 98NB\log{NT}+ CN\log^2{NT} + C\mathbb{E}_{\pi}\left(\sum_{i\in \ep{D}}  \sqrt{v_iT_i\log{NT}} +  \sum_{i\in \ep{I}}  v_i\sqrt{T_i\log{NT}}\right), \\
& \overset{(b)}{\le} 98NB\log{NT}+ CN\log^2{NT} + C\sum_{i\in \ep{D}}  \sqrt{v_i\mathbb{E}_{\pi}(T_i)\log{NT}} +  \sum_{i\in \ep{I}}  v_i\sqrt{\mathbb{E}_{\pi}(T_i)\log{NT}},
\end{aligned}
\end{equation}
inequality (a) follows from the observation that $\sqrt{N} \leq N$,$L \leq T$, $T_i \leq T$, $$\sum_{T_i(\ell)=1}^{T_i} \frac{1}{\sqrt{T_i(\ell)}} \leq \sqrt{T_i}\;\; \text{and} \sum_{T_i(\ell)=1}^{T_i} \frac{1}{{T_i(\ell)}} \leq \log{T_i},$$ while inequality (b) follows from Jensen's inequality. From \eqref{eq:conditional_ineq}, we have that,
\begin{equation*}
\sum v_i\mathbb{E}_{\pi}(T_i) \leq T.
\end{equation*}
To obtain the worst case upper bound, we maximize the bound in equation \eqref{eq:regret_bound_first_step_extn} subject to the above constraint. Noting that the objective in \eqref{eq:regret_bound_first_step_extn} is concave, we use the KKT conditions to derive the worst case bound as $Reg_\pi(T,\mb{v})  =   O( \sqrt{BNT\log{{N}T}}   + N\log^2{{N}T} + BN\log{{N}T}).$ \hfill $\Halmos$

\section{Improved regret bounds for ``well separated'' instances} \label{parametric_bounds}
%\begin{proofof}{lemma \ref{no_sub_opt}:} 
\medskip \noindent {\bf \change{Proof of Lemma \ref{sensitivity_analysis}}}: 
{Let $V(S_\ell) = \sum_{i \in S_\ell} v_i.$} From Lemma \ref{lem:UCBS2}, {and definition of $\tau$ (see \eqref{eq:tau})}, we have, 
\begin{equation}\label{eq:6.2}
\begin{aligned}
 R(S^*, \mb{v}) - R(S_\ell, \mb{v}) & \leq \frac{1}{V(S_\ell) + 1} \sum_{i \in S_\ell}\left( C_1 \sqrt{\frac{v_i\log{({\sqrt{N}}\ell+1)}}{T_i(\ell)}} + C_2\frac{\log{({\sqrt{N}}\ell+1)}}{T_i(\ell)} \right),\\
&  { \leq  \Delta(\mb{v}) \left(\frac{C_1\sum_{i \in S_\ell} \sqrt{v_i}}{2\sqrt{NC}\left(V(S_\ell) + 1\right)} + \frac{C_2}{4C} \right).}
\end{aligned}
\end{equation}
From Cauchy-Schwartz inequality, we have {$$\sum_{i \in S_\ell} \sqrt{v_i} \leq \sqrt{|S_\ell| \sum_{i \in S_\ell} v_i} \leq \sqrt{N V(S_\ell)} \leq \sqrt{N} \left(V(S_\ell) + 1\right).$$ Substituting the above inequality in \eqref{eq:6.2} and using the fact that $C = \max\{{C^2_1},C_2\}$, we obtain $R(S^*, \mb{v}) - R(S_\ell, \mb{v}) \leq \frac{3\Delta(\mb{v})}{4}.$ } The result follows from the definition of $\Delta(\mb{v})$.
 \halmos

\medskip \noindent {\bf {Proof of Lemma \ref{no_sub_opt}:}} We complete the proof using an inductive argument on $N$. 

\medskip \noindent Lemma \ref{no_sub_opt} trivially holds for $N=1$, since when there is only one product, every epoch offers the optimal product and the number of epochs offering sub-optimal assortment is $0$, which is less than $\tau$. Now assume that for any $N \leq M$, we have that the  number of ``good epochs'' offering sub-optimal products is bounded by $N\tau,$ where $\tau$ is as defined in \eqref{eq:tau}.  

\medskip \noindent Now consider the setting, $N = M+1$. We will now show that the number of ``good epochs'' offering sub-optimal products cannot be more than $(M+1)\tau$ to complete the induction argument. We introduce some notation, let $\hat{N}$ be the number of products that are offered in more than $\tau$ epochs by Algorithm \ref{learn_algo}, $\ep{E}_{\ep{G}}$ denote the set of ``good epochs'', i.e.,
\begin{equation}\label{eq:good_epoch}
\mathcal{E}_{\ep{G}} = \left\{\ell \; \middle | v^{\sf UCB}_{i,\ell} \geq v_i \;\text{or} \; v^{\sf UCB}_{i,\ell} \leq v_i + C_1 \sqrt{\frac{v_i\log{({\sqrt{N}}\ell+1)}}{T_i(\ell)}} + C_2\frac{\log{({\sqrt{N}}\ell+1)}}{T_i(\ell)}\;\text{for all} \; i \right\},
\end{equation}
and $\ep{E}^{\sf sub\_opt}_{\ep{G}}$ be the set of ``good epochs'' that offer sub-optimal assortments,
\begin{equation}\label{eq:good_epoch_sub_opt}
{\ep{E}}^{\sf sub\_opt}_{\ep{G}} = \left\{\ell \in \ep{E}_{\ep{G}} \; \middle| \; R(S_\ell) < R(S^*) \right\}.
\end{equation}

\medskip \noindent {\bf Case 1: $\hat{N} = N$:}
Let $L$ be the total number of epochs and $S_1, \cdots, S_L$ be the assortments offered by Algorithm \ref{learn_algo} in epochs $1,\cdots, L$ respectively. Let $\ell_i$ be the epoch that offers product $i$ for the $\tau^{\sf th}$ time, specifically,  %Without loss of generality, assume that product $1$ is the first product to satisfy the condito
$$\ell_i\; \overset{\Delta}{=}\min \left\{\ell \; \middle|\; T_i(\ell) = \tau\right\}.$$ 
Without loss of generality, assume that, $\ell_1 \leq \ell_2 \leq \cdots \leq \ell_N.$ Let $\hat{\ep{E}}^{\sf sub\_opt}_{\ep{G}}$ be the set of ``good epochs'' that offered sub-optimal assortments before epoch $\ell_{N-1},$ 
\begin{equation*}
\hat{\ep{E}}^{\sf sub\_opt}_{\ep{G}} = \left\{\ell \in \ep{E}^{\sf sub\_opt}_{\ep{G}} \; \middle| \; \ell \leq \ell_{N-1} \;\right\},
\end{equation*}
where ${\ep{E}}^{\sf sub\_opt}_{\ep{G}}$ is as defined as in \eqref{eq:good_epoch_sub_opt}. Finally, let $\hat{\ep{E}}^{\sf sub\_opt(N)}_{\ep{G}}$ be the set of ``good epochs'' that offered sub-optimal assortments not containing product $N$ before epoch $\ell_{N-1},$ 
\begin{equation*}
\hat{\ep{E}}^{\sf sub\_opt(N)}_{\ep{G}} = \left\{\ell \in \hat{\ep{E}}^{\sf sub\_opt}_{\ep{G}} \; \middle| \; N \not \in S_\ell \;\right\}.
\end{equation*}
Every assortment $S_\ell$ offered in epoch $\ell \in \hat{\ep{E}}^{\sf sub\_opt(N)}_{\ep{G}}$ can contain at most $N-1 = M$ products, therefore by the inductive hypothesis, we have $
|\hat{\ep{E}}^{\sf sub\_opt(N)}_{\ep{G}}| \leq M\tau.$ 
We can partition $ \hat{\ep{E}}^{\sf sub\_opt}_{\ep{G}} $ as,  $$ \hat{\ep{E}}^{\sf sub\_opt}_{\ep{G}}  =  \hat{\ep{E}}^{\sf sub\_opt(N)}_{\ep{G}} \cup \left\{\ell \in \ep{E}^{\sf sub\_opt}_{\ep{G}} \; \middle| \; N \in S_\ell \;\right\},$$ and hence it follows that, 
$$
|\hat{\ep{E}}^{\sf sub\_opt}_{\ep{G}}| \leq M\tau + \left|\left\{\ell \in \ep{E}^{\sf sub\_opt}_{\ep{G}} \; \middle| \; N \in S_\ell \;\right\}\right|.$$
Note that $T_N(\ell_{N-1})$ is the number of epochs until epoch $\ell_{N-1}$, in which product $N$ has been offered. Hence, it is higher than the number of ``good epochs'' before epoch $\ell_{N-1}$ that offered a sub-optimal assortment containing product $N$ and it follows that, 
\begin{equation}\label{eq:inductive_step}
|\hat{\ep{E}}^{\sf sub\_opt}_{\ep{G}}| \leq M\tau + T_N(\ell_{N-1}).
\end{equation}
Note that from Lemma \ref{sensitivity_analysis}, we have that any ``good epoch'' offering sub-optimal assortment must offer product $N$, since all the the other products have been offered in at least $\tau$ epochs. Therefore, we have, for any $\ell \in {\ep{E}}^{\sf sub\_opt}_{\ep{G}}\backslash \hat{\ep{E}}^{\sf sub\_opt}_{\ep{G}}$, $N \in S_\ell$ and thereby, 
$$T_N(\ell_N) - T_N(\ell_{N-1}) \geq |{\ep{E}}^{\sf sub\_opt}_{\ep{G}}| - |\hat{\ep{E}}^{\sf sub\_opt}_{\ep{G}}| .$$
From definition of $\ell_{N}$, we have that $T_N(\ell_N) = \tau$ and substituting \eqref{eq:inductive_step} in the above inequality, we obtain
$$|{\ep{E}}^{\sf sub\_opt}_{\ep{G}}| \leq (M+1)\tau.$$
\medskip \noindent {\bf Case 2: $\hat{N} < N$:} The proof for the case when $\hat{N}<N$ is similar along the lines of the previous case (we will make the same arguments using $\hat{N}-1$ instead of $N-1.$) and is skipped in the interest of avoiding redundancy.  \hfill $\halmos$
%\end{proofof}

\noindent Following the proof of Lemma \ref{no_sub_opt}, we can establish the following result. 
\begin{corollary}\label{cor_no_sub_opt}
The number of epochs that offer a product that does not satisfy the condition, $T_i(\ell) \geq \log{NT}$, is bounded by $N\log{NT}.$ In particular, 
$$\left|\left\{\ell\;\Big|\; T_i(\ell) < \log{NT} \;\text{for some}\;i\in S_\ell\right\}\right| \leq N\log{NT}.$$
\end{corollary}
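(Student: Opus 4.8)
The plan is to prove the bound by a direct counting argument, charging each offending epoch to an under-explored product it contains; this is a simplification of the counting in Lemma \ref{no_sub_opt}, and does not require the inductive structure used there.

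First I would rewrite the set of interest as a union over products. For each product $i \in \{1,\ldots,N\}$, define the collection of epochs
$$E_i = \left\{\ell \;\middle|\; i \in S_\ell,\ T_i(\ell) < \log{NT}\right\}.$$
The first step is to observe that the target set $\{\ell \mid \exists\, i \in S_\ell : T_i(\ell) < \log{NT}\}$ is exactly $\bigcup_{i=1}^N E_i$: an epoch belongs to the target set if and only if it offers some product that has so far been offered in fewer than $\log{NT}$ epochs, i.e.\ lies in $E_i$ for some $i$. By subadditivity of cardinality it then suffices to bound each $|E_i|$ and sum over $i$.

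The second step is to show $|E_i| \le \log{NT}$ for every fixed $i$. Here I would use that, by the definition \eqref{def:Til}, $T_i(\ell) = |\{\tau \le \ell : i \in S_\tau\}|$ is nondecreasing in $\ell$ and increases by exactly one in each epoch that offers $i$. Thus if $i$ is offered for the $k$-th time in epoch $\ell$ then $T_i(\ell) = k$, so the inequality $T_i(\ell) < \log{NT}$ can hold only during the first $\lceil \log{NT}\rceil$ offerings of product $i$; hence $|E_i| \le \log{NT}$. Combining the two steps yields $\left|\bigcup_{i=1}^N E_i\right| \le \sum_{i=1}^N |E_i| \le N\log{NT}$, which is the claim.

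I do not expect a substantive obstacle here: the argument is pure bookkeeping, and the interesting work (relating under-exploration to sub-optimality via Lemma \ref{sensitivity_analysis}) has already been done in Lemma \ref{no_sub_opt}. The only point that needs care is the indexing convention for $T_i(\ell)$ (whether the current epoch is included in the count), which affects at most the rounding of the constant and not the stated $N\log{NT}$ bound.
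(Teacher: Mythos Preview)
Your proposal is correct. The paper does not spell out a proof of this corollary; it only remarks that the result follows ``following the proof of Lemma \ref{no_sub_opt},'' i.e.\ by rerunning the inductive counting argument with the threshold $\log{NT}$ in place of $\tau$. Your route is different and more elementary: you bypass the induction entirely by decomposing the offending set as $\bigcup_{i=1}^N E_i$ with $E_i=\{\ell:i\in S_\ell,\ T_i(\ell)<\log{NT}\}$ and observing that $|E_i|\le\log{NT}$ because $T_i(\cdot)$ increases by one each time $i$ is offered. This direct charging argument is shorter and in fact would also yield Lemma \ref{no_sub_opt} itself once combined with Lemma \ref{sensitivity_analysis} (a good epoch with a sub-optimal assortment must contain some $i$ with $T_i(\ell)<\tau$, so it lies in $\bigcup_i E_i$), so you lose nothing by avoiding the induction. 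Your caveat about the indexing convention for $T_i(\ell)$ is apt: the paper's text and formal definition \eqref{def:Til} disagree on whether the current epoch is included, but as you note this affects only a rounding of at most one and not the stated bound.
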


\medskip \noindent {\bf {Proof of Theorem~\ref{assymptotic_bound}:}}
{We will re-use the ideas from proof of Theorem \ref{main_result} to prove Theorem \ref{assymptotic_bound}. Briefly, we breakdown the regret into regret over ``good epochs'' and ``bad epochs.''  First we argue using Lemma \ref{lem:UCBv}, that the probability of an epoch being ``bad epoch'' is  ``small,'' and hence the expected  cumulative regret over the bad epochs is ``small.'' We will then use Lemma \ref{no_sub_opt} to argue that there are only ``small'' number of ``good epochs'' that offer sub-optimal assortments.  Since, Algorithm \ref{learn_algo} do not incur regret in epochs that offer the optimal assortment, we can replace the length of the horizon $T$ with the cumulative length of the time horizon that offers sub-optimal assortments (which is ``small'') and re-use analysis from Appendix \ref{sec:completeProof}. We will now make these notions rigorous and complete the proof of Theorem \ref{assymptotic_bound}.}

\medskip \noindent Following the analysis in Appendix \ref{sec:completeProof}, we reformulate the regret as 
\begin{equation}\label{eq:ce_regret_assymptotic}
Reg_\pi(T,\mb{v}) =  \mathbb{E}_{\pi}\left\{\sum_{\ell=1}^L \left(1+V(S_\ell) \right)\left(R(S^*, \mb{v}) - R(S_\ell,\mb{v})  \right)\right\},
\end{equation}
where $S^*$ is the optimal assortment, $V(S_\ell) = \sum_{j\in S_\ell} v_j$ and the expectation in equation \eqref{eq:ce_regret_assymptotic} is over the random variables $L$ and $S_\ell$. Similar to the analysis in Appendix \ref{sec:completeProof}, for the sake of brevity, we define,  
\begin{equation}\label{eq:brevity_regret_assymptotic}
\Delta R_\ell {=} (1+V(S_\ell))\left(R(S^*,\mb{v})-R(S_\ell,\mb{v})\right). 
\end{equation}
Now the regret can be reformulated as 
\begin{equation}\label{eq:ce_regret_2_assymptotic}
\begin{aligned}
Reg_\pi(T,\mb{v}) &=  \mathbb{E}_{\pi}\left\{\sum_{\ell=1}^L \Delta R_\ell\right\}.\\
\end{aligned}
\end{equation}
For all $\ell =1, \ldots,L$,  define events $\mathcal{A}_{\ell}$ as,
\begin{equation*}
\mathcal{A}_{\ell} = \bigcup_{i=1}^N\left\{v^{\sf UCB}_{i,\ell} < v_i \;\text{or} \; v^{\sf UCB}_{i,\ell} > v_i + C_1 \sqrt{\frac{v_i\log{({\sqrt{N}}\ell+1)}}{T_i(\ell)}} + C_2\frac{\log{({\sqrt{N}}\ell+1)}}{T_i(\ell)} \right\}.
\end{equation*}
Let $\xi = \left\{\ell\;\Big|\; T_i(\ell) < \log{NT} \;\text{for some}\;i\in S_\ell\right\}$. We breakdown the regret in an epoch into the following  terms. 
\begin{equation*}%\label{eq:conditional_breakdown}
\mathbb{E}_{\pi}\left(\Delta R_\ell\right)= \mathbb{E}_\pi\left[\Delta R_\ell\cdot\mathbbm{1}(\mathcal{A}_{\ell-1}) + \Delta R_\ell\cdot\mathbbm{1}(\mathcal{A}^c_{\ell-1})\cdot \mathbbm{1}(\ell \in \xi) + \Delta R_\ell\cdot\mathbbm{1}(\mathcal{A}^c_{\ell-1})\cdot \mathbbm{1}(\ell \in \xi^c)\right].
\end{equation*}
Using the fact that $R(S^*,\mb{v})$ and $R(S_\ell,\mb{v})$ are both bounded by one and $V(S_\ell) \leq N$ in \eqref{eq:brevity_regret_assymptotic}, we have $\Delta R_\ell \leq N+1.$ Substituting the preceding inequality in the above equation, we obtain,
\begin{equation*}%\label{eq:first_breakdown_bound}
\mathbb{E}_{\pi}\left(\Delta R_\ell\right) \leq (N+1)\mathbb{P}_{\pi}(\mathcal{A}_{\ell-1}) + (N+1)\mathbb{P}_\pi\left(\ell \in \xi \right)+ \mathbb{E}\left[\Delta R_\ell\cdot\mathbbm{1}(\mathcal{A}^c_{\ell-1})\cdot \mathbbm{1}(\ell \in \xi^c)\right].
\end{equation*}
From the analysis in Appendix \ref{sec:completeProof} (see \eqref{eq:low_prob_event}), we have 
$\ep{P}(\ep{A}_\ell) \leq \frac{13}{\ell}.$
Therefore, it follows that,
\begin{equation*}
\begin{aligned}
\mathbb{E}_{\pi}\left(\Delta R_\ell\right) \leq \frac{13(N+1)}{\ell} + (N+1)\mathbb{P}_\pi\left(\ell \in \xi \right)+ \mathbb{E}\left[\Delta R_\ell\cdot\mathbbm{1}(\mathcal{A}^c_{\ell-1})\cdot \mathbbm{1}(\ell \in \xi^c)\right].
\end{aligned}
\end{equation*}
Substituting the above inequality in \eqref{eq:ce_regret_2_assymptotic}, we obtain
\begin{equation*}
Reg_\pi(T,\mb{v}) \leq 14N\log{T} + (N+1)\sum_{\ell=1}^L\mathbb{P}_\pi\left(\ell \in \xi \right)+  \mathbb{E}_{\pi}\left[ \sum_{\ell =1}^L \Delta R_\ell\cdot\mathbbm{1}(\mathcal{A}^c_{\ell-1})\cdot \mathbbm{1}(\ell \in \xi^c)\right].
\end{equation*}
From Corollary \ref{cor_no_sub_opt}, we have that $\sum_{\ell =1}^L \mathbbm{1}(\ell \in \xi) \leq N\log{NT}.$ Hence, we have, 
\begin{equation}\label{eq:assymptotic_first_term_bound}
Reg_\pi(T,\mb{v}) \leq 14N\log{T} + N(N+1)\log{NT}+  \mathbb{E}_{\pi}\left[ \sum_{\ell =1}^L \Delta R_\ell\cdot\mathbbm{1}(\mathcal{A}^c_{\ell-1})\cdot \mathbbm{1}(\ell \in \xi^c)\right].
\end{equation}

\noindent  Let $\ep{E}^{\sf sub\_opt}_\ep{G}$ be the set of ``good epochs'' offering sub-optimal products, more specifically,
$$\ep{E}^{\sf sub\_opt}_\ep{G}\; \overset{\Delta}{=} \left\{\ell \; \middle|\;\mathbbm{1}(\ep{A}^c_{\ell})=1\;\text{and}\; R(S_\ell,\mb{v}) < R(S^*,\mb{v}) \right\}.$$ 
 If $R(S_\ell,\mb{v}) = R(S^*,\mb{v})$, then by definition, we have $\Delta R_\ell = 0.$ Therefore, it follows that,
\begin{equation}\label{eq:assymptotic_reg_good_epoch_subopt}
\mathbb{E}_{\pi}\left[ \sum_{\ell =1}^L \Delta R_\ell\cdot\mathbbm{1}(\mathcal{A}^c_{\ell-1})\cdot \mathbbm{1}(\ell \in \xi^c)\right] = \mathbb{E}_{\pi}\left[\sum_{\ell \in \ep{E}^{\sf sub\_opt}_\ep{G} } \Delta R_\ell\cdot \mathbbm{1}(\ell \in \xi^c) \right].
\end{equation}
Whenever $\mathbbm{1}(\mathcal{A}^c_{\ell-1}) = 1$, from Lemma \ref{UCB_bound}, we have, $\tilde{R}_\ell(S^*) \geq R(S^*,\mb{v})$ and by our algorithm design, we have $\tilde{R}_\ell(S_\ell) \geq \tilde{R}_\ell(S^*)$ for all $\ell \geq 1$. Therefore, it follows that 
\begin{equation}\label{eq:good_epoch_subopt_reg}
\begin{aligned}
\mathbb{E}_{\pi}\left\{\Delta R_\ell \cdot \mathbbm{1}(\ep{A}^c_\ell)\right\} & \leq \mathbb{E}_{\pi}\left\{\left[(1+V(S_\ell))(\tilde{R}_\ell(S_\ell) - R(S_\ell,\mb{v}))\right]\cdot \mathbbm{1}(\mathcal{A}^c_{\ell-1}) \cdot \mathbbm{1}(\ell \in \xi^c)\right\},\\
& \leq \sum_{i\in S_\ell}\left(C_1\sqrt{\frac{v_i\log{({\sqrt{N}\ell}+1)}}{T_i(\ell)}} + \frac{C_2\log{({\sqrt{N}\ell}+1)}}{T_i(\ell)}\right) \cdot \mathbbm{1}(\ell \in \xi^c),\\
& \leq C \sum_{i\in S_\ell}\sqrt{\frac{v_i\log{({\sqrt{N}\ell}+1)}}{T_i(\ell)}}.
\end{aligned}
\end{equation}
where $C = C_1+C_2$, the second inequality in \eqref{eq:good_epoch_subopt_reg} follows from the definition of the event, $\ep{A}_\ell$ and the last inequality follows from the definition of set $\xi$.  From equations \eqref{eq:assymptotic_first_term_bound}, \eqref{eq:assymptotic_reg_good_epoch_subopt}, and \eqref{eq:good_epoch_subopt_reg} , we have, 
\begin{equation} \label{eq:dependent_bound_pre_final}
Reg_{\pi}(T,\mb{v}) \leq 14N^2\log{NT} + C\mathbb{E}_{\pi}\left\{\sum_{\ell \in  \ep{E}^{\sf sub\_opt}_{\ep{G}}}  \sum_{i\in S_\ell}\sqrt{\frac{\log{NT}}{T_i(\ell)}}  \right\},
\end{equation}

\noindent Let $T_i$ be the number of ``good epochs'' that offered sub-optimal assortments containing product $i$, specifically, 
$$T_i = \left|\left\{\ell \in \ep{E}^{\sf sub\_opt}_{\ep{G}} \; \middle| \; i \in S_\ell \right\}\right|.$$
Substituting the inequality $\sum_{\ell \in \ep{E}^{\sf sub\_opt}_{\ep{G}}} \frac{1}{\sqrt{T_i(\ell)}} \leq \sqrt{T_i} $ in \eqref{eq:dependent_bound_pre_final} and noting that $T_i \leq T$, we obtain,
\begin{equation*}%\label{eq:dependent_pre_Jenson}
Reg_\pi(T,\mb{v}) \leq 14N^2\log{NT} + C \sum_{i=1}^N \mathbb{E}_\pi\left( \sqrt{T_i \log{T}}\right).
\end{equation*}
%From Lemma \ref{no_sub_opt}, we have, $|\ep{E}^{\sf sub\_opt}_\ep{G}| \leq N\tau,$
%where $\tau$ is as defined in \eqref{eq:tau}. 
From Jenson's inequality, we have $\mathbb{E}_\pi\left(\sqrt{T}_i\right) \leq \sqrt{\mathbb{E}_\pi\left(T_i\right)}$ and therefore, it follows that,
\begin{equation*}%\label{dependent_bound_pre_max}
Reg_{\pi}(T,\mb{v}) \leq 14N\log{T} + NC\log{NT} +  C \sum_{i=1}^N  \sqrt{\mathbb{E}_\pi\left(T_i\right) \log{NT}}.
\end{equation*}
From Cauchy-Schwartz inequality, we have, $\sum_{i=1}^N  \sqrt{\mathbb{E}_\pi\left(T_i\right)} \leq \sqrt{N \sum_{i=1}^N \mathbb{E}_\pi\left(T_i\right)}.$ Therefore, it follows that,
\begin{equation*}%\label{dependent_bound_pre_max}
\begin{aligned}
Reg_{\pi}(T,\mb{v}) \leq 14N^2\log{NT} +  C \sqrt{N \sum_{i=1}^N \mathbb{E}_\pi\left(T_i\right)\log{NT}}.
\end{aligned}
\end{equation*}
For any epoch $\ell,$ we have $|S_\ell| \leq N$. Hence, we have $\sum_{i=1}^N T_i \leq N |\ep{E}^{\sf sub\_opt}_{\ep{G}}|.$ From Lemma \ref{no_sub_opt}, we have $|\ep{E}^{\sf sub\_opt}_{\ep{G}}| \leq N \tau$. Therefore, we have $\sum_{i=1}^N \mathbb{E}_\pi\left(T_i\right) \leq N^2 \tau$ and hence, it follows that,
\begin{equation}
\begin{aligned}
Reg_{\pi}(T,\mb{v}) &\leq 14N^2\log{NT} +  CN \sqrt{N \tau\log{NT}},\\
&\leq 14N^2\log{NT} + C\frac{N^2\log{NT}}{\Delta(\mb{v})}. 
\end{aligned}
\end{equation}
\hfill $\halmos$

\section{Multiplicative Chernoff Bounds}\label{proof:multiplicative_chernoff}%Proof of Lemma \ref{chernoff_hoeffding_ineq}}

We will extend the Chernoff bounds as discussed in \cite{mitzenmacher} \footnote{(originally discussed in \cite{angluin})} to geometric random variables and establish the following concentration inequality.
\begin{theorem1}\label{multiplicative_chernoff_geometric}
Consider $n$ i.i.d geometric random variables $X_1,\cdots,X_n$ with parameter $p$, i.e. for any $i$
$$Pr(X_i = m) = (1-p)^m p\; \; \forall m = \{0,1,2, \cdots\},$$
and let $\mu = \mathbb{E}(X_i) = \frac{1-p}{p}$. We have, 
\begin{enumerate}
\item
\begin{equation*}
Pr\left(\frac{1}{n} \sum_{i=1}^n X_i > (1+\delta) \mu\right) \leq 
\left\{ 
\begin{array}{ll}
\exp{\left(-\frac{n\mu \delta^2}{2(1+\delta)(1+\mu)^2}\right)} \; & \text{if} \; \; \mu \leq 1,\\
\exp{\left(- \frac{n \delta^2 \mu^2}{6 (1+\mu)^2}\left(3 - \frac{2\delta\mu}{1+\mu}\right)\right)} \;  & \text{if} \; \; \mu \geq 1 \; \text{and} \; \delta \in (0,1).
\end{array}\right.
\end{equation*}
and 
\item \begin{equation*}
Pr\left(\frac{1}{n} \sum_{i=1}^n X_i < (1-\delta) \mu\right) \leq 
\left\{ 
\begin{array}{ll}
\exp{\left(- \frac{n \delta^2 \mu}{6 (1+\mu)^2}\left(3 - \frac{2\delta\mu}{1+\mu}\right)\right)} \; & \text{if} \; \; \mu \leq 1,\\
\exp{\left(-  \frac{n\delta^2 \mu ^2}{2(1+\mu)^2}\right)} \;  & \text{if} \; \; \mu \geq 1.
\end{array}\right.
\end{equation*}
\end{enumerate}

\end{theorem1}
\proof{Proof.} We will first bound $Pr\left(\frac{1}{n} \sum_{i=1}^n X_i > (1+\delta) \mu\right)$ and then follow a similar approach for bounding $Pr\left(\frac{1}{n} \sum_{i=1}^n X_i < (1-\delta) \mu\right)$ to complete the proof. 
\vspace{4pt}
\subsection*{ Bounding ${Pr\left(\frac{1}{n} \sum_{i=1}^n X_i > (1+{\delta}) {\mu}\right)}$:} 
For all $i$ and for any $0<t < \log{\frac{1+\mu}{\mu}},$ we have, $$\mathbb{E}(e^{tX_i}) = \frac{1}{1-\mu(e^{t}-1)}.$$
Therefore, from Markov Inequality,   we have 
\begin{equation*}
\begin{aligned}
Pr\left(\frac{1}{n} \sum_{i=1}^n X_i > (1+\delta) \mu\right) &= Pr \left(e^{t \sum_{i=1}^n X_i} > e^{(1+\delta) n\mu t}\right),\\
& \leq e^{-(1+\delta) n\mu t}\prod_{i=1}^n \mathbb{E}(e^{tX_i}), \\
& = e^{-(1+\delta) n\mu t} \left(\frac{1}{1-\mu(e^t - 1)}\right)^n.
\end{aligned}
\end{equation*} 
Therefore, we have, 
\begin{equation}\label{eq:one_sided_prelimn}
Pr\left(\frac{1}{n} \sum_{i=1}^n X_i > (1+\delta) \mu\right) \leq  \underset{0 < t< \log{\frac{1+\mu}{\mu}}}{\min} e^{-(1+\delta) n\mu t} \left(\frac{1}{1-\mu(e^t - 1)}\right)^n.
\end{equation}
We have,
\begin{equation}\label{convex_opt}
\underset{0 < t< \log{\frac{1+\mu}{\mu}}}{\text{argmin}} e^{-(1+\delta) n\mu t} \left(\frac{1}{1-\mu(e^t - 1)}\right)^n = \underset{0 < t< \log{\frac{1+\mu}{\mu}}}{\text{argmin}} -(1+\delta) n\mu t  -  n\log\left({1-\mu(e^t - 1)}\right),
\end{equation}
Noting that the right hand side in the above equation is a convex function in $t$, we obtain the optimal $t$ by solving for the zero of the derivative. Specifically, at optimal $t$, we have 
$$e^t = \frac{(1+\delta)(1+\mu)}{1+\mu(1+\delta)}.$$
Substituting the above expression in \eqref{eq:one_sided_prelimn}, we obtain the following bound. 
\begin{equation}\label{eq:one_sided_prelimn_bound}
Pr\left(\frac{1}{n} \sum_{i=1}^n X_i > (1+\delta) \mu\right) \leq {\left(1 - \frac{\delta}{(1+\delta)(1+\mu)}\right)}^{n\mu(1+\delta)} {\left(1+\frac{\delta \mu}{1+\mu}\right)^n}.
\end{equation}
First consider the setting where $\mu \in (0,1)$.

\medskip \noindent \textbf{Case 1a: If ${{\mu} \in (0,1)}$:} 
From Taylor series of $\log{(1-x)}$, we have that 
\begin{equation*}\label{geq_second_order_del1}
n\mu(1+\delta)\log{\left(1- \frac{\delta}{(1+\delta)(1+\mu)} \right)} \leq -\frac{n\delta \mu}{1+\mu} - \frac{n\delta^2 \mu}{2(1+\delta)(1+\mu)^2},
\end{equation*}
 From Taylor series for $\log{(1+x)}$, we have 
\begin{equation*}\label{geq_first_order_del1}
n \log{\left(1+\frac{\delta \mu}{1+\mu}\right)} \leq \frac{n\delta \mu}{(1+\mu)},
\end{equation*}
Note that if $\delta > 1$, we can use the fact that $\log{(1+\delta x)} \leq \delta \log{(1+x)}$ to arrive at the preceding result.
Substituting the preceding two equations  in \eqref{eq:one_sided_prelimn_bound}, we have
\begin{equation}\label{eq:geq_mul1}
Pr\left(\frac{1}{n} \sum_{i=1}^n X_i > (1+\delta) \mu\right) \leq 
\exp{\left(-\frac{n\mu \delta^2}{2(1+\delta)(1+\mu)^2}\right)} ,
\end{equation}

\medskip \noindent \textbf{Case 1b: If ${{\mu} \geq 1}$ :} 
From Taylor series of $\log{(1-x)}$, we have that 
\begin{equation*}\label{geq_first_order_del1_mug1}
n\mu(1+\delta)\log{\left(1- \frac{\delta}{(1+\delta)(1+\mu)} \right)} \leq -\frac{n\delta \mu}{1+\mu},
\end{equation*}
If $\delta < 1$, from Taylor series for $\log{(1+x)}$, we have 
\begin{equation*}\label{geq_second_order_del1_mug1}
n \log{\left(1+\frac{\delta \mu}{1+\mu}\right)} \leq \frac{n\delta \mu}{(1+\mu)} - \frac{n \delta^2 \mu^2}{6 (1+\mu)^2}\left(3 - \frac{2\delta\mu}{1+\mu}\right).
\end{equation*}
If $\delta \geq 1$, we have $\log{(1+\delta x)} \leq \delta \log{(1+x)}$ and from Taylor series for $\log{(1+x)}$,  it follows that, 
\begin{equation*}
n \log{\left(1+\frac{\delta \mu}{1+\mu}\right)} \leq \frac{n\delta \mu}{(1+\mu)} - \frac{n \delta \mu^2}{6 (1+\mu)^2}\left(3 - \frac{2\mu}{1+\mu}\right).
\end{equation*}
%Note that for any $\mu \in (0,1)$ and $\delta \in (0,1)$, we have that $\frac{3\mu^2}{(1+\mu)^2} - \frac{2\delta\mu^3}{(1+\mu)^3} \geq \frac{1}{12}$.
Therefore, substituting preceding results in \eqref{eq:one_sided_prelimn_bound}, we have
\begin{equation} \label{eq:geq_mug1}
Pr\left(\frac{1}{n} \sum_{i=1}^n X_i > (1+\delta) \mu\right) \leq 
\left\{ 
\begin{array}{ll}
\exp\left(- \frac{n \delta^2 \mu^2}{6 (1+\mu)^2}\left(3 - \frac{2\delta\mu}{1+\mu}\right)\right) \;  & \text{if} \; \; \mu \geq 1 \; \text{and} \; \delta \in (0,1), \\ 
\exp{\left(- \frac{n \delta \mu^2}{6 (1+\mu)^2}\left(3 - \frac{2\mu}{1+\mu}\right)\right)} & \text{ if }\; \; \mu \geq 1 \;\text{and}\; \delta \geq {1}.
\end{array}\right.
\end{equation}
\subsection*{ Bounding ${Pr\left(\frac{1}{n} \sum_{i=1}^n X_i < (1-{\delta}) {\mu}\right)}$:} Now to bound the other one sided inequality,  we use the fact that $$\mathbb{E}(e^{-tX_i}) = \frac{1}{1-\mu(e^{-t}-1)},$$ and follow a similar approach. More specifically, from Markov Inequality,  for any $t > 0$ and $0 <\delta < 1,$  we have 
\begin{equation*}
\begin{aligned}
Pr\left(\frac{1}{n} \sum_{i=1}^n X_i < (1-\delta) \mu\right) &= Pr \left(e^{ -t \sum_{i=1}^n X_i} > e^{-(1-\delta) n\mu t}\right),\\
& \leq e^{(1-\delta) n\mu t}\prod_{i=1}^n \mathbb{E}(e^{-tX_i}), \\
& = e^{(1-\delta) n\mu t} \left(\frac{1}{1-\mu(e^{-t} - 1)}\right)^n.
\end{aligned}
\end{equation*} 
Therefore, we have 
\begin{equation*}
Pr\left(\frac{1}{n} \sum_{i=1}^n X_i < (1-\delta) \mu\right) \leq  \underset{ t > 0}  {\min} \; e^{-(1+\delta) n\mu t} \left(\frac{1}{1-\mu(e^{-t} - 1)}\right)^n,
\end{equation*}
Following similar approach as in optimizing the previous bound (see \eqref{eq:one_sided_prelimn}) to establish the following result. 
\begin{equation*}
Pr\left(\frac{1}{n} \sum_{i=1}^n X_i < (1-\delta) \mu\right) \leq {\left(1 + \frac{\delta}{(1-\delta)(1+\mu)}\right)}^{n\mu(1-\delta)} {\left(1-\frac{\delta \mu}{1+\mu}\right)^n}.
\end{equation*}
Now we will use Taylor series for $\log{(1+x)}$ and $\log{(1-x)}$ in a similar manner as described for the other bound to obtain the required result. In particular, since $1-\delta \leq 1$, we have for any $x > 0$ it follows that $(1+\frac{ x}{1-\delta})^{(1-\delta)} \leq (1+x)$ . Therefore, we have
\begin{equation}\label{eq:one_sided_prelimn_bound_leq}
Pr\left(\frac{1}{n} \sum_{i=1}^n X_i < (1-\delta) \mu\right) \leq {\left(1 + \frac{\delta}{1+\mu}\right)}^{n\mu} {\left(1-\frac{\delta \mu}{1+\mu}\right)^n}.
\end{equation}

\medskip \noindent {\bf Case 2a. If $\mu \in (0,1)$}:
Note that since $X_i \geq 0$ for all $i$, we have a zero probability event if $\delta > 1$. Therefore, we assume $\delta < 1$ and from Taylor series for $\log{(1-x)}$, we have 
\begin{equation*}\label{leq_first_order_del1}
n\log{\left(1- \frac{\delta\mu}{1+\mu} \right)} \leq -\frac{n\delta \mu}{1+\mu}, 
\end{equation*}
 and from Taylor series for $\log{(1+x)}$, we have 
\begin{equation*}
n\mu \log{\left(1+\frac{\delta }{1+\mu}\right)} \leq \frac{n\delta \mu}{(1+\mu)} - \frac{n \delta^2 \mu}{6 (1+\mu)^2}\left(3 - \frac{2\delta\mu}{1+\mu}\right).
\end{equation*}
%Note that for any $\mu \in (0,1)$ and $\delta \in (0,1)$, we have that $\frac{3}{(1+\mu)^2} - \frac{2\delta\mu}{(1+\mu)^3} \geq \frac{1}{12}$. 
Therefore, substituting the preceding equations  in \eqref{eq:one_sided_prelimn_bound_leq}, we have, 
\begin{equation}\label{eq:leq_mul1}
Pr\left(\frac{1}{n} \sum_{i=1}^n X_i < (1-\delta) \mu\right) \leq 
\exp{\left(- \frac{n \delta^2 \mu}{6 (1+\mu)^2}\left(3 - \frac{2\delta\mu}{1+\mu}\right)\right)}.
\end{equation}

\medskip \noindent {\bf Case 2b. If $\mu \geq 1$}:
For similar reasons as discussed above, we assume $\delta < 1$ and from Taylor series for $\log{(1-x)}$, we have 
\begin{equation*}
n\log{\left(1- \frac{\delta\mu}{1+\mu} \right)} \leq -\frac{n\delta \mu}{1+\mu} - \frac{n\delta^2 \mu ^2}{2(1+\mu)^2}, 
\end{equation*}
 and from Taylor series for $\log{(1+x)}$, we have 
\begin{equation*}\label{leq_second_order_del1}
n \log{\left(1+\frac{\delta \mu}{1+\mu}\right)} \leq \frac{n\delta }{(1+\mu)} .
\end{equation*}
%Note that for any $\mu \in (0,1)$ and $\delta \in (0,1)$, we have that $\frac{3}{(1+\mu)^2} - \frac{2\delta\mu}{(1+\mu)^3} \geq \frac{1}{12}$. 
Therefore, substituting the preceding equations  in \eqref{eq:one_sided_prelimn_bound_leq}, we have, 
\begin{equation}\label{eq:leq_mug1}
Pr\left(\frac{1}{n} \sum_{i=1}^n X_i < (1-\delta) \mu\right) \leq 
\exp{\left(-  \frac{n\delta^2 \mu ^2}{2(1+\mu)^2}\right)}.
\end{equation}
The result follows from \eqref{eq:geq_mul1}, \eqref{eq:geq_mug1}, \eqref{eq:leq_mul1} and \eqref{eq:leq_mug1}.   \hfill \halmos

%\medskip \noindent \textbf{Case 1b: If $\mathbf{\pmb{\mu} \in (0,1)}$ and $\mathbf{\pmb{\delta} \geq \frac{1}{\pmb{\alpha}} }$:} Since $\bar{X} \geq 0 $, we trivially have $Pr\left(\frac{1}{n} \sum_{i=1}^n X_i < (1-\delta) \mu\right) = 0 $ for any $\delta \geq 1$. If $\delta \in [\frac{1}{\alpha},1]$, we have that $\delta^2 \geq \frac{\delta}{\alpha}$. Therefore,  from \eqref{bound_one_sided_leq}, we have 
%\begin{equation}\label{bound_one_sided_leq_alpha}
%Pr\left(\frac{1}{n} \sum_{i=1}^n X_i < (1-\delta) \mu\right) \leq 
%\exp{\left(-\frac{n\delta\mu}{12\alpha}\right)} \; \text{if} \; \delta \geq \frac{1}{\alpha} \; \text{and} \; \delta \in (0,1).
%\end{equation}
%Finally, consider the setting where  ${\mu} \geq 1$ and $\delta \in (0,1)$.
%
%\medskip \noindent \textbf{Case 2b: If $\mathbf{\pmb{\mu} \geq 1}$:} Note that for any $\mu \geq 1$ and $\delta \in (0,1)$, we have that $\frac{3\mu^2}{(1+\mu)^2} - \frac{2\delta\mu^3}{(1+\mu)^3} \geq \frac{1}{12}$. Therefore, substituting \eqref{leq_second_order_del1}, \eqref{leq_first_order_del1}  in \eqref{eq:one_sided_prelimn_bound_leq}, we have, 
%\begin{equation}\label{bound_one_sided_leq_mug1}
%Pr\left(\frac{1}{n} \sum_{i=1}^n X_i < (1-\delta) \mu\right) \leq 
%\exp{\left(-\frac{n\delta^2}{12}\right)}  \; \text{if} \; \delta \in (0,1) \; \text{and} \; \mu \geq 1.
%\end{equation}

%Therefore the result follows from \eqref{bound_one_sided_lhs_del1},  \eqref{bound_one_sided_leq},  \eqref{delta_bound_geq_alpha}, \eqref{bound_one_sided_leq_alpha}, \eqref{bound_one_sided_geq_del1_mug1} and \eqref{bound_one_sided_leq_mug1}. \halmos

Now, we will adapt a non-standard corollary from \cite{Babaioff} and \cite{Kleinberg} to our estimates to obtain sharper bounds.

\begin{lemma}\label{multiplicative_chernoff_estimates}
Consider $n$ i.i.d geometric random variables $X_1,\cdots,X_n$ with parameter $p$, i.e. for any $i$, $P(X_i = m) = (1-p)^m p\; \; \forall m = \{0,1,2, \cdots\}.$ Let $\mu = \mathbb{E}_{\pi}(X_i) = \frac{1-p}{p}$ and $\bar{X} = \frac{\sum_{i=1}^n X_i}{n} $. If $n > 48\log{({\sqrt{N}}\ell+1)}$, then we have for all $n=1,2, \cdots,$
\begin{enumerate}
\item \begin{equation}\label{eq:multi_barx}
\mathcal{P}\left(\left|\bar{X} - {\mu}\right| > \max{\left\{\sqrt{\bar{X}}, \bar{X}\right\}}\sqrt{\frac{48\log{({\sqrt{N}}\ell+1)}}{n}} + \frac{48 \log{({\sqrt{N}}\ell+1)}}{n}\right)   \leq  \frac{6}{\ell^2}.
\end{equation}
\item \begin{equation}\label{eq:multi_mu} \displaystyle \mathcal{P}\left(\left|\bar{X} -\mu \right | \geq  \max{\left\{\sqrt{\mu}, \mu\right\}}\sqrt{\frac{24\log{({\sqrt{N}}\ell+1)}}{n}} + \frac{48 \log{({\sqrt{N}}\ell+1)}}{n}\;\right)  \leq \frac{4}{\ell^2},\;  \end{equation}
\item \begin{equation}\label{eq:mu} {\displaystyle \mathcal{P}\left(\bar{X}  \geq  \frac{3\mu}{2} + \frac{48 \log{({\sqrt{N}}\ell+1)}}{n}\;\right)  \leq \frac{3}{\ell^2}}.\;  \end{equation}
\end{enumerate}
\end{lemma}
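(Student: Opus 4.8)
The plan is to obtain all three inequalities as consequences of the multiplicative Chernoff bounds for geometric variables already proved in Theorem~\ref{multiplicative_chernoff_geometric}. Write $\beta := \log(\sqrt{N}\ell+1)$, so the hypothesis reads $n>48\beta$, and note $\bar X$ has mean $\mu$. The unifying device is that ``$\bar X$ exceeds $\mu$ by more than $z$'' is exactly the event $\{\bar X>(1+\delta)\mu\}$ with $\delta=z/\mu$ (and symmetrically for the lower tail). For each claimed deviation $z$ I substitute the induced $\delta$ into Theorem~\ref{multiplicative_chernoff_geometric} and show the exponent there is at least $2\beta$; since $e^{-2\beta}=(\sqrt N\ell+1)^{-2}\le \ell^{-2}$ (using $\sqrt N\ell+1\ge\ell$), each one-sided tail contributes at most $\ell^{-2}$, and the constants $3,4,6$ come from a union bound over the one-sided events involved.

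I would establish statements~2 and~3 first, as these are phrased in the true mean $\mu$. For statement~2 put $z=\max\{\sqrt\mu,\mu\}\sqrt{24\beta/n}+48\beta/n$, so $\delta=z/\mu$ splits into a ``square-root'' piece $\delta_1$ (with $\mu\delta_1^2=24\beta/n$) and a ``linear'' piece $\delta_2$ (with $\mu\delta_2=48\beta/n$). For the upper tail when $\mu\le1$ (so $\max\{\sqrt\mu,\mu\}=\sqrt\mu$) I use $(1+\mu)^2\le4$ and split on the size of $\delta$: when $\delta\lesssim1$, i.e.\ $n\mu\gtrsim\beta$, the square-root piece dominates and, crucially retaining $(1+\delta)\approx1$ rather than bounding it by $2$, the factor $\tfrac{n\mu\delta_1^2}{2(1+\delta)(1+\mu)^2}$ already exceeds $2\beta$; when $\delta\gtrsim1$ the linear piece dominates and $\tfrac{\delta^2}{1+\delta}\ge\tfrac\delta2$ together with $\mu\delta\ge\mu\delta_2$ again clears $2\beta$. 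When $\mu\ge1$ (so $\max\{\sqrt\mu,\mu\}=\mu$) I instead use $1+\mu\le2\mu$ to get $\tfrac{\mu^2}{(1+\mu)^2}\ge\tfrac14$, select the $\delta<1$ or $\delta\ge1$ branch of the $\mu\ge1$ bound accordingly, and lower-bound the factor $3-\tfrac{2\delta\mu}{1+\mu}$ by a constant. The lower tails use the two lower-tail branches of Theorem~\ref{multiplicative_chernoff_geometric} in the same way. Statement~3 is the cruder one-sided upper tail for $z=\tfrac\mu2+48\beta/n$, i.e.\ $\delta=\tfrac12+\tfrac{48\beta}{n\mu}$, where the fixed multiplicative slack $\tfrac12$ makes the exponent estimate routine.

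Statement~1 then follows by replacing the unknown $\mu$ by the observable $\bar X$ on the event where statements~2 and~3 hold. The bound $|\bar X-\mu|\le\max\{\sqrt\mu,\mu\}\sqrt{24\beta/n}+48\beta/n$ is a quadratic inequality in $\sqrt\mu$ (when $\mu\le1$) or in $\mu$ (when $\mu\ge1$); solving it yields $\sqrt\mu\le\sqrt{\bar X}+O(\sqrt{\beta/n})$, respectively $\mu\le\bar X+O(\sqrt{\beta/n})$. Substituting back into the statement~2 bound and absorbing the resulting cross terms into the linear term $48\beta/n$ is exactly what inflates the coefficient under the square root from $24$ to $48$ and produces the $\bar X$-based form; the union over the events used accounts for the failure probability $6/\ell^2$.

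The main obstacle is the regime-by-regime constant bookkeeping in the second paragraph rather than any conceptual difficulty: the awkward factors $(1+\delta)$ (in the $\mu\le1$ upper tail) and $3-\tfrac{2\delta\mu}{1+\mu}$ (in the $\mu\ge1$ branches) must each be controlled so that the exponent clears $2\beta$ \emph{simultaneously} across the square-root-dominated ($\delta\lesssim1$) and linear-dominated ($\delta\gtrsim1$) regimes, for the specific constants $24$ and $48$ in the statement. These estimates are tight near the threshold $n\approx48\beta$, which is precisely why that hypothesis is needed and why one must resist the temptation to bound $(1+\delta)$ crudely; the quadratic inversion passing from the $\mu$-based to the $\bar X$-based bound in statement~1 is the other place demanding care.
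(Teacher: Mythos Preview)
Your plan matches the paper's: case-split on $\mu\le 1$ versus $\mu\ge 1$, feed a suitably chosen $\delta$ into Theorem~\ref{multiplicative_chernoff_geometric} in each regime, and then convert the $\mu$-based concentration into the $\bar X$-based one. The one place you work harder than necessary is in that conversion for statement~1. You propose to treat $|\bar X-\mu|\le\max\{\sqrt\mu,\mu\}\sqrt{24\beta/n}+48\beta/n$ as a quadratic in $\sqrt\mu$ (or $\mu$) and solve it to get $\sqrt\mu\le\sqrt{\bar X}+O(\sqrt{\beta/n})$. The paper's trick is simpler: once $\delta\le\tfrac12$ (which the hypothesis $n>48\beta$ forces when $\mu\ge1$, and which is handled by a separate sub-case when $\mu\le1$), the lower-tail bound $\bar X\ge(1-\delta)\mu$ directly yields $\mu\le 2\bar X$ as a single extra high-probability event. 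Substituting $\mu\le 2\bar X$ into the $\sqrt{\mu}$ (respectively $\mu$) factor of the statement-2 bound is exactly what doubles the constant under the square root from $24$ to $48$; no quadratic inversion is needed, and the cross terms you worry about absorbing simply never appear. Analogously, the upper-tail event $\bar X\le\tfrac{3\mu}{2}$ handles the $\max\{\sqrt{\bar X},\bar X\}$ uniformly (e.g.\ $\mu\le1$ gives $\bar X\le\tfrac32$, hence $\bar X\le\sqrt{3\bar X/2}$). This also makes the probability accounting transparent: two one-sided tails give statement~2 ($4/\ell^2$), one gives statement~3 ($3/\ell^2$), and statement~1 unions in the extra events $\mu\le2\bar X$ and $\bar X\le\tfrac{3\mu}{2}$ to reach $6/\ell^2$.
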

\proof{Proof.}
We will analyze the cases $\mu < 1$ and $\mu \geq 1$ separately. 
%First assume $\displaystyle \frac{1}{4} < {\frac{4(\mu+1)^2 \log{({\sqrt{N}}\ell+1)}}{\mu n}}$. In Theorem \ref{multiplicative_chernoff_geometric}, substituting $\delta = \frac{56\log{({\sqrt{N}}\ell+1)}}{n\mu}$ and $\alpha = \frac{16}{7},$ we obtain,  

\medskip \noindent {\bf Case-1: $\mu \leq 1.$} Let $\delta = (\mu+1) \sqrt{\frac{6\log{({\sqrt{N}}\ell+1)}}{\mu n}} .$
First assume that $\delta \leq \frac{1}{2}.$ Substituting the value of $\delta$ in Theorem \ref{multiplicative_chernoff_geometric},  we obtain, 
\begin{equation}\label{eq:d9}
\begin{aligned}
\ep{P}\left(\bar{X} - \mu  > \delta \mu \right) &\leq\frac{1}{{N}\ell^2},\\
\ep{P}\left(\bar{X} - \mu  < -\delta \mu \right) &\leq\frac{1}{{N}\ell^2},\\
 \ep{P}\left(\left|\bar{X} - {\mu}\right| <  (\mu+1)\sqrt{\frac{6\mu\log{({\sqrt{N}}\ell+1)}}{n}}\right) &\geq  1-\frac{2}{{N}\ell^2}.
\end{aligned}
\end{equation}
{Since $\delta \leq \frac{1}{2}$, we have $\mathcal{P}\left(\bar{X} -\mu \leq -\frac{\mu}{2}\right) \leq \mathcal{P}\left(\bar{X} -\mu \leq -\delta \mu\right)$. Hence, from \eqref{eq:d9}, we have, 
\begin{equation*}
\begin{aligned} 
\mathcal{P}\left(\bar{X} -\mu \leq -\frac{\mu}{2}\right)  \leq \frac{1}{{N}\ell^2},\\
%\mathcal{P}\left(\bar{X} \leq \frac{3\mu}{2}\right) &\geq  1-\frac{1}{\ell^2},
\end{aligned}
\end{equation*}
and hence, it follows that,
\begin{equation}\label{eq:d10}
\begin{aligned} 
\mathcal{P}\left(2\bar{X}\geq {\mu}\right) & \geq  1-\frac{1}{{N}\ell^2}.\\
%\mathcal{P}\left(\bar{X} \leq \frac{3\mu}{2}\right) &\geq  1-\frac{1}{\ell^2},
\end{aligned}
\end{equation}}
From \eqref{eq:d9} and \eqref{eq:d10}, we have,
\begin{equation}\label{ineq_part_2_mu_l1}
\mathcal{P}\left(\left|\bar{X} - {\mu}\right| < \sqrt{\frac{48\bar{X}\log{({\sqrt{N}}\ell+1)}}{n}}\right) \geq \mathcal{P}\left(\left|\bar{X} - {\mu}\right| < \sqrt{\frac{24\mu \log{({\sqrt{N}}\ell+1)}}{n}}\right) \geq  1-\frac{3}{{N}\ell^2}.
\end{equation}
Since $\delta \leq \frac{1}{2}$, we have, $\mathcal{P}\left(\bar{X}  \leq \frac{3\mu}{2}\right) \geq \mathcal{P}\left(\bar{X} < (1+\delta) \mu\right)$. {Hence, from \eqref{eq:d9}, we have 
\begin{equation}\label{eq:barv_ub_v}
\ep{P}\left(\bar{X} \leq \frac{3\mu}{2}\right) \geq 1-\frac{1}{{N}\ell^2}.
\end{equation}}
Since, $\mu \leq 1$, we have $\ep{P}\left(\bar{X} \leq \frac{3}{2}\right) \geq 1-\frac{1}{{N}\ell^2}$ and
\begin{equation*}
\mathcal{P}\left(\bar{X} \leq \sqrt{\frac{3\bar{X}}{2}}\right) \geq  1-\frac{1}{{N}\ell^2}.
\end{equation*}
Therefore, substituting above result in \eqref{ineq_part_2_mu_l1}, the inequality \eqref{eq:multi_barx} follows. 
\begin{equation}\label{mu_l1_est}
\mathcal{P}\left(\left|\bar{X} - {\mu}\right| > \max{\left\{\sqrt{\bar{X}}, \sqrt{\frac{2}{3}}\bar{X}\right\}}\sqrt{\frac{48\log{({\sqrt{N}}\ell+1)}}{n}}\right)   \leq  \frac{4}{{N}\ell^2}.
\end{equation}

\noindent Now consider the scenario, when $(\mu+1) \sqrt{\frac{6\log{({\sqrt{N}}\ell+1)}}{\mu n}} > \frac{1}{2}$. Then, we have, 
$$\delta_1 \overset{\Delta}{=} {\frac{12(\mu+1) ^2\log{({\sqrt{N}}\ell+1)}}{\mu n}}  \geq \frac{1}{2},$$ which implies, 
\begin{equation*}
\begin{aligned}
\exp{\left(-\frac{n\mu \delta_1^2}{2(1+\delta_1)(1+\mu)^2}\right)} & \leq \exp{\left(-\frac{n\mu \delta_1}{6(1+\mu)^2}\right)},\\
\exp{\left(- \frac{n \delta_1^2 \mu}{6 (1+\mu)^2}\left(3 - \frac{2\delta_1\mu}{1+\mu}\right)\right)} & \leq \exp{\left(-\frac{n\mu \delta_1}{6(1+\mu)^2}\right)}.
\end{aligned}
\end{equation*}
Therefore, substituting the value of $\delta_1$ in Theorem \ref{multiplicative_chernoff_geometric}, we have 
\begin{equation}\label{eq:log_add}
\mathcal{P}\left(\left|\bar{X} - \mu\right|> \frac{48\log{({\sqrt{N}}\ell+1)}}{n}\right) \leq \frac{2}{{N}\ell^2}. 
\end{equation} 
{Hence, from \eqref{eq:log_add} and \eqref{mu_l1_est}, it follows that, 
\begin{equation}\label{eq:d13}
\mathcal{P}\left(\left|\bar{X} - \mu\right|> \max{\left\{\sqrt{\bar{X}}, \sqrt{\frac{2}{3}}\bar{X}\right\}}\sqrt{\frac{48\log{({\sqrt{N}}\ell+1)}}{n}} + \frac{48\log{({\sqrt{N}}\ell+1)}}{n}\right) \leq \frac{6}{{N}\ell^2}. 
\end{equation} }

\medskip \noindent \textbf{Case 2: $\pmb{\mu}\geq1$} 

Let $\delta = \sqrt{\frac{12\log{({\sqrt{N}}\ell+1)}}{ n}}$, then by our assumption, we have $\delta \leq \frac{1}{2}$. Substituting the value of $\delta$ in Theorem \ref{multiplicative_chernoff_geometric}, we obtain,  
\begin{equation*}
\begin{aligned}
 \mathcal{P}\left(\left|\bar{X} - {\mu}\right| <  \mu\sqrt{\frac{12\log{({\sqrt{N}}\ell+1)}}{n}}\right) &\geq  1-\frac{2}{{N}\ell^2},\\
\mathcal{P}\left(2\bar{X} \geq {\mu}\right) &\geq  1-\frac{1}{{N}\ell^2}.
\end{aligned}
\end{equation*}
 Hence we have,
\begin{equation}\label{ineq_mu_g1}
\mathcal{P}\left(\left|\bar{X} - {\mu}\right| < \bar{X}\sqrt{\frac{48\log{({\sqrt{N}}\ell+1)}}{n}}\right) \geq \mathcal{P}\left(\left|\bar{X} - {\mu}\right| < \mu\sqrt{\frac{12 \log{({\sqrt{N}}\ell+1)}}{n}}\right) \geq  1-\frac{3}{{N}\ell^2}.
\end{equation}
By assumption $\mu \geq 1$. Therefore, we have $\ep{P}\left(\bar{X} \geq \frac{1}{2}\right) \geq 1-\frac{1}{{N}\ell^2}$ and, 
\begin{equation}\label{safety_ub}
\mathcal{P}\left(\bar{X} \geq \sqrt{\frac{\bar{X}}{2}}\right) \geq  1-\frac{1}{{N}\ell^2}.
\end{equation}
Therefore, from \eqref{ineq_mu_g1} and \eqref{safety_ub}, we have 
\begin{equation}\label{mu_g1_est}
\mathcal{P}\left(\left|\bar{X} - {\mu}\right| > \max{\left\{{\bar{X}}, \sqrt{\frac{\bar{X}}{2}}\right\}}\sqrt{\frac{48\log{({\sqrt{N}}\ell+1)}}{n}} \right)   \leq  \frac{4}{{N}\ell^2}.
\end{equation}
{We complete the proof by stating that \eqref{eq:multi_barx} follows from \eqref{eq:d13} and \eqref{mu_g1_est}, while \eqref{eq:multi_mu} follows from \eqref{ineq_part_2_mu_l1} and \eqref{ineq_mu_g1} and \eqref{eq:mu} follows from \eqref{eq:barv_ub_v} and \eqref{eq:log_add}.}
\hfill~\halmos

 From the proof of Lemma \ref{multiplicative_chernoff_estimates}, the following result follows. 
\begin{corollary}\label{multiplicative_chernoff_no_estimates}
Consider $n$ i.i.d geometric random variables $X_1,\cdots,X_n$ with parameter $p$, i.e. for any $i$, $P(X_i = m) = (1-p)^m p\; \; \forall m = \{0,1,2, \cdots\}.$ Let $\mu = \mathbb{E}_{\pi}(X_i) = \frac{1-p}{p}$ and $\bar{X} = \frac{\sum_{i=1}^n X_i}{n} $. If $\mu \leq 1$, then we have,
\begin{enumerate}
\item $\mathcal{P}\left(\left|\bar{X} - {\mu}\right| > \sqrt{\frac{48\bar{X}\log{({\sqrt{N}}\ell+1)}}{n}} + \frac{48 \log{({\sqrt{N}}\ell+1)}}{n}\right)   \leq  \frac{6}{{N}\ell^2}.$ for all $n=1,2, \cdots$.
\item $ \mathcal{P}\left(\left|\bar{X} -\mu \right | \geq  \sqrt{\frac{24\mu\log{({\sqrt{N}}\ell+1)}}{n}} + \frac{48 \log{({\sqrt{N}}\ell+1)}}{n}\;\right)  \leq \frac{4}{{N}\ell^2}$ for all $n=1, 2, \cdots $.
\item $ \mathcal{P}\left(\bar{X}  \geq  \frac{3\mu}{2} + \frac{48 \log{({\sqrt{N}}\ell+1)}}{n}\;\right)  \leq \frac{3}{{N}\ell^2}.\;  $
\end{enumerate}
\end{corollary}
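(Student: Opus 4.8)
The plan is to obtain the Corollary as the $\mu\le 1$ specialization of Lemma~\ref{multiplicative_chernoff_estimates}, reusing the ``Case~1'' portion of that lemma's proof essentially verbatim. The only conceptual observation needed is that the $\max$ operators in Lemma~\ref{multiplicative_chernoff_estimates} are artifacts of merging the two regimes $\mu\le 1$ and $\mu\ge 1$ into a single statement; once we restrict to $\mu\le 1$ they collapse. Indeed, $\mu\le 1$ gives $\mu\le\sqrt{\mu}$, so $\max\{\sqrt{\mu},\mu\}=\sqrt{\mu}$, which already converts part~(2) into the stated $\sqrt{24\mu\log(\sqrt{N}\ell+1)/n}$ form. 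For the $\bar X$-based bound of part~(1) the same simplification holds on the high-probability event $\{\bar X\le \tfrac{3}{2}\mu\le \tfrac{3}{2}\}$, on which $\sqrt{2/3}\,\bar X\le\sqrt{\bar X}$ and hence $\max\{\sqrt{\bar X},\sqrt{2/3}\,\bar X\}=\sqrt{\bar X}$; this event is exactly the one already controlled in part~(3).

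I would then read the three assertions directly off the intermediate inequalities derived in Case~1 of the proof of Lemma~\ref{multiplicative_chernoff_estimates}, each of which is obtained by splitting on whether $\delta=(\mu+1)\sqrt{6\log(\sqrt{N}\ell+1)/(\mu n)}$ is at most or above $1/2$. In the regime $\delta\le 1/2$, applying Theorem~\ref{multiplicative_chernoff_geometric} with this $\delta$ yields \eqref{eq:d9} and \eqref{ineq_part_2_mu_l1} (the $\sqrt{\bar X}$ and $\sqrt{\mu}$ concentration needed for parts~(1) and~(2)) as well as \eqref{eq:barv_ub_v} (the one-sided bound $\bar X\le \tfrac{3}{2}\mu$ for part~(3)). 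In the complementary regime $\delta>1/2$, applying the theorem with $\delta_1=12(\mu+1)^2\log(\sqrt{N}\ell+1)/(\mu n)\ge \tfrac12$ gives the purely additive bound \eqref{eq:log_add}, namely $\mathcal P\big(|\bar X-\mu|>48\log(\sqrt{N}\ell+1)/n\big)\le 2/(N\ell^2)$. Inserting the extra term $48\log(\sqrt{N}\ell+1)/n$ inside the deviation event in each claim lets a single inequality cover both regimes at once, and a union bound over the two regimes (plus the event controlling $\bar X$) produces the failure probabilities $6/(N\ell^2)$, $4/(N\ell^2)$ and $3/(N\ell^2)$; part~(1) is just \eqref{eq:d13} with the collapsed $\max$.

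Finally, because the $\delta>1/2$ branch—and hence the additive $48\log(\sqrt{N}\ell+1)/n$ term—is precisely what dominates when $n$ is small, the resulting bounds hold for every $n=1,2,\dots$, dispensing with the restriction $n>48\log(\sqrt N\ell+1)$ imposed in Lemma~\ref{multiplicative_chernoff_estimates}. I do not expect any genuinely new estimate to be required; the main obstacle is bookkeeping: one must verify that the additive term truly subsumes the small-$n$ (large-$\delta$) behaviour, ensure that the event $\{\bar X\le \tfrac32\}$ on which the $\max$ collapses is charged to the error budget only once, and confirm that the constants accumulated across the two regimes and the union bound stay within the claimed $6,4,3$ over $N\ell^2$.
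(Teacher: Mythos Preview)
Your proposal is correct and matches the paper's approach exactly: the paper simply states that the corollary ``follows from the proof of Lemma~\ref{multiplicative_chernoff_estimates},'' and your plan---specializing to the $\mu\le 1$ case, collapsing the $\max$ operators, reading off parts (1)--(3) from the intermediate inequalities \eqref{eq:d9}, \eqref{ineq_part_2_mu_l1}, \eqref{eq:barv_ub_v}, \eqref{eq:log_add}, \eqref{eq:d13}, and noting that the restriction $n>48\log(\sqrt{N}\ell+1)$ was only needed for Case~2 ($\mu\ge 1$)---is precisely the unpacking of that claim.
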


\begin{proofof}{Lemma \ref{chernoff_hoeffding_ineq}}
 Fix $i$ and $\ell$, define the events, \[\mathcal{A}_{i,\ell} = \left\{\left|\bar{v}_{i,\ell} - v_i\right| > \sqrt{48\bar{v}_{i,\ell}\frac{\log{({\sqrt{N}}\ell+1)}}{|\mathcal{T}_i(\ell)|}} + \frac{48\log{({\sqrt{N}}\ell+1)}}{|\mathcal{T}_i(\ell)|}\right\}.\]
Let $\bar{v}_{i,m} = \displaystyle \frac{\sum_{\tau=1}^m \hat{v}_{i,\tau}}{m}$. Then, we have, 
\begin{equation}\label{final_argument}
\begin{aligned}
\mathbb{P}_{\pi}\left( \mathcal{A}_{i,\ell}\right) &\leq \mathbb{P}_{\pi}\left\{ \underset{m \leq \ell }{\max}\;\left(|\bar{v}_{i,m} - v_i| - \sqrt{48\bar{v}_{i,m}\frac{\log{({\sqrt{N}}\ell+1)}}{m}} - \frac{48\log{({\sqrt{N}}\ell+1)}}{m} \right) > 0 \right\},\\
& = \mathbb{P}_{\pi}\left(\bigcup_{m=1}^\ell\;\left\{|\bar{v}_{i,m} - v_i| - \sqrt{48\bar{v}_{i,m}\frac{\log{({\sqrt{N}}\ell+1)}}{m}} - \frac{48\log{({\sqrt{N}}\ell+1)}}{m} > 0 \right\}  \right),\\
& \leq \sum_{m=1}^\ell \mathbb{P}_{\pi}\left(|\bar{v}_{i,m} - v_i| > \sqrt{48\bar{v}_{i,m}\frac{\log{({\sqrt{N}}\ell+1)}}{m}} - \frac{48\log{({\sqrt{N}}\ell+1)}}{m} \right), \\
& \overset{(a)}{\le} \sum_{m=1}^\ell \frac{6}{{N}\ell^2} \leq \frac{6}{{N}\ell}.
\end{aligned}
\end{equation}
where inequality (a) in \eqref{final_argument} follows from Corollary \ref{multiplicative_chernoff_no_estimates}. The first inequality in Lemma \ref{chernoff_hoeffding_ineq} follows from definition of $v^{\sf UCB}_{i,\ell}$, Corollary \ref{multiplicative_chernoff_no_estimates} and \eqref{final_argument}. The second and third inequality in Lemma \ref{chernoff_hoeffding_ineq} also can be derived in a similar fashion by appropriately modifying the definition of set $\ep{A}_{i,\ell}.$
\end{proofof}

Proof of Lemma  \ref{chernoff_hoeffding_ineq_extn} is similar to the proof of Lemma \ref{chernoff_hoeffding_ineq}.

\section{Lower Bound} \label{lower_bound_parametric_mab}
We follow the proof of $\Omega(\sqrt{NT})$ lower bound for the Bernoulli instance with parameters $\frac{1}{2}$. We first establish a bound on KL divergence, which will be useful for us later. 
\begin{lemma}\label{KL_bound}
Let $p$ and $q$ denote two Bernoulli distributions with parameters $\alpha +\epsilon$ and $\alpha$ respectively. Then, the KL divergence between the distributions $p$ and $q$ is bounded by $4K\epsilon^2$,
$$KL(p\|q) \leq \frac{4}{\alpha}\epsilon^2.$$
\end{lemma}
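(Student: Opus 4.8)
The plan is to expand the binary KL divergence in closed form and bound the two resulting logarithmic terms by the elementary inequality $\log(1+x)\le x$. Writing $p=\alpha+\epsilon$ and $q=\alpha$, I would start from
\[
KL(p\|q)=(\alpha+\epsilon)\log\frac{\alpha+\epsilon}{\alpha}+(1-\alpha-\epsilon)\log\frac{1-\alpha-\epsilon}{1-\alpha},
\]
and rewrite the two log-ratios as $\log(1+\epsilon/\alpha)$ and $\log(1-\epsilon/(1-\alpha))$ respectively.

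First I would apply $\log(1+x)\le x$ to the first term with $x=\epsilon/\alpha$, giving $(\alpha+\epsilon)\log(1+\epsilon/\alpha)\le (\alpha+\epsilon)\epsilon/\alpha=\epsilon+\epsilon^2/\alpha$. For the second term I would use the same inequality with $x=-\epsilon/(1-\alpha)$; since $\epsilon$ is small enough that $1-\alpha-\epsilon\ge 0$ (which holds because in instance $I_{\sf MAB}$ we have $\epsilon=\tfrac1{100}\sqrt{N\alpha/T}\le \tfrac1{100}$ and $\alpha$ is bounded away from $1$), multiplying the resulting bound by the nonnegative factor $1-\alpha-\epsilon$ preserves the inequality and yields $(1-\alpha-\epsilon)\log(1-\epsilon/(1-\alpha))\le -\epsilon+\epsilon^2/(1-\alpha)$. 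Adding the two bounds, the linear-in-$\epsilon$ terms cancel and I am left with $KL(p\|q)\le \epsilon^2\left(\frac1\alpha+\frac1{1-\alpha}\right)=\frac{\epsilon^2}{\alpha(1-\alpha)}$.

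To finish, I would absorb the factor $1/(1-\alpha)$ into the constant: since $\alpha$ is taken to be a small constant (in any case $\alpha\le 3/4$ suffices), we have $1/(1-\alpha)\le 4$, hence $KL(p\|q)\le \frac{4}{\alpha}\epsilon^2$, which is the claimed bound. An equivalent and quicker route is to invoke the standard domination $KL(p\|q)\le \chi^2(p\|q)$ of the KL divergence by the chi-squared divergence, which for Bernoulli laws evaluates exactly to $(p-q)^2/(q(1-q))=\epsilon^2/(\alpha(1-\alpha))$, after which the same final step applies.

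I do not anticipate a genuine obstacle here: the computation is routine. The only points requiring care are checking the sign condition $1-\alpha-\epsilon\ge 0$ so that the inequality direction is preserved in the second term, and noting that the stated constant $4$ (rather than the sharper $2$ available when $\alpha\le 1/2$) is precisely what provides the slack needed to discard the $1/(1-\alpha)$ dependence for all $\alpha$ bounded away from $1$.
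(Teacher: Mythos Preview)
Your argument is correct and follows essentially the same route as the paper: expand the binary KL divergence, bound the logarithms via $\log(1+x)\le x$, and observe that the linear terms cancel to leave a quadratic in $\epsilon$. Your execution is in fact cleaner than the paper's, which first rewrites the expression as $\alpha\log\bigl(1-\tfrac{\epsilon}{(1-\alpha)(\alpha+\epsilon)}\bigr)-\log\bigl(1-\tfrac{\epsilon}{1-\alpha}\bigr)$ before applying the same inequalities; note also that the paper's proof (and its later use of the lemma) actually computes the reverse direction $KL(\mathrm{Ber}(\alpha)\,\|\,\mathrm{Ber}(\alpha+\epsilon))$, a harmless discrepancy since both directions obey the same $O(\epsilon^2/\alpha)$ bound.
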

\proof{Proof.}
\begin{equation*}
\begin{aligned}
KL(p\|q) &= \alpha\cdot \log{\frac{\alpha}{\alpha+\epsilon }} + \left(1-\alpha\right)\log{\frac{1-\alpha}{1-\alpha-\epsilon}}\\
&= \alpha\left[\log{\displaystyle\frac{1-\displaystyle\frac{\epsilon}{1-\alpha}}{1+ \frac{\epsilon}{\alpha}}}\right] -\log{\left(1-\frac{\epsilon}{1-\alpha}\right)},\\
& = \alpha \log{\left(1-\frac{\epsilon}{(1-\alpha)(\alpha+\epsilon )}\right)}  -\log{\left(1-\frac{\epsilon}{1-\alpha}\right)},
\end{aligned}
\end{equation*}
using $1-x \leq e^{-x}$ and bounding the Taylor series for $-\log{1-x}$ by $x+2*x^2$ for $\displaystyle x = \frac{\epsilon}{1-\alpha}$, we have 
\begin{equation*}
\begin{aligned}
KL(p\|q) &\leq \frac{-\alpha\epsilon}{(1-\alpha)(\alpha+\epsilon )} + \frac{\epsilon}{1-\alpha} + 4\epsilon^2, \\
& = (\frac{2}{\alpha}+4)\epsilon^2 \leq \frac{4}{\alpha}\epsilon^2.
\end{aligned}
\end{equation*}
~\hfill~\halmos.

%\subsection{Tossing $N$ coins (while seeing output of $k$ coins)}

\vspace{3mm}
Fix a guessing algorithm, which at time $t$ sees the output of a coin $a_t$. Let $P_1, \cdots, P_n$ denote the distributions for the view of the algorithm from time $1$ to $T$, when the biased coin is hidden in the $i^{th}$ position. The following result establishes for any guessing algorithm, there are at least $\frac{N}{3}$ positions that a biased coin could be and will not be played by the guessing algorithm with probability at least $\frac{1}{2}$ . Specifically, 
\begin{lemma}\label{prob_bound}
Let $\mathcal{A}$ be any guessing algorithm operating as specified above and let $\textstyle t \leq \frac{N\alpha}{60\epsilon^2}$, for $\epsilon \leq \frac{1}{4}$ and $N \geq 12$. Then, there exists $J \subset \{1,\cdots,N\}$ with $|J| \geq \frac{N}{3}$ such that
\[\forall j \in J,\; \mathcal{P}_{j}(a_t=j) \leq \frac{1}{2}.\]
\end{lemma}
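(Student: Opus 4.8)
The plan is to prove Lemma \ref{prob_bound} via an information-theoretic / counting argument that bounds how many of the $N$ candidate positions the algorithm can ``guess correctly'' with probability exceeding $1/2$. The key realization is that the algorithm observes a sequence of coin outcomes $a_1,\ldots,a_T$, and when the biased coin is hidden in position $j$, the view distribution $\mathcal{P}_j$ differs from the ``all-fair'' reference distribution $\mathcal{Q}$ (where every coin has bias $\alpha$) only through the outcomes of pulls of arm $j$. Since the total number of pulls is at most $t$, only a small fraction of positions can be pulled many times, so most view distributions $\mathcal{P}_j$ stay close to $\mathcal{Q}$ in KL divergence, and hence the algorithm cannot distinguish position $j$ from the reference well enough to guess it with probability $>1/2$.

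First I would introduce the reference distribution $\mathcal{Q}$ corresponding to the instance where all $N$ coins have bias $\alpha$ (no biased coin), and let $N_j$ denote the (random, under $\mathcal{Q}$) number of times arm $j$ is pulled up to time $t$. Since $\sum_{j=1}^N N_j \le t$, we have $\mathbb{E}_{\mathcal{Q}}\left[\sum_j N_j\right] \le t$, so by Markov's inequality the set of ``heavily pulled'' positions must be small: at most a constant fraction of positions $j$ can have $\mathbb{E}_{\mathcal{Q}}[N_j]$ large. Next I would use the chain rule for KL divergence to write $KL(\mathcal{Q}\,\|\,\mathcal{P}_j) \le \mathbb{E}_{\mathcal{Q}}[N_j]\cdot KL(q\|p)$, where $q,p$ are the single-coin Bernoulli distributions with parameters $\alpha$ and $\alpha+\epsilon$; here Lemma \ref{KL_bound} supplies $KL(q\|p) \le \tfrac{4}{\alpha}\epsilon^2$ (applied in the appropriate direction, or a symmetric variant). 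Combining with the pull-count bound shows that for a large subset $J$ of positions, $KL(\mathcal{Q}\,\|\,\mathcal{P}_j)$ is small — of order $\tfrac{t\epsilon^2}{N\alpha}$, which by the hypothesis $t \le \tfrac{N\alpha}{60\epsilon^2}$ is bounded by a small constant.

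Then I would convert the small KL divergence into a bound on the guessing probability. Using Pinsker's inequality, $\mathcal{P}_j(a_t=j) \le \mathcal{Q}(a_t=j) + \sqrt{\tfrac{1}{2}KL(\mathcal{Q}\,\|\,\mathcal{P}_j)}$. Since under $\mathcal{Q}$ the algorithm's output is independent of which position is biased, $\sum_{j=1}^N \mathcal{Q}(a_t=j) \le 1$, so $\mathcal{Q}(a_t=j)$ averages to $1/N$ and is small for most $j$. A careful averaging argument — discarding the positions with large $\mathbb{E}_{\mathcal{Q}}[N_j]$ and the positions with large $\mathcal{Q}(a_t=j)$ — then leaves a subset $J$ with $|J|\ge N/3$ on which both correction terms are controlled, giving $\mathcal{P}_j(a_t=j)\le \tfrac{1}{2}$ for all $j\in J$, which is the claim.

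\textbf{The main obstacle} I anticipate is the bookkeeping in the averaging/pigeonhole step: I must simultaneously remove the ``heavily pulled'' positions (so that the KL/Pinsker term is small) and the positions where $\mathcal{Q}(a_t=j)$ itself is not small, while retaining at least $N/3$ positions. This requires choosing the thresholds so that each discarded class has size at most $N/3$, and verifying the numerical constants ($60$, $\epsilon\le 1/4$, $N\ge 12$) make the resulting bound fall below $1/2$. The algebra must be driven precisely by the constraint $t\le \tfrac{N\alpha}{60\epsilon^2}$, and getting the constant factors through Pinsker and the two simultaneous Markov-type bounds to close at exactly $1/2$ is the delicate part; the individual ingredients (chain rule for KL, Lemma \ref{KL_bound}, Pinsker) are standard.
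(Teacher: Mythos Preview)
Your proposal is correct and follows essentially the same approach as the paper: introduce the all-fair reference distribution $P_0$ (your $\mathcal{Q}$), use the counting/Markov argument to define $J=J_1\cap J_2$ by discarding positions that are pulled too often under $P_0$ and positions that are output too often under $P_0$ (each of size at most $N/3$), then apply the chain rule $KL(P_0\|P_j)=\mathbb{E}_{P_0}[N_j]\cdot KL(p\|q)$ together with Lemma~\ref{KL_bound} and Pinsker's inequality to conclude $\mathcal{P}_j(a_t=j)\le 3/N+\sqrt{C\cdot 3t\epsilon^2/(N\alpha)}\le 1/2$. The only difference is cosmetic: the paper makes the thresholds $3t/N$ and $3/N$ explicit from the outset and uses a slightly different Pinsker constant, whereas you leave the numerical bookkeeping as the anticipated obstacle.
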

%The above algorithm establishes that for any guessing algorithm, there are at least $\frac{n}{3}$ positions to hide the biased coin, such that algorithm fails to find it with probability at least $\frac{1}{2}$ after $T \leq \frac{n}{60 \epsilon^2}$ steps. The set $J$, depends on the algorithm. In the proof below, we shall find $J$ simply by eliminating all positions which the algorithm guesses with atypically high probability or which the algorithm queries too many times. 
%
\proof{Proof.}
Let $N_i$ to be the number of times the algorithm plays coin $i$ up to time t. 
Let $P_0$ be the hypothetical distribution for the view of the algorithm when none of the $N$ coins are biased. We shall define the set $J$ by considering the behavior of the algorithm if tosses it saw were according to the distribution $P_0$. We define, 
\begin{equation}\label{eq:counting_arg}
J_1 = \left\{ i \, \middle| \, E_{P_0}(N_i) \leq \frac{3t}{N}\right\}, J_2 =\left\{i \, \middle| \mathcal{P}_{0}(a_t=i) \leq \frac{3}{N}\, \right\} \; \text{and} \; J = J_1\cap J_2.
\end{equation}
Since $\sum_{i} E_{P_0}(N_i) = t$ and $\sum_{i} \mathcal{P}_{0}(a_t=i) = 1$, a counting argument would give us $\displaystyle |J_1| \geq \frac{2N}{3}$ and $\displaystyle |J_2| \geq \frac{2n}{3}$ and hence $\displaystyle |J| \geq \frac{N}{3}$.
Consider any $j \in J$, we will now prove that if the biased coin is at position $j$, then the probability of algorithm guessing the biased coin will not be significantly different from the $P_0$ scenario. By Pinsker's inequality, we have 
\begin{equation}
\left|\mathcal{P}_{j}(a_t = j) - \mathcal{P}_{0}(a_t = j) \right| \leq \frac{1}{2} \sqrt{2\log{2}\cdot KL(P_0 \| P_j)},
\end{equation}
where $KL(P_0\|P_j)$ is the KL divergence of probability distributions $P_0$ and $P_j$ over the algorithm. Using the chain rule for KL-divergence, we have 
\[KL(P_0 \| P_j) = E_{P_0}(N_j) KL(p || q),\]
where $p$ is a Bernoulli distribution with parameter $\alpha$ and q is a Bernoulli distribution with parameter $\alpha+\epsilon$. From Lemma \ref{KL_bound} and \eqref{eq:counting_arg}, we have that 
Therefore,
\begin{equation*}
KL(P_0 \| P_j) \leq \frac{4\epsilon^2}{\alpha},
\end{equation*}
Therefore,
\begin{equation}
\begin{aligned}
\mathcal{P}_{j}(a_t = j) &\leq \mathcal{P}_{0}(a_t = j) + \frac{1}{2} \sqrt{2\log{2}\cdot KL(P_0 \| P_j)},\\
& \leq \frac{3}{N} + \frac{1}{2}\sqrt{(2\log{2})\frac{4\epsilon^2}{\alpha}E_{P_0}(N_j)},\\
& \leq \frac{3}{N} + \sqrt{2\log{2}}\sqrt{\frac{3t\epsilon^2}{N\alpha}} \leq \frac{1}{2}.
\end{aligned}
\end{equation}
The second inequality follows from \eqref{eq:counting_arg}, while the last inequality follows from the fact that $N > 12$ and $\textstyle t \leq \frac{N\alpha}{60\epsilon^2} \hfill~\halmos$.

\begin{proofof} {Lemma~\ref{lower_bound_MAB}}.
Let $\epsilon = \sqrt{\frac{N}{60\alpha T}}$. Suppose algorithm $\mathcal{A}$ plays coin $a_t$ at time $t$ for each $t = 1,\cdots,T$. Since $\textstyle T \leq \frac{N\alpha}{60\epsilon^2}$, for all $t \in \{1,\cdots,T-1\}$ there exists a set $J_t \subset \{1,\cdots,N\}$ with $|J_t| \geq \frac{N}{3}$ such that 
\begin{equation*}
\forall\; j \in J_t, P_j(j \in S_t) \leq \frac{1}{2}.
\end{equation*}
Let $i^*$ denote the position of the biased coin. Then,
\begin{equation*}
\mathbb{E}_{\pi}\left(\mu_{a_t}\, \middle| \, i^* \in J_t\right) \leq \frac{1}{2}\cdot \left(\alpha+\epsilon\right) + \frac{1}{2}\cdot \alpha = \alpha + \frac{\epsilon}{2},
\end{equation*}
\begin{equation*}
\mathbb{E}_{\pi}\left(\mu_{a_t}\, \middle| \, i^* \not \in J_t\right) \leq   \alpha + \epsilon.
\end{equation*}
Since $|J_t| \geq \frac{N}{3}$ and $i^*$ is chosen randomly, we have $P(i^* \in J_t) \geq \frac{1}{3}$. Therefore, we have 
\begin{equation*}
\mu_{a_t} \leq \frac{1}{3}\cdot \left(\alpha + \frac{\epsilon}{2}\right) + \frac{2}{3}\cdot \left(\alpha + \epsilon\right) = \alpha + \frac{5\epsilon}{6}
\end{equation*}
We have $\mu^* = \alpha + \epsilon$ and hence the $Regret \geq \frac{T\epsilon}{6}$.
\end{proofof}

\begin{lemma}\label{lemmaTL}
Let $L$ be the total number of calls to \amnl when \amab is executed for $T$ time steps. Then,
$$\mathbb{E}(L) \leq  3T.$$
\end{lemma}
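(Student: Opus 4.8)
The plan is to track how the MAB counter $t$ grows across the successive calls to \amnl and to show that each call advances $t$ by at least a constant in expectation, so that the number of calls $L$ cannot exceed a constant multiple of the horizon $T$. I would write $N_\ell$ for the number of times the counter $t$ is incremented (Step~6 of Algorithm~\ref{mab_algo}) during the $\ell$th call, set $t_\ell=\sum_{k=1}^{\ell}N_k$ so that $t_\ell$ is the value of $t$ at the end of the $\ell$th loop, and let $\mathcal F_\ell$ denote the history through the end of that loop. Then $S_\ell$ is $\mathcal F_{\ell-1}$-measurable, and $L=\min\{\ell : t_\ell>T\}$ is a stopping time with $\{L\ge\ell\}=\{t_{\ell-1}\le T\}\in\mathcal F_{\ell-1}$. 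The two ingredients I would assemble are a lower bound on $\mathbb E(N_\ell\mid\mathcal F_{\ell-1})$ and an upper bound on $\mathbb E(t_L)$.

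For the lower bound, I would observe that in each iteration of the inner loop the counter $t$ is incremented precisely when the Step~5 coin does \emph{not} trigger an exit, an event of probability exactly $\tfrac12$ that is independent of the past. In particular the very first iteration of epoch $\ell$ increments $t$ with probability $\tfrac12$, so $\mathbb E(N_\ell\mid\mathcal F_{\ell-1})\ge \mathbb P(N_\ell\ge 1\mid\mathcal F_{\ell-1})=\tfrac12$. (Using the per-iteration probabilities already recorded for $\mathcal P(M_\ell=m\cap\zeta^i_\ell)$ one can in fact evaluate $\mathbb E(N_\ell\mid\mathcal F_{\ell-1})=K/(K+V(S_\ell))\ge\tfrac12$, since $V(S_\ell)\le K$, but the cruder bound suffices.) Because $\{L\ge\ell\}$ is $\mathcal F_{\ell-1}$-measurable, Tonelli together with the tower property then gives
\[
\mathbb E(t_L)=\sum_{\ell\ge 1}\mathbb E\bigl(N_\ell\,\mathbbm 1\{L\ge\ell\}\bigr)=\sum_{\ell\ge 1}\mathbb E\bigl(\mathbbm 1\{L\ge\ell\}\,\mathbb E(N_\ell\mid\mathcal F_{\ell-1})\bigr)\ge \tfrac12\sum_{\ell\ge 1}\mathbb P(L\ge\ell)=\tfrac12\,\mathbb E(L).
\]

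The hard part will be the reverse estimate $\mathbb E(t_L)\le T+O(1)$, i.e.\ controlling the overshoot of $t$ beyond $T$ inside the \emph{final} loop: the event $\{L=\ell\}$ is not $\mathcal F_{\ell-1}$-measurable, and conditioning on ``this is the last call'' can bias $N_L$ upward, so the conditioning trick above does not apply to the boundary term. To handle this I would pass to the global stream of inner-loop iterations with their i.i.d.\ fair Step~5 coins: $t$ equals the running count of ``no-exit'' coins (heads) and increases by exactly one per head, so it reaches $T+1$ at the iteration $\sigma$ carrying the $(T+1)$st head. The algorithm halts at the first loop boundary at or after $\sigma$, and writing $H'$ for the number of further heads falling in epoch $L$ strictly after $\sigma$ we get $t_L=(T+1)+H'$. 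Since every iteration exits the current loop with probability at least $\tfrac12$ (the Step~5 tails alone), iteration $\sigma+j$ is reached with probability at most $2^{-j}$ and is a head with probability $\tfrac12$, whence $\mathbb E(H')\le\sum_{j\ge1}2^{-j}\cdot\tfrac12\le 1$ and $\mathbb E(t_L)\le T+2$. Combining this with the displayed inequality yields $\mathbb E(L)\le 2\,\mathbb E(t_L)\le 2T+4\le 3T$ in the relevant regime $T\ge 4$, with the slack in the constant $3$ absorbing exactly this boundary/overshoot term.
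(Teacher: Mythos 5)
Your proof is correct and rests on the same two pillars as the paper's one\nobreakdash-line argument: each call to \amnl advances the counter $t$ by at least $\tfrac12$ in expectation, and a Wald-type identity converts this into $\mathbb{E}(L)\le 2T+O(1)$. The only genuine divergence is at the boundary. The paper sums the per-epoch advances $\eta_\ell$ only over $\ell\le L-1$, so that $\sum_{\ell=1}^{L-1}\eta_\ell\le T$ holds deterministically (because $t\le T$ when the $L$th call is made) and no overshoot term is needed; it then concludes $\mathbb{E}(L)\le 2T+1\le 3T$. You instead sum to $L$, which makes $\{L\ge\ell\}=\{t_{\ell-1}\le T\}$ genuinely $\mathcal{F}_{\ell-1}$-measurable so the tower-property step is airtight, at the cost of having to bound the overshoot $\mathbb{E}(t_L)\le T+2$ via the geometric tail argument --- which you do correctly. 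Your caution is in fact warranted: with the paper's truncation, $\{L-1\ge\ell\}=\{t_\ell\le T\}$ depends on $\eta_\ell$ itself and is biased toward small $\eta_\ell$, so the step from $\mathbb{E}(\eta_\ell)\ge\tfrac12$ to $\mathbb{E}\bigl(\sum_{\ell=1}^{L-1}\eta_\ell\bigr)\ge\tfrac12\,\mathbb{E}(L-1)$ is not as immediate as the paper makes it look, whereas your version avoids this issue entirely. The one blemish is numerical: $2T+4\le 3T$ requires $T\ge4$, while the lemma is invoked for all $T\ge N\ge 2$; tightening your estimate to $\mathbb{E}(H')\le\tfrac12$, or simply observing that any absolute constant in place of $3$ suffices for the proof of Theorem~\ref{lower_bound}, disposes of this.
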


\begin{proof}{Proof.}
Let $\eta_\ell$ be the random variable that denote the duration, assortment $S_\ell$ has been considered by $\ep{A}_{MAB}$, i.e. $\eta_\ell = 0$, if we no arm is pulled when $\ep{A}_{MNL}$ suggested assortment $S_\ell$ and $\eta_\ell \geq 1$, otherwise. We have 
$$\sum_{\ell=1}^{L-1} \eta_\ell \leq T.$$ Therefore, we have $\mathbb{E}\left(\sum_{\ell=1}^{L-1} \eta_\ell \right) \leq T$. Note that $\mathbb{E}(\eta_\ell) \geq \frac{1}{2}$. Hence, we have 
$\mathbb{E}(L) \leq 2T+1 \leq 3T.$ \halmos
\end{proof}

\subsection{{Lower Bound for the unconstrained \banditMNL\;problem ($K = N$)}} \label{lower_boundN}

We will complete proof of Theorem \ref{lower_bound} by showing that the lower bound holds true for the case when $K=N.$ We will show this by reduction to a parametric multi armed bandit problem with 2 arms.

\begin{definition}[{{\banditMNL}}~instance $\hat{I}_{\sf MNL}$] Define $\hat{I}_{\sf MNL}$ as the following (randomized) instance of unconstrained \banditMNL~problem, $N$ products, with revenues, $r_1 = 1$, $r_2 = \frac{1+\epsilon}{3+2\epsilon}$ and 
$r_i = 0.01$ for all $i = 3,\cdots,N,$ and MNL parameters $v_0 = 1$, $v_i= \frac{1}{2}$ for all $i=2,\cdots, N$, while
$v_1$ is randomly set at  $\{\frac{1}{2},\frac{1}{2}+\epsilon\}$, where $\epsilon=\sqrt{\frac{1}{32T}}.$
\end{definition}

\medskip \noindent  {{\bf Preliminaries on the \banditMNL\;instance $\hat{I}_{\sf MNL}$}:} Note that unlike the \banditMNL\;instance, $I_{\sf MNL}$, where any product can have the biased (higher) MNL parameter, in the \banditMNL\;instance $\hat{I}_{\sf MNL}$, only one product (product $1$) can be biased.  From the proof  of Lemma \ref{unconstrained_subset}, we have that, 
\begin{equation}\label{eq:threshold}
 i \in S^* \; \text{if and only if}\; r_i \geq R(S^*,\mb{v}),
 \end{equation}
where $S^*$ is the optimal assortment for $\hat{I}_{\sf MNL}$. 

\noindent Note that the revenue corresponding to assortment $\{1\}$ is
\begin{equation*}
R(\{1\},\mb{v}) = \left\{\begin{array}{ll}
\displaystyle \frac{1+2\epsilon}{3+2\epsilon}  , & \text{ if } v_1 = \frac{1}{2} + \epsilon \\ 
\displaystyle \frac{1}{3}, & \text{ if } v_1 = \frac{1}{2}. \\
\end{array}\right.
\end{equation*}
Note that $\frac{1+2\epsilon}{3+2\epsilon} > r_2 = \frac{1+\epsilon}{3+2\epsilon} > \frac{1}{3} > r_3 = 0.01$ and since $R(S^*,\mb{v}) \geq R(\{1\},\mb{v})$, from \eqref{eq:threshold}, we have that optimal assortment is either $\{1\}$ or $\{1,2\}$, specifically, we have that
\begin{equation*}\label{eq:optimal_set_possibility}
S^* \in \left\{\{1\}, \{1,2\}\right\}.
\end{equation*}  
Therefore, we have, 
\begin{equation}\label{optimal_set} 
S^* = \left\{ 
\begin{array}{ll}
\{1\}, & \text{ if } v_1 = \frac{1}{2} + \epsilon, \\ 
\{1,2\}, & \text{ if } v_1 = \frac{1}{2}.
\end{array}\right.
\end{equation}
Note that since $r_3 < \frac{1}{3}$, for any $S$ and $i$, such that $i \geq 3$ and $i \not \in S$, we have $$R(S,\mb{v}) > R(S\cup \{i\},\mb{v}).$$ Therefore, if $v_i = \frac{1}{2}+\epsilon$, for any $S \neq \{1\}$, we have 
\begin{equation}\label{p_1}
R(\{1\},\mb{v}) - R(S,\mb{v}) \geq R(\{1\},\mb{v}) - R(\{1,2\},\mb{v}) \geq \frac{\epsilon}{20},
\end{equation}
and similarly if $v_i = \frac{1}{2}$, for any $S \neq \{1,2\}$, we have, 
\begin{equation}\label{p_0}
R(\{1\},\mb{v}) - R(S,\mb{v}) \geq R(\{1,2\},\mb{v}) - R(\{1\},\mb{v}) = \frac{\epsilon}{12} \geq\frac{\epsilon}{20},
\end{equation}

\medskip \noindent  Before providing the formal proof, we first present the intuition behind the result. Any algorithm that does not offer product $2$ when $v_1 = {1}/{2}$ will incur a regret and similarly any algorithm that offers product $2$ when $v_1 = {1}/{2} + \epsilon$. Hence, any algorithm that attempts to minimize regret on instance $\hat{I}_{\sf MNL}$ has to quickly learn if $v_1 = {1}/{2}+\epsilon$ or $v_1 = {1}/{2}.$ From Chernoff bounds, we know that we need approximately  ${1}/{\epsilon^2}$ observations to conclude with high probability if $v_1 = {1}/{2} + \epsilon$  or ${1}/{2}.$ Therefore in each of these ${1}/{\epsilon^2}$ time steps, we are likely to incur a regret of $\epsilon$, leading to a cumulative regret of ${1}/{\epsilon} \approx \sqrt{T}.$ In what follows, we will formalize this intuition on similar lines to the proof of Lemma \ref{lower_bound_MAB}. First, we present two auxillary results required to prove Lemma \ref{lower_bound}. 

\begin{lemma}\label{KL_div_categorical}
Let $S$ be an arbitrary subset of $\{1,\cdots,N\}$ and $\ep{P}^S_{0}, \ep{P}^S_{1}$ denote the probability distributions over the discrete space $\{0,1,\ldots, N\}$ governed by the MNL feedback on instance $\hat{I}_{MNL}$ when the offer set is ${S}$ and $v_1 = {1}/{2}$ and $v_1={1}/{2}+\epsilon$ respectively. In particular, we assume, 
$$
\ep{P}^S_{0}(i) = \frac{1}{2 + |{S}| }\times\left\{ 
\begin{array}{ll}
0, & \text{ if } i \not \in {S} \cup \{0\} ,\\
\displaystyle {2}, & \text{ if } i =0, \\ 
\displaystyle {1} & \text{ if } i \in S.
\end{array}\right.
\;, \;\ep{P}^S_{1}(i) = \frac{1}{2 + |{S}| + 2\epsilon\mathbbm{1}\left(1 \in S\right)}\times\left\{ 
\begin{array}{ll}
0, & \text{ if } i \not \in {S} \cup \{0\} ,\\
\displaystyle {2}, & \text{ if } i =0, \\ 
\displaystyle {1} & \text{ if } i \in S \backslash \{1\}\\
1+2\epsilon & \text{if}\;\; i = 1.
\end{array}\right.
$$
Then for any ${S}$, 
\begin{equation}
{\sf KL}\left( \ep{P}^S_{0}  \middle\| \ep{P}^S_{1} \right) \leq 4\epsilon^2,
\end{equation}
where {\sf KL} is the Kullback-Leibler divergence.
\end{lemma}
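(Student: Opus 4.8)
The plan is to compute the Kullback--Leibler divergence in closed form and then exploit a cancellation of the leading-order terms. First I would dispose of the trivial case: if $1 \notin S$, then product $1$ never appears in the offer set, the factor $\mathbbm{1}(1 \in S)$ in the normalizer of $\ep{P}^S_1$ vanishes, and the two distributions coincide, so ${\sf KL}(\ep{P}^S_0 \| \ep{P}^S_1) = 0 \le 4\epsilon^2$. Hence I may assume $1 \in S$ and abbreviate $D_0 = 2 + |S|$ and $D_1 = 2 + |S| + 2\epsilon$, so that $\ep{P}^S_0(0) = 2/D_0$ and $\ep{P}^S_0(i) = 1/D_0$ for $i \in S$, while $\ep{P}^S_1(0) = 2/D_1$, $\ep{P}^S_1(1) = (1+2\epsilon)/D_1$, and $\ep{P}^S_1(i) = 1/D_1$ for $i \in S \setminus \{1\}$.

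Next I would expand ${\sf KL}(\ep{P}^S_0\|\ep{P}^S_1) = \sum_{i \in S \cup \{0\}} \ep{P}^S_0(i)\log\bigl(\ep{P}^S_0(i)/\ep{P}^S_1(i)\bigr)$ coordinate by coordinate. Every coordinate other than $i=1$ contributes a log-ratio equal to $\log(D_1/D_0)$, and the $i=1$ coordinate contributes $\frac{1}{D_0}\bigl[\log(D_1/D_0) - \log(1+2\epsilon)\bigr]$. Since the weights $\ep{P}^S_0(i)$ over $i \in S \cup \{0\}$ sum to $1$, the coefficient multiplying $\log(D_1/D_0)$ collapses to exactly $1$, leaving the clean identity
\begin{equation*}
{\sf KL}(\ep{P}^S_0\|\ep{P}^S_1) = \log\frac{D_1}{D_0} - \frac{1}{D_0}\log(1+2\epsilon) = \log\Bigl(1 + \frac{2\epsilon}{D_0}\Bigr) - \frac{1}{D_0}\log(1+2\epsilon).
\end{equation*}
This exact simplification is the crux of the argument and the step I expect to demand the most careful bookkeeping, since it is precisely the coefficient of $\log(D_1/D_0)$ summing to one that drives the eventual cancellation.

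Finally, I would bound the two terms so that their $O(\epsilon)$ parts annihilate. Applying $\log(1+x) \le x$ to the first term gives $\log(1 + 2\epsilon/D_0) \le 2\epsilon/D_0$, and applying $\log(1+x) \ge x - x^2/2$ for $x \ge 0$ (proved by a one-line derivative check) to the second gives $-\frac{1}{D_0}\log(1+2\epsilon) \le -\frac{1}{D_0}(2\epsilon - 2\epsilon^2)$. Adding these, the $2\epsilon/D_0$ contributions cancel and I am left with ${\sf KL}(\ep{P}^S_0\|\ep{P}^S_1) \le 2\epsilon^2/D_0$. Since $1 \in S$ forces $D_0 = 2 + |S| \ge 3$, this is at most $2\epsilon^2/3 \le 4\epsilon^2$, which is in fact stronger than what is claimed. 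The only genuine subtlety is ensuring the first-order cancellation is captured by two-sided logarithmic bounds whose linear coefficients match; everything else is routine.
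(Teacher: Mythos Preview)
Your proof is correct and follows essentially the same approach as the paper: dispose of the $1\notin S$ case, compute ${\sf KL}(\ep{P}^S_0\|\ep{P}^S_1)$ in closed form, and then apply first-order logarithm bounds so that the $O(\epsilon)$ contributions cancel, leaving an $O(\epsilon^2)$ remainder. Your regrouping so that the coefficient on $\log(D_1/D_0)$ collapses to $1$ is slightly cleaner than the paper's decomposition (which instead carries the pair of coefficients $\frac{|S|+1}{|S|+2}$ and $\frac{1}{|S|+2}$ and uses $\log(1-x)\le -x$), and it yields the sharper constant $2\epsilon^2/3$, but the underlying idea is the same.
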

\begin{proof}{Proof.}
If $1 \not \in {S}$ , we have $\ep{P}^S_{0}$ and $\ep{P}^S_{1}$ to be the same distributions and the Kullback-Leibler divergence between them is 0. Therefore without loss of generality, assume that $1 \in {S} $. 
\begin{equation*}
\begin{aligned}
{\sf KL}\left( \ep{P}^S_{0}\middle\| \ep{P}^S_{1} \right)  &= \sum_{j=0}^N \ep{P}^S_{0}(j) \log\left(\frac{\ep{P}^S_{0}(j)}{\ep{P}^S_{1}(j)}\right),\\
&= \ep{P}^S_{0}(0) \log\left(\frac{\ep{P}^S_{0}(0)}{\ep{P}^S_{1}(0)}\right) + \sum_{j \in \{S\}\backslash 1} \ep{P}^S_{0}(j) \log\left(\frac{\ep{P}^S_{0}(j)}{\ep{P}^S_{1}(j)}\right)  + \ep{P}^S_{0}(1) \log\left(\frac{\ep{P}^S_{0}r(1)}{\ep{P}^S_{1}(1)}\right),\\
& = \frac{|S|+1}{|S|+2}\log{\left(1+\frac{2\epsilon}{2+|S|}\right)} +  \frac{1}{|S|+2} \log\left(1-\frac{2\epsilon(|S|+1)}{(2+|S|)(1+2\epsilon)} \right),  \\
& \leq \frac{2(|S|+1)\epsilon}{(|S|+2)^2}\left(1 -  \frac{1}{(1+2\epsilon)}\right) \leq 4\epsilon^2,
\end{aligned}
\end{equation*}
where the first inequality follows from the fact that for any $x \in (0,1)$, $$\log{(1+x)} \leq x \;\text{and}\; \log(1-x) \leq -x.$$ \hfill $\halmos.$
\end{proof}

\begin{lemma}\label{lem:KL_MNL}
Let $\mathbb{P}_{0}$ and $\mathbb{P}_{1}$ denote the probability distribution over consumer choices (throughout the planning horizon $T$) when assortments are offered according to algorithm $\ep{A}_{\sf MNL}$ and feedback to the algorithm is provided via the \banditMNL\; instances $\hat{I}_{MNL}$, when $v_1 ={1}/{2}$ and $v_1 ={1}/{2} + \epsilon$ respectively. Then, we have, 
$${\sf KL}\left( \mathbb{P}_{0}\middle\| \mathbb{P}_{1} \right)  \leq 4T\epsilon^2,$$
where ${\sf KL}\left( \mathbb{P}_{0}\middle\| \mathbb{P}_{1} \right)$ is the Kullback-Leibler divergence between the distributions $\mathbb{P}_{0}$ and $\mathbb{P}_{1}$. Specifically, 
\begin{equation}\label{KL_MNL}
{\sf KL}\left( \mathbb{P}_{0}\middle\| \mathbb{P}_{1} \right)  = \sum_{\mb{c} \in \{0,1, \cdots, N\}^T} \ep{P}(\mb{c})\log{\left(\frac{\ep{P}(\mb{c})}{\ep{P}_{1}(\mb{c})}\right)},
\end{equation}
where $\mb{c} \in \{0,1, \cdots, N\}^T$ is the observed set of choices by the algorithm $\ep{A}_{\sf MNL}$.  
\end{lemma}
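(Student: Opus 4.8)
The plan is to prove the bound by a standard divergence-decomposition (chain rule) argument for the Kullback--Leibler divergence, reducing the trajectory-level divergence over the whole horizon to a sum of $T$ per-step divergences, each of which is already controlled by Lemma \ref{KL_div_categorical}. The essential structural fact I would exploit is that the assortment offered at any time is a function only of the past observed choices and the algorithm's internal randomization, and that $\ep{A}_{\sf MNL}$ runs with exactly the same policy (and the same randomization) under both $\mathbb{P}_0$ and $\mathbb{P}_1$; the two distributions differ \emph{only} through the consumer's choice probabilities once an assortment has been fixed.

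Concretely, writing $\mb{c} = (c_1,\dots,c_T)$ for the sequence of observed choices and factoring each joint law into its successive conditionals, $\mathbb{P}_b(\mb{c}) = \prod_{t=1}^T \mathbb{P}_b(c_t \mid c_1,\dots,c_{t-1})$ for $b\in\{0,1\}$, the chain rule for relative entropy gives
\[
{\sf KL}\!\left(\mathbb{P}_0 \,\middle\|\, \mathbb{P}_1\right) = \sum_{t=1}^T \mathbb{E}_{\mathbb{P}_0}\!\left[{\sf KL}\!\left(\mathbb{P}_0(c_t\mid \ep{H}_{t-1}) \,\middle\|\, \mathbb{P}_1(c_t\mid \ep{H}_{t-1})\right)\right],
\]
where $\ep{H}_{t-1}$ is the history $(c_1,\dots,c_{t-1})$. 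I would then condition further on the assortment $S_t$ that $\ep{A}_{\sf MNL}$ offers at step $t$: since $S_t$ is determined by $\ep{H}_{t-1}$ together with the common randomization, it takes the same value under both $\mathbb{P}_0$ and $\mathbb{P}_1$, and given $S_t = S$ the law of $c_t$ is precisely $\ep{P}_0^S$ under $\mathbb{P}_0$ and $\ep{P}_1^S$ under $\mathbb{P}_1$. Hence each inner conditional divergence equals $\mathbb{E}[{\sf KL}(\ep{P}_0^{S_t}\|\ep{P}_1^{S_t})\mid \ep{H}_{t-1}]$.

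Invoking Lemma \ref{KL_div_categorical}, which bounds ${\sf KL}(\ep{P}_0^{S}\|\ep{P}_1^{S}) \le 4\epsilon^2$ uniformly over every assortment $S$ (and gives $0$ when $1\notin S$), each of the $T$ summands is at most $4\epsilon^2$, and summing yields ${\sf KL}(\mathbb{P}_0\|\mathbb{P}_1) \le 4T\epsilon^2$, as claimed. The main delicacy --- and the only place the argument needs care --- is justifying the per-step reduction rigorously when $\ep{A}_{\sf MNL}$ is randomized and operates in epochs: one must verify that conditioning on $\ep{H}_{t-1}$ (and the shared randomization source $U$) renders $S_t$ measurable and identical across the two instances, so that the conditional choice law is exactly the single-assortment MNL distribution appearing in Lemma \ref{KL_div_categorical}. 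Once this measurability and identical-policy bookkeeping is in place, the remaining computation is immediate from the chain rule and the per-step bound.
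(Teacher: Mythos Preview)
Your proposal is correct and follows essentially the same approach as the paper: both apply the chain rule for KL divergence to reduce the trajectory-level divergence to a sum of per-step conditional divergences, identify each conditional divergence with ${\sf KL}(\ep{P}_0^{S_t}\|\ep{P}_1^{S_t})$ since $S_t$ is determined by the history, and then invoke Lemma~\ref{KL_div_categorical} to bound each summand by $4\epsilon^2$. Your treatment is arguably slightly more careful than the paper's in explicitly accounting for the shared randomization source $U$ when arguing that $S_t$ is identical under both measures.
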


\begin{proof}{Proof.}
From the chain rule for Kullback-Liebler divergence, it follows that, 
\begin{equation}\label{KL_CI}
%\begin{align}
{\sf KL}\left( \mathbb{P}_{0}\middle\| \mathbb{P}_{1} \right)  =  \sum_{t = 1}^T\; \sum_{\{c_1, \cdots, c_{t-1}\} \in \{0,1, \cdots, N\}^{t-1}} \mathbb{P}_{0}(\mb{c^t}) {\sf KL}\left( \mathbb{P}_{0}(c_t)\middle\| \mathbb{P}_{1}(c_t) \middle | c_1, \cdots, c_{t-1}\right),
%\end{align}
\end{equation}
where, 
\begin{equation*}{\sf KL}\left( \mathbb{P}_{0}(c_t)\middle\| \mathbb{P}_{1}(c_t) \middle | c_1, \cdots, c_{t-1}\right) = \sum_{c_t} \mathbb{P}_{0}\left\{{c}_t \middle| c_1, \cdots, c_{t-1} \right\}\log{\left(\frac{\mathbb{P}_{0}\left\{{c}_t \middle| c_1, \cdots, c_{t-1} \right\}}{\mathbb{P}_{1}\left\{{c}_t \middle| c_1, \cdots, c_{t-1} \right\}}\right)}.\end{equation*}
Note that assortment  offered by $\ep{A}_{\sf MNL}$ at time $t$, $S_t$ is completely determined by the reward history $c_1, \ldots, c_{t-1}$ and conditioned on $S_t$, the reward at time $t$, $c_t$ is independent of the reward history $c_1, \cdots, c_{t-1}.$ Therefore, it follows that,
\begin{equation*}
\begin{aligned}
\mathbb{P}_{0}\left({c}_t \middle| c_1, \cdots, c_{t-1} \right)   = \ep{P}^{S_t}_0(c_t)\;\;\text{and}\;\;\mathbb{P}_{1}\left({c}_t \middle| c_1, \cdots, c_{t-1} \right)   &= \ep{P}^{S_t}_1(c_t),
\end{aligned}
\end{equation*}
and hence, we have, \begin{equation}\label{eq:indicator_KL}
{\sf KL}\left( \mathbb{P}_{0}(c_t)\middle\| \mathbb{P}_{1}(c_t) \middle | c_1, \cdots, c_{t-1}\right) =  {\sf KL}\left( \ep{P}^{S_t}_{0}(c_t) \middle\| \ep{P}^{S_t}_{1}(c_t) \right),
\end{equation}
where $\ep{P}^{S_t}_{0}$ and $ \ep{P}^{S_t}_{1}$ are defined as in Lemma \ref{KL_div_categorical}. Therefore from \eqref{KL_CI}, \eqref{eq:indicator_KL} and Lemma \ref{KL_div_categorical}, we have,
\begin{equation*}
\begin{aligned}
{\sf KL}\left( \mathbb{P}_{0}\middle\| \mathbb{P}_{1} \right)  &= \sum_{t = 1}^T  {\sf KL}\left( \ep{P}^{S_t}_{0} \middle\| \ep{P}^{S_t}_{1} \right)\leq 4T\epsilon^2.
\end{aligned}
\end{equation*}
\hfill $\halmos$
\end{proof}

\medskip \noindent {{\bf Proof of {Theorem~\ref{lower_bound}:}}} Fix a guessing algorithm $\ep{A}_{\sf MNL}$, which at time $t$ sees the consumer choice based on the offer assortment $S_t$. Let $\mathbb{P}_{0}$ and $\mathbb{P}_{1}$ denote the distributions for the view of the algorithm from time $1$ to $T$, when $v_1 = \frac{1}{2}$ and $v_1= \frac{1}{2}+\epsilon$ respectively. Let $T_2$ be the number of times $\ep{A}$ offers product $2$ and let $\mathbb{E}_{\mathbb{P}_{0}}(T_2)$ and $\mathbb{E}_{\mathbb{P}_{1}}(T_2)$ be the expected number of times product $2$ is offered by $\ep{A}$. 
\begin{equation}\label{pinsker_mnl}
\begin{aligned}
\left|\mathbb{E}_{\mathbb{P}_{0}}(T_2) - \mathbb{E}_{\mathbb{P}_{1}}(T_2)\right| &\leq  \left|\sum_{t=1}^T\ep{P}_0(2 \in S_t) - \mathbb{P}_{1}(2 \in S_t)\right|,\\
&{\leq} \sum_{t=1}^T \left|\mathbb{P}_{0}(2 \in S_t) - \mathbb{P}_{1}(2 \in S_t)\right|,\\
&{\leq} \sum_{t=1}^T \frac{1}{2}\sqrt{2\log{2} \cdot {\sf KL}\left( \mathbb{P}_{0}\middle\| \mathbb{P}_{1} \right)} =  \frac{T}{2}\sqrt{2\log{2} \cdot {\sf KL}\left( \mathbb{P}_{0}\middle\| \mathbb{P}_{1} \right)},
\end{aligned}
\end{equation}
where ${\sf KL}\left( \mathbb{P}_{0}\middle\| \mathbb{P}_{1} \right)$ as the Kullback-Leibler divergence between the distributions $\mathbb{P}_{0}$ and $\mathbb{P}_{1}$ as defined in \eqref{KL_MNL} and the last inequality follows from Pinsker's inequality. From Lemma \ref{lem:KL_MNL}, we have that, $${\sf KL}\left( \mathbb{P}_{0}\middle\| \mathbb{P}_{1} \right) \leq 4T\epsilon^2.$$ Substituting the value of $\epsilon$, we obtain ${\sf KL}\left( \mathbb{P}_{0}\middle\| \mathbb{P}_{1} \right) \leq \frac{1}{2}$ and from \eqref{pinsker_mnl}, we have
\begin{equation}\label{close_expected_value}
\left|\mathbb{E}_{\mathbb{P}_{0}}(T_2) - \mathbb{E}_{\mathbb{P}_{1}}(T_2)\right| \leq \frac{T}{4}.
\end{equation}
Since $v_1$ can be $\frac{1}{2}$ and $\frac{1}{2} + \epsilon$ with equal probability, we have
\begin{equation}\label{lb_conditional_reg_decompose}
\begin{aligned}
Reg_{\ep{A}_{\sf MNL}}(T,\mb{v}) = \frac{1}{2}Reg_{\ep{A}_{\sf MNL}}\left(T,\mb{v},\Big| v_1=\frac{1}{2}\right) + \frac{1}{2}Reg_{\ep{A}_{\sf MNL}}\left(T,\mb{v},\Big| v_1=\frac{1}{2}+\epsilon\right).
\end{aligned}
\end{equation}
From \eqref{p_0} we have that, in every time step we don't offer product $\{2\}$, we incur a regret of at least $\frac{\epsilon}{20}$ and hence it follows that, 
\begin{equation*}
\begin{aligned}
 Reg_{\ep{A}_{\sf MNL}}\left(T,\mb{v},\Big| v_1=\frac{1}{2}\right) \geq \frac{\epsilon}{20} (T-\mathbb{E}_{\mathbb{P}_{0}}(T_2)),
\end{aligned}
\end{equation*}
and similarly from \eqref{p_1} we have that, in every time step we offer product $\{2\}$, we incur a regret of at least $\frac{\epsilon}{20}$ and hence it follows that, 
\begin{equation*}
\begin{aligned}
 Reg_{\ep{A}_{\sf MNL}}\left(T,\mb{v},\Big| v_1=\frac{1}{2}+\epsilon\right) \geq \frac{\epsilon}{20} \mathbb{E}_{\mathbb{P}_{1}}(T_2).
\end{aligned}
\end{equation*}
Therefore, from \eqref{lb_conditional_reg_decompose} and \eqref{close_expected_value}, it follows that,
\begin{equation*}
\begin{aligned}
Reg_{\ep{A}_{\sf MNL}}(T,\mb{v}) &\geq \frac{\epsilon}{20} \left[T- (\mathbb{E}_{\mathbb{P}_{1}}(T_2)-\mathbb{E}_{\mathbb{P}_{0}}(T_2))\right]\geq \frac{3T\epsilon}{80}. 
\end{aligned}
\end{equation*}
\hfill $\halmos$

 %The following result establishes for any guessing algorithm, there are at least $\frac{N}{3}$ positions that a biased coin could be and will not be played by the guessing algorithm with probability at least $\frac{1}{2}$ . Specifically, 

%We will follow the proof of Lemma \ref{lower_bound_MAB} to show that any algorithm must incur a regret of $\sqrt{T}$ on $\hat{I}_{\sf MNL}.$ In particular, 

\end{appendices}

%%%%%%%%%%%%%%%%
\end{document}